\gdef\@copyrightpermission{
  \begin{minipage}{0.2\columnwidth}
   \href{https://creativecommons.org/licenses/by/4.0/}{\includegraphics[width=0.90\textwidth]{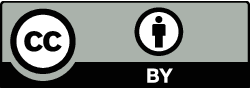}}
  \end{minipage}\hfill
  \begin{minipage}{0.8\columnwidth}
   \href{https://creativecommons.org/licenses/by/4.0/}{This work is licensed under a Creative Commons Attribution International 4.0 License.}
  \end{minipage}
  \vspace{5pt}
}
\title[Observation to State Diffusion]{On Diffusion Models for Multi-Agent Partial Observability: Shared Attractors, Error Bounds, and Composite Flow}
\author{Tonghan Wang\textsuperscript{*}}
\affiliation{
  \institution{Harvard University}
  \city{Cambridge, MA 02138}
  \country{USA}
}
\email{twang1@g.harvard.edu}
\author{Heng Dong\textsuperscript{*}}
\affiliation{
  \institution{Tsinghua University}
  \city{Beijing}
  \country{China}
}
\email{drdhxi@gmail.com}
\author{Yanchen Jiang}
\affiliation{
  \institution{Harvard University}
  \city{Cambridge, MA 02138}
  \country{USA}
  }
\email{yanchen_jiang@g.harvard.edu }
\author{David C. Parkes}
\affiliation{
  \institution{Harvard University}
    \city{Cambridge, MA 02138}
  \country{USA}
  }
\email{parkes@eecs.harvard.edu}
\author{Milind Tambe}
\affiliation{
  \institution{Harvard University}
    \city{Cambridge, MA 02138}
  \country{USA}
  }
\email{milind_tambe@harvard.edu }
\begin{abstract}
Multiagent systems grapple with partial observability (PO), and the decentralized POMDP (Dec-POMDP) model highlights the fundamental nature of this challenge. Whereas recent approaches to addressing PO have appealed to deep learning models, providing a rigorous understanding of how these models and their approximation errors affect agents' handling of PO and their interactions remain a challenge. In addressing this challenge, we investigate reconstructing global states from local action-observation histories in Dec-POMDPs using diffusion models. We first find that diffusion models conditioned on local history represent possible states as stable fixed points. In collectively observable (CO) Dec-POMDPs, individual diffusion models conditioned on agents' local histories share a unique fixed point corresponding to the global state, while in non-CO settings, shared fixed points yield a distribution of possible states given joint history. We further find that, with deep learning approximation errors, fixed points can deviate from true states and the deviation is negatively correlated to the Jacobian rank. Inspired by this low-rank property, we bound a deviation by constructing a surrogate linear regression model that approximates the local behavior of a diffusion model. With this bound, we propose a \emph{composite diffusion process} iterating over agents with theoretical convergence guarantees to the true state.
\end{abstract}
\keywords{Diffusion Model; Multi-Agent; Partial Observability; State Reconstruction from Observation; Dec-POMDP; Fixed Point}
\newcommand{\BibTeX}{\rm B\kern-.05em{\sc i\kern-.025em b}\kern-.08em\TeX}
\crefname{section}{Sec.}{Secs.}
\Crefname{section}{Sec.}{Secs.}
\crefname{figure}{Fig.}{Figs.}
\Crefname{figure}{Fig.}{Figs.}
\crefname{appendix}{Appx.}{Apps.}
\Crefname{appendix}{Appx.}{Apps.}
\def\thmheadbrackets#1#2#3{%
  \thmname{#1}\thmnumber{\@ifnotempty{#1}{ }\@upn{#2}}%
  \thmnote{ {\the\thm@notefont[#3]}}}
\newtheoremstyle{brakets}% Name
  {}% space above
  {}% space below
  {\itshape}% body font
  {}% indent
  {\bfseries}% head font
  {.}% punctuation after head
  { }% space after head (has to be space or dimension!)
  {\thmheadbrackets{#1}{#2}{#3}}% head spec
\newtheoremstyle{definitionbrakets}% Name
  {}                       % Space above
  {}                       % Space below
  {\normalfont}               % Body font (upright)
  {}                          % Indent amount
  {\bfseries}                 % Head font
  {.}                         % Punctuation after head
  { }                         % Space after head
  {\thmheadbrackets{#1}{#2}{#3}}% Head spec
\theoremstyle{brakets}
\theoremstyle{definitionbrakets}
\newtheorem{finding}{Finding}
\newtheorem{evidence}{Evidence}[finding]
\newtheorem{definition}{Definition}
\definecolor{darkgreen}{rgb}{0.0, 0.5, 0.0}
\definecolor{darkblue}{rgb}{0.0, 0.5, 1.0}
\newcommand{\kibitz}[2]{\ifnum\Comments=1{\color{#1}{#2}}\fi}
\newcommand{\kibitzAdd}[2]{\ifnum\CommentsAdd=1{\color{#1}{#2}}\fi}
\definecolor{english}{rgb}{0.0, 0.5, 0.0}
\definecolor{tw}{rgb}{0.0, 0.0, 0.5}
\def\eqref#1{equation~\ref{#1}}
\def\1{\bm{1}}
\def\va{{\bm{a}}}
\DeclareMathAlphabet{\mathsfit}{\encodingdefault}{\sfdefault}{m}{sl}
\SetMathAlphabet{\mathsfit}{bold}{\encodingdefault}{\sfdefault}{bx}{n}
\def\gF{{\mathcal{F}}}
\def\gS{{\mathcal{S}}}
\def\sN{{\mathbb{N}}}
\def\sP{{\mathbb{P}}}
\def\sR{{\mathbb{R}}}
\newcommand{\E}{\mathbb{E}}
\newcommand{\lr}{\alpha}
\DeclareMathOperator*{\argmax}{arg\,max}
\DeclareMathOperator*{\argmin}{arg\,min}
\DeclareMathOperator{\Tr}{Tr}
\newcommand{\shorte}{\textup{\texttt{=}}}
\newcommand{\ie}{\textit{i}.\textit{e}.}
\newcommand{\eg}{\textit{e}.\textit{g}.}
\NewDocumentCommand{\dm}{o o}{\ensuremath{
f_\theta
\IfValueT{#1}{\IfBlankTF{#1}{}{({#1}},}
\IfValueT{#2}{\IfBlankTF{#2}{}{{#2}})}
}}
\NewDocumentCommand{\optdm}{o o}{\ensuremath{
f^\star
\IfValueT{#1}{\IfBlankTF{#1}{}{({#1}},}
\IfValueT{#2}{\IfBlankTF{#2}{}{{#2}})}
}}
\NewDocumentCommand{\estpost}{o o}{\ensuremath{
p_\theta
\IfValueT{#1}{\IfBlankTF{#1}{}{({#1}}|}
\IfValueT{#2}{\IfBlankTF{#2}{}{{#2}})}
}}
\NewDocumentCommand{\localdata}{o o o o}{\ensuremath{
\mathcal{M}
\IfValueT{#1}{\IfBlankTF{#1}{}{_{#1}}}
\IfValueT{#2}{\IfBlankTF{#2}{}{^{#2}}}
\IfValueT{#3}{\IfBlankTF{#3}{}{(#3}}
\IfValueT{#4}{\IfBlankTF{#4}{}{,#4}}
\IfValueT{#3}{\IfBlankTF{#3}{}{)}}
}}
\NewDocumentCommand{\flow}{o o o o}{\ensuremath{
\phi
\IfValueT{#1}{\IfBlankTF{#1}{}{_{#1}}}
\IfValueT{#2}{\IfBlankTF{#2}{}{^{#2}}}
\IfValueT{#3}{\IfBlankTF{#3}{}{(#3}}
\IfValueT{#4}{\IfBlankTF{#4}{}{,#4}}
\IfValueT{#3}{\IfBlankTF{#3}{}{)}}
}}
\NewDocumentCommand{\fpset}{o o o o}{\ensuremath{
\mathcal{F}
\IfValueT{#1}{\IfBlankTF{#1}{}{_{#1}}}
\IfValueT{#2}{\IfBlankTF{#2}{}{^{#2}}}
\IfValueT{#3}{\IfBlankTF{#3}{}{(#3}}
\IfValueT{#4}{\IfBlankTF{#4}{}{,#4}}
\IfValueT{#3}{\IfBlankTF{#3}{}{)}}
}}
\NewDocumentCommand{\pusheq}{o o o o}{\ensuremath{
[\phi
\IfValueT{#1}{\IfBlankTF{#1}{}{_{#1}}}
\IfValueT{#2}{\IfBlankTF{#2}{}{^{#2}}}
\IfValueT{#3}{\IfBlankTF{#3}{}{(#3}}
\IfValueT{#4}{\IfBlankTF{#4}{}{,#4}}
\IfValueT{#3}{\IfBlankTF{#3}{}{)}}
]_*
}}
\NewDocumentCommand{\fpd}{o o o o}{\ensuremath{
D
\IfValueT{#1}{\IfBlankTF{#1}{}{_{#1}}}
\IfValueT{#2}{\IfBlankTF{#2}{}{^{#2}}}
\IfValueT{#3}{\IfBlankTF{#3}{}{(#3}}
\IfValueT{#4}{\IfBlankTF{#4}{}{,#4}}
\IfValueT{#3}{\IfBlankTF{#3}{}{)}}
}}
\NewDocumentCommand{\eigenvalue}{o o o o}{\ensuremath{
\lambda
\IfValueT{#1}{\IfBlankTF{#1}{}{_{#1}}}
\IfValueT{#2}{\IfBlankTF{#2}{}{^{#2}}}
\IfValueT{#3}{\IfBlankTF{#3}{}{(#3}}
\IfValueT{#4}{\IfBlankTF{#4}{}{|#4}}
\IfValueT{#3}{\IfBlankTF{#3}{}{)}}
}}
\NewDocumentCommand{\jaco}{o o o o}{\ensuremath{
J
\IfValueT{#1}{\IfBlankTF{#1}{}{_{#1}}}
\IfValueT{#2}{\IfBlankTF{#2}{}{^{#2}}}
\IfValueT{#3}{\IfBlankTF{#3}{}{(#3}}
\IfValueT{#4}{\IfBlankTF{#4}{}{|#4}}
\IfValueT{#3}{\IfBlankTF{#3}{}{)}}
}}
\NewDocumentCommand{\eigenvector}{o o o o}{\ensuremath{
e
\IfValueT{#1}{\IfBlankTF{#1}{}{_{#1}}}
\IfValueT{#2}{\IfBlankTF{#2}{}{^{#2}}}
\IfValueT{#3}{\IfBlankTF{#3}{}{(#3}}
\IfValueT{#4}{\IfBlankTF{#4}{}{|#4}}
\IfValueT{#3}{\IfBlankTF{#3}{}{)}}
}}
\NewDocumentCommand{\evas}{o o o o}{\ensuremath{
\Lambda
\IfValueT{#1}{\IfBlankTF{#1}{}{_{#1}}}
\IfValueT{#2}{\IfBlankTF{#2}{}{^{#2}}}
\IfValueT{#3}{\IfBlankTF{#3}{}{(#3}}
\IfValueT{#4}{\IfBlankTF{#4}{}{|#4}}
\IfValueT{#3}{\IfBlankTF{#3}{}{)}}
}}
\NewDocumentCommand{\eves}{o o o o}{\ensuremath{
U
\IfValueT{#1}{\IfBlankTF{#1}{}{_{#1}}}
\IfValueT{#2}{\IfBlankTF{#2}{}{^{#2}}}
\IfValueT{#3}{\IfBlankTF{#3}{}{(#3}}
\IfValueT{#4}{\IfBlankTF{#4}{}{|#4}}
\IfValueT{#3}{\IfBlankTF{#3}{}{)}}
}}
\newcommand{\ite}[2]{\ensuremath{#1^{(#2)}}}
\newcommand{\num}[2]{\ensuremath{#1^{(#2)}}}
\newcommand{\spname}[1]{\ensuremath{\mathtt{#1}}}
\newcolumntype{L}{>{$}l<{$}}
\newcolumntype{C}{>{$}c<{$}}
\newcolumntype{R}{>{$}r<{$}}
\begin{document}

%%% The following commands remove the headers in your paper. For final 
%%% papers, these will be inserted during the pagination process.

\pagestyle{fancy}
\fancyhead{}

%%% The next command prints the information defined in the preamble.

\maketitle 

%%%%%%%%%%%%%%%%%%%%%%%%%%%%%%%%%%%%%%%%%%%%%%%%%%%%%%%%%%%%%%%%%%%%%%%%

\section{Introduction}

% (I want to add this sentence: Different from diffusion models in generative AI where the semantics  here we have care about accuracy.)
% Over decades of research, addressing PO in these various contexts has driven the formation and advance of corresponding sub-fields in the 
%Arising when
% which aids in proving the convergence of a novel algorithm toward the true state
% As a general approach to solving PO, explicitly inferring global states from local action-observation history has attracted sustained research attention.

Given that the ability of individual agents to perceive complete information about the global state is limited~\citep{omidshafiei2017deep, srinivasan2018actor, saldi2019approximate, amato2013decentralized}, partial observability (PO) fundamentally characterizes the dynamics and interactions in multi-agent systems. Decentralized POMDPs (Dec-POMDPs)~\cite{oliehoek2016concise} highlight this information limitation, where many complex challenges are rooted in this issue, such as communication~\citep{foerster2016learning, wang2019learning}, decentralized control~\citep{de2006decentralized, yang2008multi}, cooperation\citep{wang2020roma, wen2022multi}, and coordination~\citep{xu2020learning, zhang2020bi} under incomplete information.

Decades of research addressing PO in the context of these challenges~\citep{kaelbling1998planning, loch1998using, goldman2003optimizing,spaan2005perseus} have fostered the development of specialized sub-fields~\citep{hausknecht2015deep, wang2020r, foerster2016learning, han2019variational, peng2017multiagent, zhang2021model, kao2022common} within the multi-agent system, thereby shaping its current landscape. As a general way of handling the uncertainty due to PO, the concept of belief states is introduced to represent an agent's probabilistic state estimation based on local information\cite{varakantham2006winning, muglich2022generalized, macdermed2013point}. While these methods effectively encapsulate uncertainty in some environments, traditionally, they may suffer from scalability issues due to the exponential growth in complexity of belief updates. Recent works use powerful deep learning models to address scalability, e.g., by directly predicting unseen state features~\cite{muglich2022generalized,xu2021side,jiang2018graph,xu2024beyond}. However, a rigorous understanding of how deep learning models and their approximation errors can impact agents' handling of PO and their interactions remains elusive -- a gap we address in this paper using diffusion models.

% While these methods effectively encapsulate uncertainty in some environments, they suffer from scalability issues due to the exponential growth in complexity of belief updates. Additionally, belief state updates require detailed knowledge of the environment's dynamics and the agent's observation models, limiting their applicability in more complex tasks. 

%While these methods effectively encapsulate uncertainty in some environments, they may suffer from scalability issues due to the exponential growth in complexity of belief updates. Additionally, previous work has failed to capitalize on the opportunity offered by newer deep learning approaches such as diffusion models to handle partial observability. 

%While these methods effectively encapsulate uncertainty in some environments, traditionally, they may suffer from scalability issues due to the exponential growth in complexity of belief updates. Recent works use powerful deep learning models to address scalability, e.g., by directly predicting unseen state features. However, a comprehensive understanding of how (and which) deep learning models can represent stochasticity and manage interactions among agents' beliefs remains elusive.

% similarly to deep learning methods that learn low-dimensional state representation,

% Additionally, previous work has failed to capitalize on the opportunity offered by newer deep learning approaches such as diffusion models to handle partial observability. 

Diffusion models~\citep{ho2020denoising, kadkhodaie2023generalization,liu2023flow,song2021score,song2019generative} offer a novel promising avenue towards addressing uncertainty in Dec-POMDPs due to PO, specifically by learning the mapping from local histories to global states.  The primary challenge in learning such mappings lies in its inherent stochasticity and problem scale. A single history may correspond to multiple possible states, resulting in a one-to-many, stochastic mapping. Diffusion models, with their ability to model stochastic processes through the iterative denoising inductive bias, offer new opportunities to address such stochasticity. Additionally, the spaces of histories and states are often continuous and high-dimensional, also making diffusion models well-suited due to their proven powerful representational capacity in such expansive spaces~\cite{lou2024discrete,esser2024scaling,wu2024animating,guo2024animatediff,ho2022video,ceylan2023pix2video,rombach2022high,ho2022imagen}.

In this paper, we conduct an in-depth investigation into the use of diffusion models to manage PO in Dec-POMDPs, offering theoretical understandings supported by empirical evidences to solve the new challenges in this effort. To meet the requirements of decentralized control, our study comprises two steps. First, each agent infers states using a diffusion model conditioned on its local history. In this phase, we address critical problems, including how a diffusion model represents multiple possible states given local history, how accurate this representation is when the denoiser network is over- and under-parameterized, and methods to quantify the agent's uncertainty regarding the state. For the second step, should uncertainty persist, we study how to resolve it and optimally determine the true state by merging diffusion processes of all agents.

Specifically, our contributions are as follows. The first contribution is about how diffusion models represent states. In scenarios with minimal deep learning approximation errors, for each state $s$ consistent with local history $\tau$, the diffusion model conditioned on $\tau$ learns to create a \emph{stable fixed point} at the location $s$. Then, the repeated application of the \textcolor{black}{denoiser network} induces a discrete-time flow that transports noisy inputs to these attractors.

When agent $i$'s diffusion model conditioned on $\tau_i$ has a single fixed point, it can confidently infer the global state as this unique fixed point.
Our second contribution relates to complex scenarios with multiple fixed points. We establish that in collectively observable Dec-POMDPs, there exists a unique fixed point shared by all agents, corresponding precisely to the true global state.
%In the complex scenario with multiple fixed points, we establish that in collectively observable Dec-POMDPs, there exists a unique fixed point shared by all agents, corresponding precisely to the true global state. 
Moreover, in non-collectively observable Dec-POMDPs, where aggregating local information cannot fully reveal the true state, shared fixed points represent all possible states given the joint history, and diffusion models can reproduce their true posterior probabilities. 
%\twadd{These findings constitute our second contribution.}
% These findings effectively embody a communication protocol where agents share their fixed points to determine the true state.

We then consider the influence of deep learning approximation errors typical in learning with large Dec-POMDPs. We find that the major impact is that agents' fixed points might deviate from the true state. Our third contribution is to investigate the underlying causes of these deviations and propose a method to bound their norm. Our theoretical analyses and empirical evidence suggest that deviations are inversely correlated with the Jacobian rank of the denoiser network at fixed points. This low-rank behavior enables us to approximate local behavior of diffusion models by a surrogate linear regression model, whose solution gives an upper bound to deviations. Empirical evidence supports the tightness of this bound.

% By constraining the radius of proximity on which the surrogate model is defined, the residual error of its optimal solution provides  $D$. 

The deviation of fixed points from states implies that it becomes impractical to determine the global state by intersecting the fixed points, as deviations vary among agents. To solve this problem, our fourth contribution is to propose the concept of \textit{composite diffusion} which denoises the input iteratively using \textcolor{black}{denoiser networks} conditioned on each agent's history. Theoretically, we prove that composite diffusion, regardless of the agents' order, converges to the true state with an error no larger than the deviation upper bound. We support the analyses by showing that composite diffusion leads to accurate global state estimation in the complex SMACv2 \citep{ellis2023smacv2} benchmark across a variety of highly stochastic testing cases. 

By providing a rigorous understanding of impacts of diffusion models on PO in Dec-POMDPs, this paper opens the door to newer algorithms for various multi-agent problems such as policy learning, coordination, and communication in complex environments.

\textbf{Related Works}.\ \ Diffusion models~\citep{ho2020denoising} have been extensively explored in single-agent settings, significantly advancing areas such as planning that require approximators of MDP dynamics~\citep{brehmer2024edgi, janner2022planning, liang2023adaptdiffuser}, data synthesis for reinforcement learning~\citep{chen2023genaug, lu2024synthetic}, and policy training on offline datasets~\citep{ada2024diffusion, chi2023diffusion, hansen2023idql}. In contrast, how to synergize diffusion models with multi-agent systems remains largely underexplored. \citet{xu2024beyond} investigate diffusion models within Dec-POMDPs but do not focus on the inherent stochasticity in mapping local histories to states, nor do they resolve how to address the disagreements among agents regarding the true state. Similarly, these critical problems remain untouched in other deep learning approaches that attempt to learn low-dimensional state representations~\citep{xu2021side, jiang2018graph} using variational autoencoder~\citep{doersch2016vae} or contrastive learning~\citep{chen2020simple}.

Orthogonal to our focus on studying the explicit reconstruction of states from local history, many sub-fields in multi-agent systems have developed innovative approaches to mitigate the effects of PO, such as modeling other agents~\citep{raileanu2018modeling, zhang2021multi}, intention inference~\cite{kim2020communication, han2018learning, qu2020intention}, and communication~\citep{sukhbaatar2016learning, jiang2018learning,kang2022non} for better decision-making. In multi-agent reinforcement learning, it is popular to employ RNNs~\citep{hausknecht2015deep,wang2020r,rashid2018qmix} or Transformers~\cite{wen2022multi,dong2022low} to process sequential history data, while incorporating global information during centralized training by techniques like value function decomposition~\citep{rashid2018qmix, sunehag2018value, wang2020rode} and global gradient approximation~\cite{yu2022surprising, wang2020dop}.

\section{Diffusion Models for Dec-POMDPs}

A Dec-POMDP~\cite{bernstein2002complexity,oliehoek2016concise} is a tuple $G\shorte\langle \mathcal C, \mathcal S, \mathcal A, P, R, \Omega, O, n, \gamma\rangle$, where $\mathcal A$ is the finite action set, $\mathcal C$ is the finite set of $n$ agents, $\gamma\in[0, 1)$ is the discount factor, and $s\in \mathcal S\textcolor{black}{\subseteq}\mathbb{R}^{|s|}$ is the true state. $|s|$ is the dimension of $s$. We consider partially observable settings and agent $i$ only has access to an observation $o_i\in \Omega$ drawn according to the observation function $O(s, i)$. Each agent has a history $\tau_i\in \mathcal{T}\equiv(\Omega\times \mathcal A)^*\times\Omega$. At each timestep, each agent $i$ selects an action $a_i\in \mathcal A$, forming a joint action $\va$ $\in \mathcal A^n$, leading to the next state $s'$ according to the transition function $P(s'|s, \va)$ and a shared reward $R(s,\va)$ for each agent. The joint history of all agents is denoted by $\tau_{1:n}$, or $\bm\tau$ when the agent order is irrelevant.
% The joint policy $\bm{\pi}$ induces a joint action-value function: $Q_{tot}^{\bm{\pi}}(s$,$ \va)$=$\mathbb{E}_{s_{0:\infty},\va_{0:\infty}}[\sum_{t=0}^\infty \gamma^{t}r_t|$ $s_0\shorte s,\va_0\shorte \va,\bm{\pi}]$.

When referring to local history without specifying that it pertains to a particular agent, we employ the notation $\tau$. The mapping from $\tau$ to the corresponding global state $s$ is a one-to-many mapping due to partial observability. We formalize this mapping as follows.
\begin{definition}[History-State Mapping]
$S_G:\mathcal{T}\rightarrow 2^\mathcal{S}$ maps $\tau_i$ to the set of all possible states when agent $i$ observes $\tau_i$: $S_G(\tau_i) = \{s\mid p(s|\tau_i)>0\}$, where $p(s|\tau_i)$ is the posterior state distribution. We say that the states in $S_G(\tau_i)$ are \emph{consistent} with $\tau_i$.
\end{definition}

We use diffusion models to learn the mapping $S_G$ and reproduce the posterior $p(s|\tau_i)$.

% , and to provide insights into how deep learning models and errors interact with the inherent stochasticity.

 % and $s$ is a possible state when agent $i$ observes $\tau_i$
\noindent \textbf{Diffusion models and scores}.\ \  Given a training dataset $\mathcal{D}=\{(\num{\tau_i}{k}, \num{s}{k})\}_{k=1}^K$, where $\num{s}{k}\in S_G(\num{\tau_i}{k})$ is a state consistent with local history $\num{\tau_i}{k}$, \textcolor{black}{a score-based model} $\dm:\mathcal{T}\times \mathbb{R}^{|s|} \rightarrow \mathbb{R}^{|s|}$ (also called a \emph{denoiser network}) is trained to minimize 
\begin{align}
    MSE(f_\theta,\sigma) = \mathbb{E}_{\tau_i,s,y\sim s+z}\left[\|s-\dm[\tau_i][y]\|^2\right],
\end{align}
Here $y=s+z$, where $z\sim\mathcal{N}(0,\sigma^2I)$. We refer to $s$ as clean states and $y$ as noisy states. During training, noisy states are generated by injecting a randomly sampled noise with a noise level $\sigma>0$ to $s$. The training involves the histories of all agents and the corresponding states. Our analyses in this paper are applicable to most neural network architectures, while in our experiments, we employ a simple fully-connected network with $\tau_i$ and $y$ as inputs and denoised state as output (details in Appendix~\ref{appx:emp}). This network is shared among all the agents.
\begin{figure*}
    \vspace{-2em}
    \centering
    \includegraphics[width=0.9\linewidth]{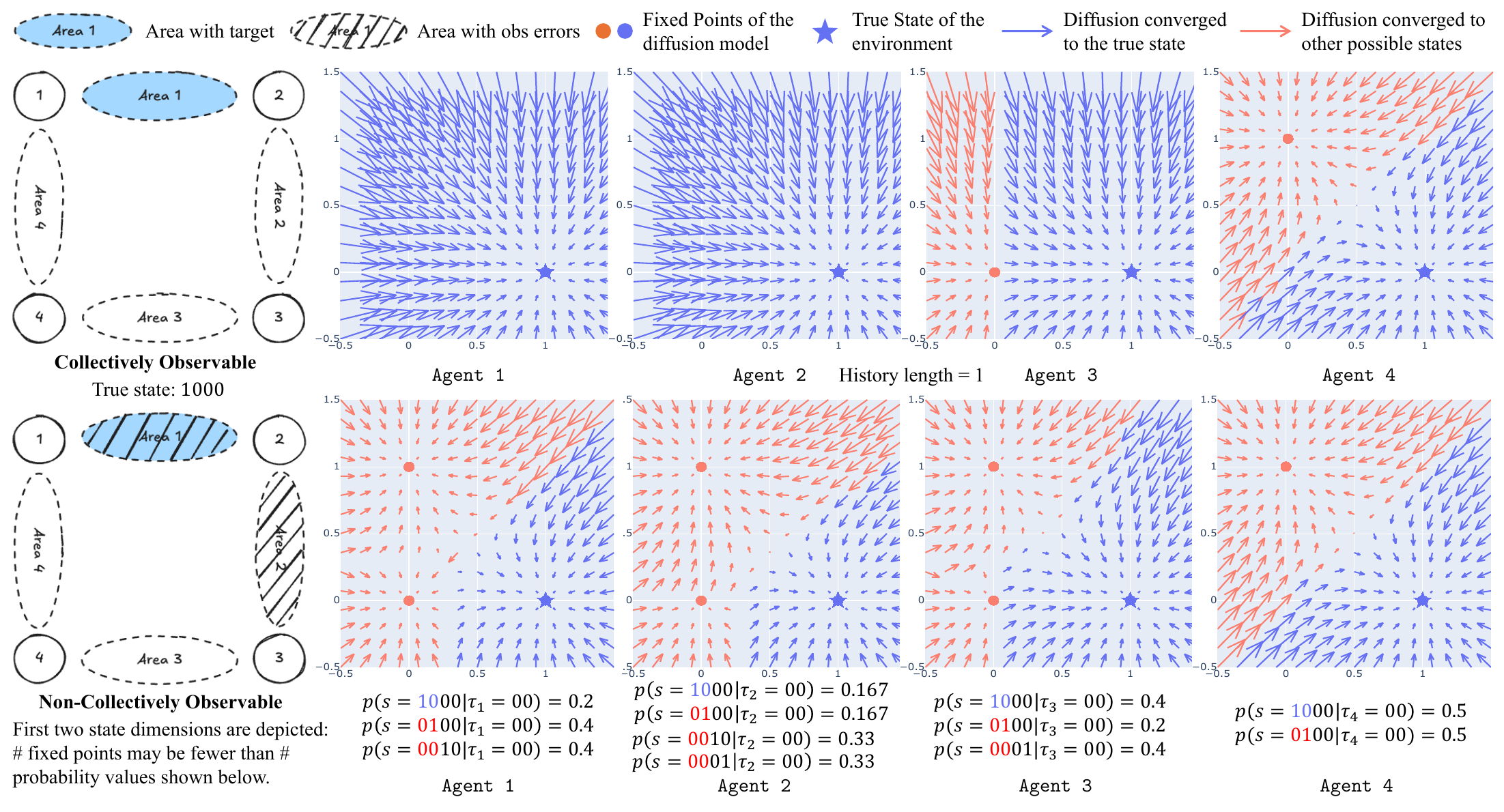}
    \vspace{-1.5em}
    \caption{\emph{With minimal deep learning approximation errors, a diffusion process represents states consistent with local history $\tau_i$ (length=1 in this figure) as their attractors, provably equivalent to stable fixed points of the \textcolor{black}{denoiser network} $\dm[\tau_i][\cdot]$.} Arrows point to \textcolor{black}{denoiser network} outputs $y'=\dm[\tau_i][y]$ from input noisy states $y$. The first two dimensions of $y'$ and $y$ are shown. Top row: In collectively observable (CO) Dec-POMDPs, a unique fixed point is shared by all agents, which is also the true state. Bottom row: In non-CO Dec-POMDPs, shared fixed points are all states consistent with joint history $\bm\tau$, and diffusion models reproduce the posterior state distribution $p(s|\tau_i)$ under appropriate distributions of input noisy states. }
    \label{fig:R0}
    \vspace{-1em}
\end{figure*}

As shown in~\cref{appx:math-miyasawa}, adapting the derivation from \citet{robbins1992empirical, miyasawa1961empirical, kadkhodaie2023generalization}, the optimal \textcolor{black}{denoiser network} yields the expected state given the noisy input $y$:
\begin{align}
    \optdm[\tau_i][y]=\mathbb{E}_s\left[s\big| y,\tau_i\right],\label{equ:f*}
\end{align}
which is related to the conditional scores by
    \begin{align}
     \nabla \log p_\sigma(y|\tau_i) =\frac{1}{\sigma^2}\left(\mathbb{E}_s\left[s\big| y,\tau_i\right]-y\right).\label{equ:f*_score}
\end{align}
% The approximation of this density family $\{p_\sigma(y|\tau_i)\ |\ \sigma>0\}$ defines a model $\estpost[s][\tau_i]$ of the posterior distribution of clean states (\cite{kadkhodaie2023generalization}, Eq.~\ref{equ:approx_ps}), analogous to the temporal evolution of a diffusion process.

\noindent \textbf{Posterior state distribution and discrete-time flow}.\ \ We are concerned with the estimated states and their distribution given history $\tau_i$. We diffuse states consistent with $\tau_i$ to noise by a diffusion process characterized by the "variance-exploding" stochastic differential equation (SDE)~\cite{song2021score}:
% after applying the diffusion model repeatedly. We first define the flow induced by the diffusion model.
\begin{align}
dy = g(t)d\mathbf{w},\ \ g(t)=\sqrt{\frac{d[\sigma^2(t)]}{dt}},\label{equ:forward_sde}
\end{align}
where $\mathbf{w}$ is the standard Wiener process, \ie, Brownian motion. Let $\sigma(t)=Ae^{t}$, where $A$ is a constant, $t\in [0,T]$, and $T$ is the maximum timestep. According to Eq.~\ref{equ:forward_sde}, we have $g^2(t)=2\sigma^2(t)$. We generate states from noise by (reverse time) probability flow ordinary differential equation (ODE) conditioned on $\tau_i$:
\begin{align}
    dy = -\sigma^2(t)\nabla_y \log p_t(y|\tau_i) dt,\label{equ:ode}
\end{align}
which has the same marginal probability densities $\{p_t(y|\tau_i)\}_{t=0}^T$ as the time-reversal of the diffusing SDE in Eq.~\ref{equ:forward_sde}~\cite{song2021score}. Here $dt$ represents an infinitesimal negative time step. We approximate the solution to Eq.~\ref{equ:ode} by iteration
\begin{align}
    y(t-1)=y(t)+\sigma^2(t)\nabla_{y(t)} \log p_t(y(t)|\tau_i)=\optdm[\tau_i][y(t)].\label{equ:opt_ite}
\end{align}
The second equality follows from Eq.~\ref{equ:f*},~\ref{equ:f*_score}.
% because $\sigma^2(t)\nabla_{y(t)} \log p_t(y(t)|\tau_i)=\optdm[\tau_i][y(t)]-y(t)$ (

Rigorously, states are generated by applying a numerical solver to Eq.~\ref{equ:ode}, and Eq.~\ref{equ:opt_ite} brings discretization errors. To justify the use of this discretization, we show that we can still find the support of $p(s|\tau_i)$ (Thm.~\ref{thm:converge}) and that errors of $p(s|\tau_i)$ can be bounded (Thm.~\ref{thm:R0NonCO}). 

In practice, we approximate the iteration in Eq.~\ref{equ:opt_ite} by 
\begin{align}
    \ite{y}{\ell+1}=\dm[\tau_i][\ite{y}{\ell}],
\end{align}
where $\ell$ is the iteration index, with increasing $\ell$ corresponding to decreasing $t$. Here, $\optdm$ is replaced by the trained \textcolor{black}{score-based model} $\dm$. Deep learning approximation errors affect the accuracy of this iterative scheme, which is studied in Sec.~\ref{sec:R!0}. We formally describe our iteration algorithm by the following definition.

% \begin{align}
%     p_\sigma(y|\tau_i) =\int p(s|\tau_i)p(y|s,\tau_i)ds = \int g_\sigma(y-s) p(s|\tau_i)ds,
% \end{align}
% is related to the density $p_\sigma(y|\tau_i)$ of noisy states by 
% where $g_\sigma(y-s)$ is the density of $z$. 

% Diffusion models learn to approximate the conditional score $\nabla p_\sigma(y|\tau_i)$ of the noisy densities at all noise levels $\sigma$. By  we obtain the key relationship for learning the scores .
% \begin{claim}
    % The conditional score is related to the expected state by

% \end{claim}
% Here, is approximated by a conditional diffusion model $\dm[\tau_i][y]$ trained to minimize the mean squared error

% Unlike studies of diffusion models in the generative AI, we are concerned with the accuracy of the generated states. 
\begin{definition}[Discrete-time flow]
    A \emph{discrete-time flow} $\flow[\ell][][\tau_i][\theta]:\mathbb{N}\times\mathbb{R}^{|s|}\rightarrow\mathbb{R}^{|s|}$ conditioned on local history $\tau_i$ and \textcolor{black}{denoiser network} parameters $\theta$ is defined by $\flow[\ell][][\tau_i][\theta](y) = \dm[\tau_i][\flow[\ell-1][][\tau_i][\theta](y)]$, with the initial condition $\flow[0][][\tau_i][\theta](y)=y$.
\end{definition}

Intuitively, $\flow[\ell][]$ generates a denoised state after applying the denoiser network for $\ell$ times, transporting a noisy state $y$ to $\ite{y}{\ell}=\flow[\ell][][\tau_i][\theta](y)$. The distribution of these denoised states is given by the push-forward equation defined as follows.

\begin{definition}[Push-forward equation]\label{def:pushop}
The distribution of estimated states after applying $\dm$ for $\ell$ times is $p_\ell = \pusheq[\ell][][\tau_i][\theta] p_0$, where $p_0$ is the distribution of noisy states, and the push-forward operator $*$ is defined by $\pusheq[\ell][] p_0(y) = p_0(\phi_\ell^{-1}(y))$ $\det \left[\partial \phi_\ell^{-1}/\partial y\right].$
\end{definition}

In this way, the estimated posterior state distribution given by the diffusion process is ($\ell\rightarrow\infty$ indicates $T\rightarrow\infty$):
    \begin{align}
    \estpost[s][\tau_i] = \pusheq[\ell\rightarrow\infty][][\tau_i][\theta] p_0(s)\label{equ:approx_ps}.
\end{align}
According to \cref{def:pushop}, this distribution depends on the Jacobian $\partial \phi_\ell^{-1}/\partial y = (\partial \phi_\ell/\partial y)^{-1}$~\cite{lipman2022flow}. As $\phi_\ell$ is dependent on $\dm[\tau_i][y]$, our analysis would heavily utilize \textcolor{black}{denoiser network} Jacobian 
\begin{align}
    \jaco[f][][y][\tau_i] = \nabla_y\dm[\tau_i][y] =\left. \partial f/\partial y \right|_{(\tau_i, y)}.
\end{align}
$\jaco[f][][y][\tau_i]$ has eigenvalues $\eigenvalue[k][][y][\tau_i]$ and eigenvectors $\eigenvector[k][][y][\tau_i], k\in[|s|]$. Dependencies on $y$ and $\tau_i$ will be omitted in these notations when they are unambiguous within the given context. An important property we use in this paper is the \emph{Jacobian rank}, defined as the rank of the matrix \(
    \jaco[][+][y][\tau]=\left(I-\tfrac{\partial f}{\partial y}(\tau,y)\right)^{-1} \tfrac{\partial f}{\partial \tau}(\tau,y).
\)

% We denote the eigen-decomposition of the Jacobian by
% \begin{align}
%     \jaco[f][][y][\tau_i] = \eves[][][y][\tau_i]\evas[][][y][\tau_i]\eves[][\top][y][\tau_i]
% \end{align}
% with eigenvalues $\eigenvalue[k][][y][\tau_i]$ and eigenvectors $\eigenvector[k][][y][\tau_i], k\in[\operatorname{dim}(s)]$. Dependencies on $y$ and $\tau_i$ will be omitted in these notations when they are unambiguous within the given context.

\section{States As Shared Fixed Points}\label{sec:R=0}
We now present our findings on how diffusion models represent the one-to-many mapping from histories to states. We first consider the case with minimal influence of deep learning approximation errors in this section, and study more complex scenarios in \cref{sec:R!0}.

\subsection{Example}
We start with a didactic example. Sensor networks are a classic problem in the multi-agent literature~\cite{modi2001dynamic,nair2005networked,zhang2011coordinated} inspired by real-world challenges~\cite{lesser2003distributed}. The environment consists of multiple sensor agents and moving targets. Each agent can scan at most one nearby area per timestep, and two agents must scan an area simultaneously to track a target. Since we are studying the case with minimal influence of deep learning approximation errors in this section, we use a small sensor network with 2$\times$2 sensor agents (1\textsuperscript{st} column of \cref{fig:R0}, with each sensor represented by a circle) and 1 target. There are four possible states, each represented by a one-hot vector indicating the target's true location.

\noindent \textbf{Collectively observable (CO) Dec-POMDPs}. The first row of \cref{fig:R0} illustrates a CO Dec-POMDP~\cite{pynadath2002communicative}. Each agent's observation $o\in\mathbb{R}^2$ includes a separate dimension for each nearby area, with a value 1 if the target is present and 0 otherwise. In this example, the target is in \spname{Area\ 1}. The right side plots changes in the first two state dimensions during diffusion. Each arrow starts from a possible noisy state, which, together with local history, is the input to a \textcolor{black}{denoiser network}. The network outputs a denoised state, marking the endpoint of the arrow. \spname{Agent\ 3} and \spname{4} have uncertainty because they cannot observe the target in their nearby areas. This uncertainty is reflected in the vector fields. Conditioned on their histories (length=1), there are two fixed points, each representing a possible state. For example, \spname{Agent\ 3} cannot distinguish whether the target is in \spname{Area\ 1} or \spname{Area\ 4}; correspondingly, there are two attractors $y=(1,0,0,0)$ and $(0,0,0,1)$. Moreover, we observe that the flow has an equal probability of converging to these two fixed points, matching the true posterior state distribution given the history. \spname{Agent\ 1} and \spname{2} know the true state because they observe the target. Correspondingly, there is only one fixed point $(1,0,0,0)$ given their history. \emph{More importantly, we note that the only common fixed point shared by all agents is the true state $(1,0,0,0)$.}

\noindent \textbf{Non-collectively observable Dec-POMDPs}.\ \ The second row of \cref{fig:R0} illustrates a non-CO Dec-POMDPs. Agent observations are the same, but when the target is in \spname{Area\ 1} or \spname{2}, nearby sensors fail to observe it with 50\% probability. For example, when the true state is $(1,0,0,0)$, the observation of \spname{Agent\ 1} can be either $(1,0)$ or $(0,0)$ with equal probability. This environment is non-CO because it is possible for all agents observe $(0,0)$, in which case even aggregating all local information does not reveal the target's location, as shown in \cref{fig:R0}. We observe a significant difference from the first row: there are multiple shared fixed points: $(1,0,0,0)$ and $(0,1,0,0)$. This reflects the best inference the agents can achieve: \spname{Agent\ 3} and \spname{4} is sure that the target is not in \spname{Area\ 3} or \spname{4}. However, the aggregated local information cannot tell whether the target is in \spname{Area\ 1} or \spname{2}.

\begin{figure*}[t]
    \centering
    \vspace{-2em}
    \includegraphics[width=\linewidth]{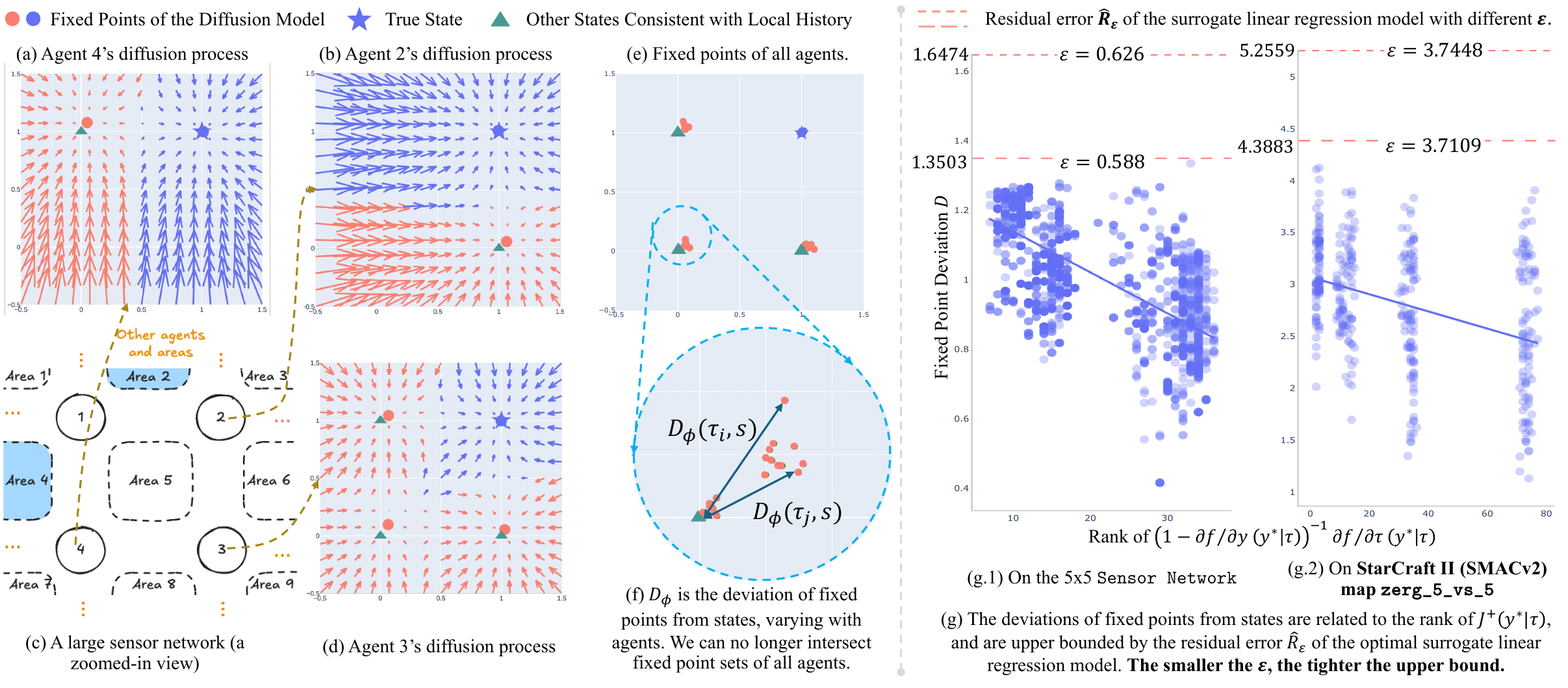}
    \vspace{-2.5em}
    \caption{Deep learning approximation errors cause fixed points to deviate from true states. Deviation norms are related to the Jacobian rank and can be upper bounded by a surrogate linear model. (a,b,d) In the 5$\times$5 sensor network (with a zoomed-in view in (c)), we show changes in state dimensions corresponding to \spname{Area 2} and \spname{4} during diffusion. (e,f) The impact of these deviations becomes evident when fixed points of all agents are displayed together in a single panel: the true state can no longer be determined by intersecting fixed point sets of all agents, as it is possible that $\cap_i \fpset[\phi][][\tau_i]=\varnothing$. (g) Empirical evidence from SMACv2 and the 5$\times$5 sensor network shows that deviation norms negatively correlate to Jacobian ranks and are tightly upper bounded by optimal residual errors of the surrogate linear model.\label{fig:R!0demo}}
    \vspace{-1em}
\end{figure*}
\subsection{Infer State Locally: Stable Fixed Points}
Although simple, the sensor network example encapsulates our findings which will be discussed in this section. Our first finding pertains to how diffusion models represent states, \ie, how each agent infers states based on its own history. We begin by showing that these individual diffusion processes converge.

\begin{restatable}{theorem}{converge}[Converged Diffusion]\label{thm:converge}
In the absence of approximation errors, repeatedly applying the \textcolor{black}{denoiser network} $\dm[\tau_i][y]$ converges to a state $s$ that is consistent with $\tau_i$ and has a dominate posterior probability given $y$: $\flow[\infty][][\tau_i][\theta](y) = s = \argmax_{s\in S_G(\tau_i)} p(s|y,\tau_i).$
\end{restatable}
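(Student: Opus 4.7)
The plan is to reduce the iteration to gradient ascent on the log-density of a Gaussian mixture and analyze the attractor structure of that dynamical system. Writing the marginal noisy likelihood as
\begin{align}
p_\sigma(y\mid\tau_i) \;=\; \sum_{s\in S_G(\tau_i)} p(s\mid\tau_i)\,\mathcal{N}(y;s,\sigma^2 I),
\end{align}
a Gaussian mixture centered at the consistent states, and invoking Tweedie's identity (Eq.~\ref{equ:f*_score}), one has $\optdm[\tau_i][y]=y+\sigma^2\nabla_y\log p_\sigma(y\mid\tau_i)$. Hence the iteration $\ite{y}{\ell+1}=\optdm[\tau_i][\ite{y}{\ell}]$ is fixed-step gradient ascent on $\log p_\sigma(\cdot\mid\tau_i)$, and every fixed point is a critical point of this log-density, equivalently the self-consistency relation $y=\sum_s s\cdot p(s\mid y,\tau_i)$.

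A second step shows that each $s_0\in S_G(\tau_i)$ is an attracting fixed point. Substituting $y=s_0$, the self-weight $p(s_0\mid s_0,\tau_i)$ tends to $1$ whenever states are pairwise separated by distances much larger than $\sigma$, so $s_0$ is indeed a fixed point. Differentiating $T(y):=\E[s\mid y,\tau_i]$ with the standard exponential-family identity yields $\nabla_y T(y)=\sigma^{-2}\Cov(s\mid y,\tau_i)$, which vanishes at $y=s_0$; the Jacobian therefore has spectral radius strictly less than $1$ at each $s_0$, making it locally attracting. Saddle-type critical points on the separating manifolds between modes form a Lebesgue-measure-zero set of initializations and may be ignored for generic $y$.

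The third step identifies the basin of attraction of $s_0$ with $\{y:s_0=\argmax_{s\in S_G(\tau_i)} p(s\mid y,\tau_i)\}$. The update direction $\optdm[\tau_i][y]-y=\sum_s (s-y)p(s\mid y,\tau_i)$ is a posterior-weighted convex combination of displacements toward the consistent states. Ascent is monotone and $\{\ite{y}{\ell}\}$ is bounded, so the iterates converge. Because the pairwise log-ratio
\begin{align}
\log\frac{p(s_0\mid y,\tau_i)}{p(s\mid y,\tau_i)} \;=\; \frac{\|y-s\|^2-\|y-s_0\|^2}{2\sigma^2}+\log\frac{p(s_0\mid\tau_i)}{p(s\mid\tau_i)}
\end{align}
strictly increases whenever the step pulls $y$ closer to $s_0$ than to $s$, the initially dominant component strengthens its grip monotonically, pinning the limit to $s_0$ and giving $\flow[\infty][][\tau_i][\theta](y)=\argmax_{s\in S_G(\tau_i)}p(s\mid y,\tau_i)$.

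The main obstacle is making this basin-of-attraction argument globally rigorous, since the Gaussian-mixture log-density is not concave and a large step size $\sigma^2$ could in principle carry an iterate across a saddle. I plan to address this by either (i) a direct contraction argument inside the posterior-majority region of $s_0$, where $\sigma^{-2}\|\Cov(s\mid y,\tau_i)\|_{\mathrm{op}}$ becomes small once $p(s_0\mid y,\tau_i)$ exceeds a threshold, or (ii) the EM interpretation, in which $T$ is an EM step for the mixture and $\log p_\sigma(\cdot\mid\tau_i)$ acts as a Lyapunov function; combined with local attractivity at each $s_0$ and generic avoidance of saddles, this routes every trajectory outside a measure-zero set to the component with the largest initial responsibility.
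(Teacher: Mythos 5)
Your proposal follows essentially the same route as the paper's proof: write the optimal denoiser as the posterior mean of a Gaussian mixture over $S_G(\tau_i)$ via the Miyasawa/Tweedie identity, recast the iteration $\ite{y}{\ell+1}=\optdm[\tau_i][\ite{y}{\ell}]$ as fixed-step gradient ascent on $\log p_\sigma(y\mid\tau_i)$, and conclude via monotone ascent of this Lyapunov function together with local attractivity of the consistent states (for small $\sigma$). The global basin-of-attraction issue you flag is real, and the paper's own proof glosses over it with the claim that the logarithm of a sum of log-concave functions is itself log-concave (false in general for Gaussian mixtures), so your proposed contraction/EM-style repair is, if anything, more careful than the argument the paper actually gives.
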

\noindent \cref{thm:converge} formally underpins the observation in \cref{fig:R0}. Based on this, we give the sufficient and necessary conditions of how a diffusion model represents states.

\begin{restatable}{theorem}{repr}[State Representation]\label{thm:repr}
The diffusion model represents states consistent with $\tau_i$ by the \emph{attractors} of its flow:
    \begin{align}
        \hat{S}_G(\tau_i) = \left\{y^* \mid \pusheq[\infty][][\tau_i][\theta]p_0(y^*)>0\right\},
    \end{align}
which are equivalent to the stable fixed points $\fpset[\phi][][\tau_i]$ of \dm[\tau_i][\cdot], 
\begin{align}
    \fpset[\phi][][\tau_i] = \{y^* \mid y^*=\dm[\tau_i][y^*], |\eigenvalue[\max][][y^*][\tau_i]|<1\}.
\end{align}
Here, $\eigenvalue[\max][][y^*][\tau_i]$ is the largest eigenvalue of the Jacobian $\jaco[f][][y^*][\tau_i]$.
\end{restatable}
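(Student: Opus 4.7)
The plan is to prove the two inclusions $\hat{S}_G(\tau_i) \subseteq \fpset[\phi][][\tau_i]$ and $\fpset[\phi][][\tau_i] \subseteq \hat{S}_G(\tau_i)$ separately. The first direction will use continuity of $\dm[\tau_i][\cdot]$ together with the spectrum of $\jaco[f][][y^*][\tau_i]$ to argue that stability is necessary; the second will use a local contraction argument to exhibit a positive-measure basin of attraction, and then invoke \cref{thm:converge} to identify the attractor with a state consistent with $\tau_i$.

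\textbf{Attractors are stable fixed points.} Suppose $y^* \in \hat{S}_G(\tau_i)$, so that a positive-measure set of initial noisy inputs $y$ satisfies $\flow[\ell][][\tau_i][\theta](y) \to y^*$ as $\ell \to \infty$. Because $\flow[\ell+1][][\tau_i][\theta](y) = \dm[\tau_i][\flow[\ell][][\tau_i][\theta](y)]$ and $\dm[\tau_i][\cdot]$ is continuous in its second argument, taking limits on both sides forces $y^* = \dm[\tau_i][y^*]$. For stability, I would argue by contradiction: if $|\eigenvalue[\max][][y^*][\tau_i]| \geq 1$, the local linearization $\dm[\tau_i][y^*+\delta] \approx y^* + \jaco[f][][y^*][\tau_i]\,\delta$ shows that generic perturbations along the dominant eigendirection are not contracted, so trajectories starting in a neighborhood of $y^*$ generically escape. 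By the stable manifold theorem, the set of initial points converging to $y^*$ then lies on a lower-dimensional manifold of Lebesgue measure zero, contradicting the assumption that $y^*$ receives positive push-forward mass.

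\textbf{Stable fixed points are attractors.} Conversely, take $y^*$ with $y^* = \dm[\tau_i][y^*]$ and $|\eigenvalue[\max][][y^*][\tau_i]| < 1$. A standard Banach-type contraction argument applied to the linearization of $\dm[\tau_i][\cdot]$ about $y^*$ yields an open ball $B_\epsilon(y^*)$ on which the iteration acts as a local contraction with rate $|\eigenvalue[\max][][y^*][\tau_i]| + o(1) < 1$, so $\flow[\ell][][\tau_i][\theta](y) \to y^*$ uniformly for $y \in B_\epsilon(y^*)$. Since $p_0$ is a noise distribution with full support, $B_\epsilon(y^*)$ has positive $p_0$-measure, which by \cref{def:pushop} transports to positive mass of $\pusheq[\infty][][\tau_i][\theta] p_0$ at $y^*$. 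Combining with \cref{thm:converge} identifies this limit with a state consistent with $\tau_i$, placing $y^*$ in $\hat{S}_G(\tau_i)$.

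The principal obstacle I anticipate is making the statement $\pusheq[\infty][][\tau_i][\theta] p_0(y^*) > 0$ measure-theoretically precise, because the flow generically collapses the continuous density $p_0$ onto a discrete set of fixed points so the limiting object is a mixture of point masses rather than a density. Interpreting the positivity as $y^*$ lying in the support of the limit measure—equivalently, as the basin of attraction having positive $p_0$-measure—is the key bridge, and the Jacobian determinant in \cref{def:pushop} becomes degenerate as $\ell \to \infty$, so the push-forward must be handled as a limit of measures rather than densities. A secondary technicality is that the local contraction argument only controls trajectories originating near $y^*$; however, because the theorem only asserts positive (not full) mass at each $y^* \in \fpset[\phi][][\tau_i]$, this locality is sufficient.
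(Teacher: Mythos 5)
Your proposal takes a genuinely different route from the paper's. The paper proves \cref{thm:converge,thm:repr} jointly by exploiting the explicit form of the error-free denoiser, $\dm[\tau_i][y]=\mathbb{E}[s\mid y,\tau_i]=y+\sigma^2\nabla_y L(y)$ with $L(y)=\log\int_s p(s|\tau_i)\exp(-\|y-s\|^2/2\sigma^2)\,ds$: the iteration is gradient ascent on $L$, so $L$ serves as a Lyapunov function, trajectories converge monotonically to critical points of $L$ (which are exactly the fixed points of $\dm[\tau_i][\cdot]$), for small $\sigma$ these sit at the states $s$, and stability is read off from the negative-definite Hessian of $L$ there. You instead treat $\dm[\tau_i][\cdot]$ as an abstract smooth map and prove the two inclusions by linearization: a local contraction near a stable fixed point yields a positive-measure basin (hence positive push-forward mass, since $p_0$ has full support), while continuity forces any limit of a convergent orbit to satisfy the fixed-point equation. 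Your version is cleaner as a proof of the literal equivalence $\hat S_G(\tau_i)=\fpset[\phi][][\tau_i]$, and your observation that the limiting push-forward must be read as a measure (a mixture of point masses, with the Jacobian determinant in \cref{def:pushop} degenerating) rather than a density is a real point the paper glosses over. What your route gives up is any identification of the fixed points with the states consistent with $\tau_i$ or with the posterior; you correctly outsource that to \cref{thm:converge}, so your argument is not self-contained in the way the paper's joint proof is.

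One step needs repair. In the necessity direction you claim that $|\eigenvalue[\max][][y^*][\tau_i]|\ge 1$ forces the basin of attraction onto a measure-zero stable manifold. That conclusion is only licensed for hyperbolic fixed points: at the boundary $|\eigenvalue[\max][]|=1$ a fixed point can attract a set of positive measure (consider $y\mapsto y-y^3$ at the origin), the stable manifold theorem does not apply, and your contradiction fails, so a point could in principle carry positive mass while falling outside $\fpset[\phi][][\tau_i]$ as defined with strict inequality. To close this you need exactly the structure the paper leans on: for the optimal denoiser, $\jaco[f][][y][\tau_i]=\sigma^{-2}\mathrm{Cov}(s\mid y,\tau_i)\succeq 0$ and equals $I+\sigma^2\nabla^2 L(y)$, so at an isolated state with small $\sigma$ the Hessian of $L$ is strictly negative definite and the non-hyperbolic case cannot occur. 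With that patch your argument goes through.
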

\noindent Proved in \cref{appx:math:theorem:fp}, \cref{thm:converge,thm:repr} align with recent research~\cite{pidstrigach2022score,chen2023sampling} showing that diffusion models are able to produce samples from data distributions with bounded support on a low-dimensional data manifold. Since \cref{thm:repr} shows \(\fpset[\phi][][\tau_i] = \hat{S}_G(\tau_i)\) always hold, the terms \emph{attractor} and \emph{fixed pointed} will be used interchangeably. In the absence of approximation errors, the stable fixed points are the states consistent with $\tau_i$: \(\fpset[\phi][][\tau_i] = S_G(\tau_i).\)
% Each fixed point $y^*$ is associated with a \emph{basin of attraction}
% \begin{align}
%     \mathcal{B}_\phi(y^*) =\{y\mid \flow[\infty][][\tau_i][\theta](y)=y^*\}.
% \end{align}
% Applying the diffusion model repeatedly transports an arbitrary point in $\mathcal{B}(y^*)$ to $y^*$.

\subsection{Infer State Globally: Shared Fixed Points}
\cref{thm:converge,thm:repr} show how an individual diffusion model represents the inference of agent $i$ about states. In the simplest scenario described in the finding below, this local inference suffices to determine the true global state.
\begin{finding}
    If an individual diffusion model has only one stable fixed point $y^*$, agent $i$ is able to infer that $s=y^*$.
\end{finding}
A unique fixed point implies that only one state is consistent with $\tau_i$. Therefore, agent $i$ can unambiguously determine the global state, eliminating the need for communication with others.

We focus primarily on more complex scenarios where agents are uncertain about the state, \ie{}, individual diffusion models have multiple fixed points. We first show that, this uncertainty can be resolved in collectively observable Dec-POMDPs.

\begin{restatable}{theorem}{RzeroCO}\label{thm:R0CO}
    Without approximation errors, in collectively observable Dec-POMDPs, the intersection of the fixed point sets of all agents is the true state $s$:
    \(\cap_i \fpset[\phi][][\tau_i] =\{s\}.\)
\end{restatable}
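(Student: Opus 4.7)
The plan is to combine Theorem~\ref{thm:repr} with the defining property of collective observability, so that the diffusion-model machinery drops out and the statement becomes a purely observation-model identity. Theorem~\ref{thm:repr} already gives, in the error-free regime, $\fpset[\phi][][\tau_i] = S_G(\tau_i)$ for every agent $i$. Intersecting over $i$ therefore reduces the claim to
\[
    \bigcap_{i=1}^n S_G(\tau_i) = \{s\},
\]
which no longer mentions $\dm$, $\phi$, or Jacobians at all.

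I would then prove this set identity by double inclusion. The inclusion $\{s\} \subseteq \bigcap_i S_G(\tau_i)$ is immediate: the true state $s$ actually generated each local history $\tau_i$ under the Dec-POMDP dynamics, so $p(s|\tau_i) > 0$ and hence $s \in S_G(\tau_i)$ for every $i$. For the reverse inclusion $\bigcap_i S_G(\tau_i) \subseteq \{s\}$, I would invoke the definition of a collectively observable Dec-POMDP due to Pynadath and Tambe: pooling all local histories uniquely identifies the current global state, i.e., $S_G(\bm\tau) = \{s\}$ with probability one. It then suffices to show that any candidate $s' \in \bigcap_i S_G(\tau_i)$ also lies in $S_G(\bm\tau)$, after which CO forces $s' = s$ and closes the argument.

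The step I expect to require the most care is this last implication, since marginal consistency of $s'$ with each individual $\tau_i$ does not \emph{a priori} entail joint consistency with $\bm\tau$: the witness trajectories that make each $\tau_i$ plausible under $s'$ might not be mutually compatible across agents. I plan to handle this by exploiting two standard Dec-POMDP facts: (i)~$S_G(\tau_i)$ refers to the current global state, marginalised over other agents' observations and over past trajectories, and (ii)~the observation function factorises as $O(\bm o, s) = \prod_i O(o_i \mid s)$ conditional on the current state. Together, these upgrade ``$p(s'|\tau_i) > 0$ for all $i$'' to ``$p(s'|\bm\tau) > 0$,'' at which point collective observability pins down $s' = s$. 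Chaining this with Theorem~\ref{thm:repr} yields $\bigcap_i \fpset[\phi][][\tau_i] = \bigcap_i S_G(\tau_i) = \{s\}$, as claimed.
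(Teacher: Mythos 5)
Your proposal is correct and takes essentially the same route as the paper: first show the true state lies in every agent's fixed-point set (via Theorem~\ref{thm:repr} reducing fixed points to $S_G(\tau_i)$ in the error-free regime), then use collective observability to rule out a second shared element. The only substantive difference is that you explicitly flag and close the marginal-to-joint consistency step (upgrading $p(s'|\tau_i)>0$ for all $i$ to $p(s'|\bm\tau)>0$ via conditional independence of observations given the current state), a step the paper's contradiction argument passes over silently when it asserts that identical individual histories in two states force those states to coincide.
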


\noindent Conversely, non-collectively observable Dec-POMDPs are more complicated, as agents are collectively unable to uniquely determine the state. However, we can still show that diffusion models identify all states consistent with the joint history and can reproduce the posterior probability of these states.

\begin{restatable}{theorem}{RzeroNonCO}\label{thm:R0NonCO}
     Without approximation errors, in non-collectively observable Dec-POMDPs, the intersection of fixed point sets is all states consistent with joint history: $\cap_i \fpset[\phi][][\tau_i] =\{s\mid p(s|\bm\tau)>0\}$. The true posterior probability $p(s|\bm\tau)$ can be recovered with appropriate prior distributions $p(y)$ of initial noisy states $y$.
\end{restatable}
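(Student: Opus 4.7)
The plan is to decompose the theorem into its two claims---the set equality and the posterior recovery---and prove them separately, leveraging \cref{thm:repr,thm:converge} for the structural backbone.

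For the equality $\cap_i \fpset[\phi][][\tau_i] = \{s\mid p(s|\bm\tau)>0\}$, I would first invoke \cref{thm:repr} to identify, in the zero-error regime, $\fpset[\phi][][\tau_i]$ with the history-consistent set $S_G(\tau_i)=\{s\mid p(s|\tau_i)>0\}$. The remaining task is the purely probabilistic identity $\cap_i S_G(\tau_i) = \{s\mid p(s|\bm\tau)>0\}$. The containment $\supseteq$ is immediate by marginalization: if $p(s,\bm\tau)>0$ then $p(s,\tau_i)>0$ for each $i$. For $\subseteq$, I would appeal to the defining conditional independence of Dec-POMDPs---observations at each step are independent across agents given the current state---which propagates through trajectories to give $p(\bm\tau\mid s)=\prod_i p(\tau_i\mid s)$ once the joint action history is accounted for. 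Consequently, $p(s,\bm\tau)=p(s)\prod_i p(\tau_i\mid s)$, so $p(s\mid\tau_i)>0$ for every $i$ forces $p(s\mid\bm\tau)>0$.

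For the posterior-recovery claim, my starting point is \cref{equ:f*,equ:f*_score}, which tie the optimal denoiser to the conditional score $\nabla\log p_\sigma(y\mid\tau_i)$ and thus encode the local posteriors $p(s\mid\tau_i)$. Combining these with the factorization above via Bayes' rule gives $p(s\mid\bm\tau)\propto p(s)^{-(n-1)}\prod_i p(s\mid\tau_i)$, exhibiting the joint posterior as a computable combination of the local posteriors already represented by individual diffusion models. Since \cref{thm:converge} forces each flow to concentrate on the shared fixed-point set, the remaining question is how much mass it assigns to each attractor. The push-forward $\pusheq[\infty][][\tau_i][\theta]p_0$ allocates mass according to the $p_0$-measure of each basin of attraction; I would choose the initial prior $p(y)=p_0(y)$ so that these basin weights cancel the $p(s)^{-(n-1)}$ factor, producing a final marginal on states that matches $p(s\mid\bm\tau)$ exactly on $\cap_i\fpset[\phi][][\tau_i]$.

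The principal obstacle will be making the phrase \emph{appropriate prior distribution} operational: the geometry of basins of attraction for $\dm[\tau_i][\cdot]$ depends on learned parameters, so there is no canonical $p_0$ whose basin weights coincide with $p(s\mid\tau_i)$ without careful reweighting. Coordinating this tuning across all $n$ agents so that the mass surviving on the shared attractor set matches the joint posterior likely requires either an importance-weighting argument relating basin measures to local posterior values, or a composite iteration in which each $\dm[\tau_i][\cdot]$ is applied in sequence and \cref{thm:converge} is invoked successively to prune non-shared fixed points---foreshadowing the composite diffusion process developed later in the paper.
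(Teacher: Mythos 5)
Your treatment of the set identity $\cap_i \fpset[\phi][][\tau_i]=\{s\mid p(s|\bm\tau)>0\}$ matches the paper's: both reduce to the observation that, absent approximation errors, a state is a stable fixed point of $\dm[\tau_i][\cdot]$ exactly when $p(s|\tau_i)>0$, and both need the implication that $p(\tau_i|s)>0$ for all $i$ forces $p(\bm\tau|s)>0$. You make the required conditional-independence structure explicit where the paper simply asserts $p(s|\bm\tau)\propto p(\bm\tau|s)p(s)>0$; that is a mild gain in rigor, not a different route (and note that the factorization $p(\bm\tau|s)=\prod_i p(\tau_i|s)$ for full action-observation histories, as opposed to single observations, still needs justification in both versions).

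The genuine gap is in the posterior-recovery half. You correctly reduce the problem to showing that the push-forward assigns to each attractor a mass equal to the local posterior $p(s|\tau_i)$ for a suitable $p_0$, and you correctly flag this as the principal obstacle --- but you then leave it unresolved, deferring to an unspecified importance-weighting or to composite diffusion (a different construction, introduced later for the approximation-error regime). The paper closes exactly this step: it takes $p_0$ to be the forward-noised mixture $\sum_s p(s|\tau_i)\,\mathcal{N}(y;s,\sigma^2 I)$, identifies the basin of attraction of $s$ as $\mathcal{B}_\phi(s)=\{y\mid p(s)p(y|s)>p(s')p(y|s'),\ \forall s'\neq s\}$, and bounds the gap via Gaussian tail estimates on the misclassification regions,
\begin{align}
\Bigl|\textstyle\int_{\mathcal{B}_\phi(s)}p_0(y)\,dy - p(s|\tau_i)\Bigr| \;\le\; \sum_{s'\neq s}\bigl[p(s)+p(s')\bigr]\,\frac{2\sigma}{\|s-s'\|\sqrt{2\pi}}\;e^{-\|s-s'\|^2/4\sigma^2},
\end{align}
so the basin weights reproduce $p(s|\tau_i)$ up to an error that vanishes as $\sigma\to 0$ or as states separate. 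Without some such quantitative control, your plan does not yet prove the recovery claim. Two further cautions: the recovery the paper actually establishes is approximate (exponentially small gap), so aiming to match $p(s|\bm\tau)$ \emph{exactly} overstates what this route delivers; and your combination formula $p(s|\bm\tau)\propto p(s)^{-(n-1)}\prod_i p(s|\tau_i)$ is a sensible, and indeed more explicit, way to assemble the joint posterior than the paper's closing remark that it ``can be reconstructed,'' but it rests on the same history-level factorization noted above.
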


\noindent Proved in \cref{appx:math:theorem:R0}, \cref{thm:R0CO,thm:R0NonCO} establish a communication protocol--agents share their fixed points when necessary to maximally resolve uncertainty about states. A potential application of these results is to train a denoiser during centralized training and use inferred states or their distributions in decentralized execution, thereby enabling new MARL algorithms (please refer to \cref{appx:policylearning}).

% The  or its distribution can be used in the 

% p()
\section{Deviated Fixed Points}\label{sec:R!0}

We now consider the influence of deep learning approximation errors. \textbf{(1)} We find that the major impact of these errors is that fixed points deviate from states (\cref{fig:R!0demo}(a-f)). \textbf{(2)} Theoretically, we identify Jacobian ranks as the primary factor driving this deviation (\cref{thm:lr}). Empirical results (\cref{fig:R!0demo}(g)) support this finding and pinpoint  design choices that influence Jacobian ranks (\cref{fig:R!0}). \textbf{(3)} Inspired by the low-rank property, we construct a surrogate linear regression model (\cref{thm:def:surrogate}) to bound the deviation (\cref{thm:fpdbound}). \textbf{(4)} This deviation bound helps prove the convergence of a novel composite diffusion process (\cref{sec:composite}).

\subsection{Example}
We begin with a concrete example that expands the sensor network to include $5 \times 5$ agents and 2 targets, with each sensor configured to scan 4 nearby areas. \cref{fig:R!0demo}(c) zooms in on the section of the map containing two targets. For a fair comparison against the example in the previous section, we keep the network architecture and training agenda unchanged (details in \cref{appx:emp}).

In \cref{fig:R!0demo}(a,b,d), we show the discrete-time flow induced by the diffusion models in this task. The increased problem size puts an extra burden on diffusion models. For example, in \cref{fig:R!0demo}(d), the diffusion model of \spname{Agent\ 3} converges to four fixed points (blue and red circles), which are not strictly overlapped with possible states (blue star and green triangles), indicating that fixed points deviate from clean states. For better visualization, in \cref{fig:R!0demo}(f), we put the fixed points of all agents together and zoom in on the fixed points around (0,0). It shows that diffusion models of different agents exhibit distinct fixed points with varied deviations. In this way, \emph{it is no longer practical to calculate the intersection of agents' fixed point sets to obtain the true state, as the intersection would be empty.}

\subsection{Jacobian Rank and Surrogate Linear Model}

\textbf{Empirical Observations}.\ \ To understand why diffusion models can no longer represent the consistent states $S_G(\tau_i)$ accurately, we take a closer look at the fixed points learned by the diffusion models in the $5\times 5$ sensor network (\cref{fig:R!0demo}(c)) and the \spname{zerg\_5\_vs\_5} map from the complex, highly stochastic SMACv2 benchmark~\citep{ellis2023smacv2}. Specifically, we look at the Jacobian rank (the rank of \(
    \jaco[][+][y][\tau]=\left(I-\nicefrac{\partial f}{\partial y}(\tau,y)\right)^{-1} \nicefrac{\partial f}{\partial \tau}(\tau,y)
\)) at fixed points of individual agents.

\begin{figure*}[t]
\vspace{-2em}
    \centering
    \includegraphics[width=0.9\linewidth]{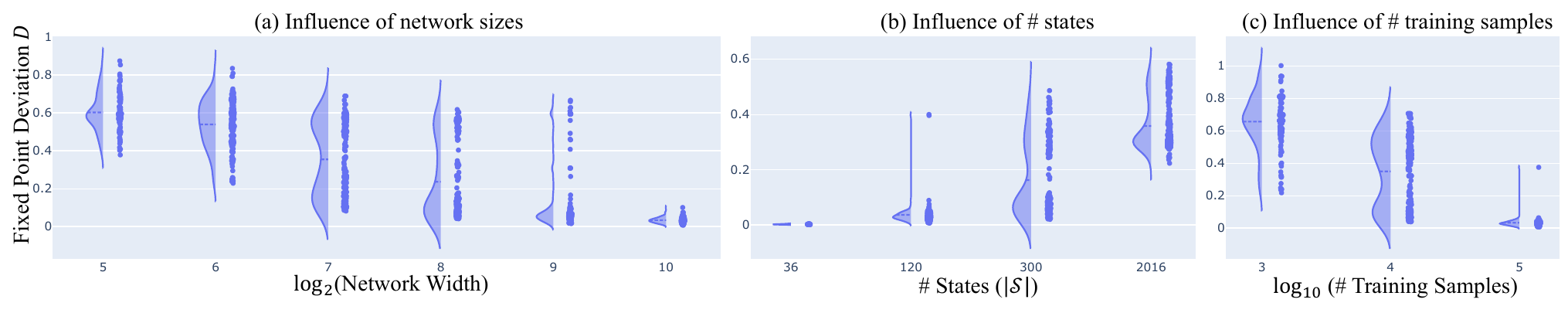}
    \vspace{-1.5em}
    \caption{Practical factors contributing to low Jacobian ranks (which correlate negatively with deviations of fixed points from true states) include narrow network architectures, large state space sizes, and small numbers of training samples. In each panel, a blue point represents the deviation of a fixed point, and distributions of these deviations are displayed on the left. }
    \label{fig:R!0}
    \vspace{-1em}
\end{figure*}

\cref{fig:R!0demo}(g) shows the relationship between fixed points' Jacobian ranks and their deviations from states (in $\ell_2$ norm). A clear negative correlation is observed between these two variables, \ie{}, fixed points with lower Jacobian ranks are more likely to exhibit large deviations from true states.
% As we will explain in \cref{thm:lrfc}, for a fully-connected diffusion network, the Jacobian rank is the same as the rank of $\left(I-\frac{\partial f}{\partial y}(\tau,y^*)\right)^{-1} \frac{\partial f}{\partial \tau}(\tau,y^*)$.

\noindent \textbf{Theoretical Understanding}.\ \  We now formally analyze the underlying reason for this negative correlation and thereby show why diffusion models might not be able to represent all states accurately. We first define the deviation of fixed points from states as follows.
\begin{definition}[Deviation of fixed points from states]\label{def:fpd}
    We define the error between a true state $s$ and its corresponding fixed point by
    \begin{align}
        \fpd[\phi][][\tau_i][s] = s-\flow[\infty][][\tau_i][\theta](s).
    \end{align}
    Here, $\flow[\infty][][\tau_i][\theta](s)$ is the attractor to which the diffusion process conditioned on $\tau_i$ converges when initialized from state $s$.
\end{definition}
% The intuition of our proof is as follows. A necessary condition for a fixed point is $y^*=\dm[\tau][y^*]$. As the input to the diffusion network, different $\tau$ leads to different 
% Intuitively, the flow $\flow[][][\tau_i][\theta]$ is dependent on the local observation history $\tau_i$. When the diffusion model $f_\theta$ is fixed, the influence of varying $\tau_i$ on the fixed points can be quantified, as shown in the \cref{thm:lr}.
Our analysis begins with the following theorem that characterizes the influence of $\tau$ on the fixed points, i.e., how the fixed point changes when $\tau$ changes.

\begin{restatable}{theorem}{lr}\label{thm:lr}
Let $y^*\in\fpset[\phi][][\tau]$ be a fixed point corresponding to history $\tau$. When $\tau$ changes to $\tau'=\tau+\Delta \tau$, the fixed point shifts to $y^{*}{}'=y^*+\Delta y^*$. If the changes in Jacobian satisfies $\|\jaco[f][][\tau',y^{*}{}']-\jaco[f][][\tau,y^{*}]\|_F<\epsilon$ for a small $\epsilon$, we have
\begin{align}
    \Delta y^* \approx \left(I-\frac{\partial f}{\partial y}(\tau,y^*)\right)^{-1} \frac{\partial f}{\partial \tau}(\tau,y^*)\Delta \tau.\label{equ:lr}
\end{align}
The approximation error in \cref{equ:lr} is bounded by $\left\| \spname{Err}(\Delta y^*) \right\| \le \frac{M\epsilon^2}{2(1-\eigenvalue[\max][])m^2}$, where $\eigenvalue[\max][]$ is the largest eigenvalue of Jacobian $\jaco[f][][y^*][\tau_i]$ at $y^*$, and $M,m$ is the upper/lower bound on the norm of Hessian. Due to its $O(\epsilon^2)$ magnitude, this error is negligible when $\epsilon$ is small.
\end{restatable}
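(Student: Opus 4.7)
The plan is to treat the fixed-point equation $y = \dm[\tau][y]$ as an implicit equation in $y$ parameterized by $\tau$, apply a first-order Taylor expansion at $(\tau, y^*)$, and then bound the second-order remainder. Because $y^*$ is a stable fixed point, \cref{thm:repr} guarantees $|\eigenvalue[\max][]| < 1$, so $I - \partial f/\partial y(\tau, y^*)$ is invertible with operator-norm bound $\|(I - \partial f/\partial y)^{-1}\|_2 \le 1/(1 - |\eigenvalue[\max][]|)$. This is the resolvent needed to solve the linearized fixed-point equation for $\Delta y^*$.

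For the first-order relation, I would start from the perturbed fixed-point equation $y^* + \Delta y^* = \dm[\tau + \Delta\tau][y^* + \Delta y^*]$ and Taylor-expand the right-hand side around $(\tau, y^*)$:
\begin{align*}
y^* + \Delta y^* = \dm[\tau][y^*] + \frac{\partial f}{\partial \tau}(\tau, y^*)\,\Delta\tau + \frac{\partial f}{\partial y}(\tau, y^*)\,\Delta y^* + R,
\end{align*}
where $R$ is the second-order remainder. Using $y^* = \dm[\tau][y^*]$ and rearranging gives $(I - \partial f/\partial y)\,\Delta y^* = (\partial f/\partial \tau)\,\Delta\tau + R$, and applying $(I - \partial f/\partial y)^{-1}$ yields the stated first-order formula plus an explicit error term $\spname{Err}(\Delta y^*) = (I - \partial f/\partial y)^{-1} R$.

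For the quantitative $O(\epsilon^2)$ bound, I would bound $\|R\|$ via the standard quadratic Taylor inequality $\|R\| \le \tfrac{1}{2} M \,\|(\Delta\tau, \Delta y^*)\|^2$, where $M$ upper-bounds the Hessian norm of $\dm$ on the relevant segment. The hypothesis $\|\jaco[f][][\tau', y^{*}{}'] - \jaco[f][][\tau, y^{*}]\|_F < \epsilon$ then controls the size of the perturbation: expressing the Jacobian difference as a line integral of the Hessian along the segment joining $(\tau, y^*)$ to $(\tau', y^{*}{}')$ and invoking the lower bound $m$ on the Hessian norm produces $\|(\Delta\tau, \Delta y^*)\| \le \epsilon/m$. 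Chaining the Taylor remainder bound, this perturbation bound, and the spectral bound on $(I - \partial f/\partial y)^{-1}$ delivers
\begin{align*}
\|\spname{Err}(\Delta y^*)\| \le \frac{M\, \epsilon^2}{2(1 - |\eigenvalue[\max][]|)\, m^2},
\end{align*}
which is precisely the advertised inequality.

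The step I expect to be the main obstacle is extracting the reverse inequality $\|(\Delta\tau, \Delta y^*)\| \le \epsilon/m$ from the hypothesis on the Jacobian change. The mean-value form of the Jacobian difference naturally yields an upper bound on the Jacobian change in terms of the Hessian norm and the perturbation, so obtaining the reverse inequality requires $m$ to serve as a uniform lower bound on the relevant directional Hessian along the connecting segment. The cleanest way to formalize this is to view $\dm$ as a function of the joint variable $(\tau, y)$ and take $m$ to be a local lower bound on the smallest singular value of its full Hessian; the mean-value theorem then produces the desired reverse inequality and the remainder of the argument reduces to a routine perturbation-theoretic invocation of the implicit function theorem.
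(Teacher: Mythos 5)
Your proposal is correct and follows essentially the same route as the paper's proof: Taylor-expand the perturbed fixed-point equation, invert $I-\partial f/\partial y$ using the stability condition $|\eigenvalue[\max][]|<1$, convert the hypothesis $\|\jaco[f][][\tau',y^{*}{}']-\jaco[f][][\tau,y^{*}]\|_F<\epsilon$ into $\|(\Delta\tau,\Delta y^*)\|\le\epsilon/m$ via the mean-value theorem with the Hessian lower bound $m$, and bound the quadratic remainder by $M\epsilon^2/(2m^2)$ before applying the resolvent. Your remark that $m$ must be read as a lower bound on the smallest singular value of the full Hessian (so the mean-value identity can be reversed) is a point the paper glosses over, and is a welcome clarification rather than a divergence.
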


\begin{finding}\label{finding:llbeh}
    The major takeaway of~\cref{thm:lr} emerges from \cref{equ:lr}. This equation implies that a diffusion model behaves like a (locally) linear model $\Delta y^* \approx \jaco[][+][y^*][\tau] \Delta \tau$ with weights 
\begin{align}
    \jaco[][+][y^*][\tau]=\left(I-\frac{\partial f}{\partial y}(\tau,y^*)\right)^{-1} \frac{\partial f}{\partial \tau}(\tau,y^*)\label{equ:J+}
\end{align}
to approximate the shifts of fixed points when local history changes. 
\end{finding}
In \cref{thm:lrfc}, we expand \cref{equ:lr} in the special case where the \textcolor{black}{denoiser network} is a fully-connected network.
\begin{restatable}{corollary}{lrfc}\label{thm:lrfc}
If $\dm[\tau][y]=g\left(\sigma\left((W_\tau, W_y)\begin{pmatrix} \scriptstyle\tau \\ \scriptstyle y \end{pmatrix} + b\right)\right)$is a fully connected network. $\sigma(\cdot)$ is an element-wise activation function and $g(\cdot)$ represents the subsequent fully connected layers, which may introduce additional non-linearities  following the first layer, we have $\Delta y^* \approx \eves (I-\evas)^{-1} \evas \eves[][\top] W_y^+ W_\tau \Delta \tau$, where $\eves\evas\eves[][\top]$ is eigen-decomposition of Jacobian $\jaco[f][][y^*][\tau_i]$, $W_y^+$ is Moore–Penrose inverse $W_y^+ = W_y^\top(W_yW_y^\top)^{-1}$.
\end{restatable}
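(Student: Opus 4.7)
\textbf{Proof plan for Corollary \ref{thm:lrfc}.} The plan is to instantiate Theorem \ref{thm:lr} for the specific fully-connected form of $f_\theta$ and then algebraically regroup the Jacobians so that the $\tau$-gradient is expressed through the $y$-gradient. All non-trivial content sits in two matrix identities; Theorem \ref{thm:lr} does the analytic heavy lifting.

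First, I would compute the two Jacobians appearing in \eqref{equ:lr} by the chain rule. Writing the pre-activation as $z = W_\tau\tau + W_y y + b$ and denoting the effective linear map from the first hidden activation to the output by
\[
    M \;=\; \left.\tfrac{\partial g}{\partial u}\right|_{u=\sigma(z)}\,\operatorname{diag}\!\bigl(\sigma'(z)\bigr),
\]
the chain rule yields $\tfrac{\partial f}{\partial y}(\tau,y^{*}) = M W_y$ and $\tfrac{\partial f}{\partial \tau}(\tau,y^{*}) = M W_\tau$. The key observation is that these two Jacobians share the same left factor $M$, so the $\tau$-gradient factors through the $y$-gradient.

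Next, I would use the defining property of the Moore–Penrose right inverse $W_y^{+} = W_y^{\top}(W_y W_y^{\top})^{-1}$, namely $W_y W_y^{+} = I$ (this is the one technical assumption: $W_y$ must have full row rank, which holds whenever the first hidden layer is no wider than $|s|$; otherwise the identity holds on the image of $W_y$, which is all that the argument really needs). Then
\[
    \tfrac{\partial f}{\partial\tau}(\tau,y^{*}) \;=\; M W_\tau \;=\; M W_y\,W_y^{+} W_\tau \;=\; \jaco[f][][y^*][\tau]\,W_y^{+} W_\tau.
\]
Substituting this into \eqref{equ:lr} gives
\[
    \Delta y^{*} \;\approx\; \bigl(I - \jaco[f][][y^*][\tau]\bigr)^{-1}\,\jaco[f][][y^*][\tau]\,W_y^{+} W_\tau\,\Delta\tau.
\]

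Finally, I would plug in the eigen-decomposition $\jaco[f][][y^*][\tau] = \eves\evas\eves^{\top}$. Since $(I-\eves\evas\eves^{\top})^{-1} = \eves(I-\evas)^{-1}\eves^{\top}$ and $\eves^{\top}\eves = I$, the product $(I - J_f)^{-1}J_f$ collapses to $\eves(I-\evas)^{-1}\evas\eves^{\top}$, yielding the claimed expression
\[
    \Delta y^{*} \;\approx\; \eves\,(I-\evas)^{-1}\evas\,\eves^{\top}\,W_y^{+} W_\tau\,\Delta\tau.
\]

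The main obstacle, and really the only substantive modeling choice, is the factorization step $M W_\tau = M W_y W_y^{+} W_\tau$: it tacitly requires $W_y$ to have full row rank (so that $W_y W_y^{\top}$ is invertible and $W_y W_y^{+} = I$). If that fails, the identity only holds after projecting onto $\operatorname{range}(W_y)$, and one would have to argue that the component of $W_\tau \Delta\tau$ orthogonal to $\operatorname{range}(W_y^{\top})$ contributes negligibly — in practice absorbed into the $O(\epsilon^{2})$ remainder from Theorem \ref{thm:lr}. The eigen-decomposition step is straightforward provided $\jaco[f][][y^*][\tau]$ is diagonalizable; otherwise the same algebra goes through with a Jordan form, with $(I-\evas)^{-1}\evas$ acting blockwise.
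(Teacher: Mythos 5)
Your proposal is correct and follows essentially the same route as the paper's proof: the chain rule on the first layer exposing the shared left factor $M$, the factorization $\nabla_\tau f = M W_\tau = M W_y W_y^{+} W_\tau = \jaco[f][][y^*][\tau] W_y^{+} W_\tau$ via the Moore--Penrose formula, substitution into Theorem~\ref{thm:lr}, and collapsing $(I-\jaco[f][][y^*][\tau])^{-1}\jaco[f][][y^*][\tau]$ through the eigen-decomposition. Your explicit caveat that $W_y$ must have full row rank for $W_y W_y^{+}=I$ to hold (otherwise one only gets a projection onto $\operatorname{range}(W_y)$) is a condition the paper's proof uses silently, so your treatment is if anything slightly more careful.
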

\noindent Proved in \cref{appx:math:theorem:lr}, \cref{thm:lrfc} examines a specific network architecture, while the other analyses in this paper apply to any \textcolor{black}{denoiser} architecture. \cref{thm:lrfc} assumes $\jaco[f]$ is symmetric and non-negative, which is approximately true for learned denoisers~\cite{mohan2020robust} and can be proved to hold for the optimal denoiser~\cite{kadkhodaie2023generalization}. 
% The way to understand \cref{thm:lr} is to start with a fixed point $y^*$ associated with $\tau$.  Let $d=\fpd[\phi][][tau][s]$ be the error between $y^*$ and the true state. 

% As the detailed proof in \cref{appx:math:theorem:lr} shows, the key technique in proving 

Based on \cref{thm:lr}, we use proof by contradiction to show that diffusion models might not have enough capacity to represent all states accurately. We start with a local history $\tau$ and one of its consistent states $s$, assuming that the diffusion model conditioned on $\tau$ has enough capacity to exactly represent $s$ by a fixed point $y^*$. We then consider other histories near $\tau$: $\mathcal{T}_{\epsilon}=\{\tau'\mid\|\jaco[f][][\tau',y^{*}{}']-\jaco[f][][\tau,y^{*}]\|_F\le \epsilon\}$. Due to \cref{finding:llbeh}, the diffusion model is trained to (locally) solve the following optimization problem.
% Each $\tau'\in\mathcal{T}_{\epsilon}$ has a corresponding consistent state $s'$ and a fixed point $y^{*}{}'$.
% \begin{definition}[Surrogate Local Linear Regression Model]
% For local history $\tau$ and a consistent state $s$, let $\mathcal{M}_\epsilon=\{(\num{\tau}{k}, \num{s}{k})\}_{k=1}^{K_1}$ be a sample set where $\forall (\tau', s')\in \mathcal{M}_\epsilon$, $\|\jaco[f][][\num{\tau}{k},y^{*}]-\jaco[f][][\tau',y^{*}{}']\|_F$ $\le \epsilon$ and $\idx{s}{k}\in S_G(\idx{\tau}{k})$. We define the surrogate linear regression problem as:
%     \begin{align}
%     \mathcal{R}_\epsilon:\ \ \ \argmin_{W\in \sR^{|s|\times |\tau|}} \sum_{(\tau',s')\in\mathcal{M}_{\epsilon}}\|W(\tau-\tau')-(s-s')\|^2\label{equ:lr_model}.
% \end{align}
% \end{definition}
\begin{definition}[Surrogate Local Linear Regression Model]
For a local history $\tau$ and a consistent state $s$, let $\localdata[\epsilon][][][]\subset\mathcal{D}$ be a sample set containing $(\tau', s')$ in the training dataset $\mathcal{D}$ satisfying $\|\jaco[f][][\tau,y^{*}]-\jaco[f][][\tau',y^{*}{}']\|_F$ $\le \epsilon$. The surrogate linear regression problem is:
    \begin{align}
    \mathcal{R}_\epsilon:\ \ \ \argmin_{W\in \sR^{|s|\times |\tau|}} \sum_{(\tau',s')\in\localdata[\epsilon][][][]}\|W\Delta\tau-\Delta s\|^2\label{equ:lr_model},
\end{align}
where $\Delta\tau = \tau-\tau'$, $\Delta s=s-s'$. The residual error of the optimal solution to $\mathcal{R}_\epsilon$ is $\hat{R}_\epsilon$. The number of linearly independent $s'$ in $\localdata[\epsilon][][][]$ is $r(\localdata[\epsilon][][][]) = \dim \left( \text{span} \left\{ s' \mid (\tau', s') \in \localdata[\epsilon][][][] \right\} \right)$.
\end{definition}

\noindent Intuitively, given $\Delta \tau$, the denoiser network learns $\jaco[][+][y^*][\tau]$ to minimize the difference between $\Delta y^*$ and the groundtruth $\Delta s$. The surrogate $\mathcal{R}_\epsilon$ provides the best \emph{linear} solution to this optimization problem. The question is whether the denoiser network, locally, has enough capacity to perform better than this solution.
% We thus have the following observation about the relationship between $\mathcal{R}_\epsilon$ and the diffusion model.
\begin{finding}\label{thm:def:surrogate}
    If the \textcolor{black}{denoiser network} \dm~is over-parameterized with the maximum possible rank of $\jaco[][+][y^*][\tau]$ (\cref{equ:J+}) larger than the number of linearly independent state samples in the local regression problem $\mathcal{R}_\epsilon$:
    \begin{align}
        \operatorname{rank}(\jaco[][+][y^*][\tau])\ge r(\localdata[\epsilon][][][]),
    \end{align}
    then the linear regression problem $\mathcal{R}_\epsilon$ is underdetermined, indicating that the denoiser network has enough capacity to represent the history-state mapping $S_G$.
    
    On the other hand, if the \textcolor{black}{denoiser network} is under-parameterized, leaving $\operatorname{rank}(\jaco[][+][y^*][\tau])< r(\localdata[\epsilon][][][])$, then we have an over-determined regression problem that inevitably induces residual errors, leading to deviations of fixed points from states.
\end{finding}
\begin{evidence}
    We empirically verify our findings in the 5$\times$5 sensor network. In \cref{fig:R!0}-middle, we increase the number of states, so that $r(\localdata[\epsilon][][][])$ increases. With a fixed network size, we can see that $\fpd[\phi][][\tau][s]$ increases. We also find that increasing the network width can increase the rank of $\jaco[][+][y^*][\tau]$, and correspondingly decrease $\fpd[\phi]$ as shown in \cref{fig:R!0} left. This is not trivial as the input dimension is fixed and is smaller than the network width, which means the rank of $\jaco[][+][y^*][\tau]$ is actually upper bounded by the input dimension.
\end{evidence}

\subsection{Bounded Deviations}
% We are interested in the under-parameterized diffusion model where the local linear regression problem is over-determined, because it is directly related to the deviations of fixed points.
% Intuitively, for a given a set of samples $\mathcal{M}_\epsilon$, the residual error $\hat{R}_\epsilon$ of the optimal solution for the regression problem $\mathcal{R}_\epsilon$ has a closed-form expression. This residual error gives an estimation of $\fpd$ when the $\tau$ in $\mathcal{M}_\epsilon$ comes from a small enough proximity where $\|\jaco[f][][\tau',y^{*}{}']-\jaco[f][][\tau,y^{*}]\|_F\le \epsilon$. The following finding describes how the relationship between $\hat{R}_\epsilon$ and $\fpd$ depends on the value of $\epsilon$.
We now discuss how to bound the deviations $\fpd[\phi][]$.

\begin{finding}\label{finding:surrogate}
When $\epsilon$ is large, the expressivity of a diffusion model is more powerful than the surrogate regression model $\mathcal{R}_\epsilon$. This is because the \textcolor{black}{denoiser network} becomes more non-linear as the Jacobian changes significantly. In this case $\hat{R}_\epsilon$ is larger than $\fpd[\phi][]$, as $\hat{R}_\epsilon$ is the residual of a linear model, while $\fpd[\phi][]$ is the residual of a non-linear model. On the other hand, when we decrease $\epsilon$, the \textcolor{black}{denoiser network} is approaching linear and its capacity is getting close to the linear regression model, so $\fpd[\phi][]$ is getting close to $\hat{R}_\epsilon$.
\end{finding}

% where any $\tau$ and $\tau'$ in $\mathcal{M}_\epsilon$ satisfies $\|\jaco[f][][\tau',y^{*}{}']-\jaco[f][][\tau,y^{*}]\|_F\le \epsilon$

We formally present this finding in the following theorem.
\begin{restatable}{theorem}{fpdbound}[Upper Bounded Deviation]\label{thm:fpdbound}
Let $\mathcal{M} =\{(\num{\tau}{k}, \num{s}{k})\}_{k=1}^{K_1}$. If there exist $\tau$ and $\tau'$ in $\mathcal{M}$ where $\|\jaco[f][][\tau',y^{*}{}']-\jaco[f][][\tau,y^{*}]\|_F> \epsilon$, then for $(\tau,s)\in\mathcal{M}$, we have
\begin{align}
    \fpd[\phi][][\tau][s] < \hat{R}(\mathcal{M}) = \Tr(\Sigma_{s}) - \Tr(\Sigma_{s\tau}\Sigma^{-1}_{\tau}\Sigma_{\tau s}).
\end{align}
Here $\Sigma_s = \mathbb{E}_{s\sim\mathcal{M}}[(s -  \mathbb{E}[s])(s -  \mathbb{E}[s])^\top]$, 
cross covariance $\Sigma_{s\tau} = \mathbb{E}_{(\tau,s)\sim\mathcal{M}}[(s - \mathbb{E}[s])(\tau - \mathbb{E}[\tau])^\top]$, and $\Sigma_{s\tau}=\Sigma_{\tau s}^\top$.
$\hat{R}(\mathcal{M})$ is the residual error of the optimal linear regression model on $\mathcal{M}$.
\end{restatable}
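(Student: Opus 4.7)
The plan is to cast $\fpd[\phi][][\tau][s]$ as the residual of a non-linear regression problem induced by \dm, bound it above by the residual of the best global linear predictor on $\localdata$, then compute that optimal linear residual in closed form. The strict inequality will be forced by the hypothesis that two points in $\localdata$ have Jacobians more than $\epsilon$ apart, which rules out collapsing the denoiser to a single linear map on $\localdata$.

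First I would reinterpret the deviation. Since $\fpd[\phi][][\tau][s]=\|s-\flow[\infty][][\tau][\theta](s)\|$, it is precisely the pointwise error of the (non-linear) map from $\tau$ to the denoiser's converged estimate of the clean state. By Finding \ref{finding:llbeh} and Theorem \ref{thm:lr}, infinitesimally perturbing $\tau$ shifts the fixed point through the local linear operator $\jaco[][+][y^*][\tau]$, so \emph{across} $\localdata$ the denoiser stitches together a family of such local linearizations, one per neighborhood. Any fixed global $W\in\sR^{|s|\times|\tau|}$ is a strictly more restrictive parametrization of this family. Consequently the denoiser's residual on $\localdata$, and hence its pointwise residual at any $(\tau,s)\in\localdata$, is at most the residual of the best fixed $W$.

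Second I would compute that best linear residual in closed form. Differentiating $\sum_{(\tau,s)\in\localdata}\|W\tau-s\|^2$ yields the normal-equation minimizer $W^*=\Sigma_{s\tau}\Sigma_\tau^{-1}$; plugging back and taking trace gives
\begin{align*}
\hat{R}(\localdata)=\Tr(\Sigma_s)-2\Tr(W^*\Sigma_{\tau s})+\Tr(W^*\Sigma_\tau W^{*\top})=\Tr(\Sigma_s)-\Tr(\Sigma_{s\tau}\Sigma_\tau^{-1}\Sigma_{\tau s}),
\end{align*}
which matches the right-hand side of the claim. Together with the previous paragraph this already delivers the non-strict version $\fpd[\phi][][\tau][s]\le\hat{R}(\localdata)$.

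Finally I would upgrade $\le$ to the claimed strict $<$ using the hypothesis $\|\jaco[f][][\tau',y^{*}{}']-\jaco[f][][\tau,y^{*}]\|_F>\epsilon$. This certifies that the denoiser's local linearizations at two distinct points of $\localdata$ are genuinely different operators, so no single $W$ can simultaneously reproduce both; the denoiser's effective hypothesis class on $\localdata$ is therefore strictly richer than the linear class, and its minimum achievable loss sits strictly below $\hat{R}(\localdata)$. The main obstacle is making this strictness quantitative and transferring it from an average-loss statement to the pointwise inequality in the theorem. I plan to do this by Taylor-expanding \dm~simultaneously around the two Jacobian-distinct points and exhibiting an explicit data-dependent correction to $W^*$ that \dm~can realize but no linear map can, whose first-order effect on the loss is a positive quadratic form in the Jacobian-difference norm, yielding an $\Omega(\epsilon^2)$ strict gap that dominates each individual $\fpd[\phi][][\tau][s]$ in $\localdata$.
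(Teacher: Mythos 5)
Your proposal follows essentially the same route as the paper's proof: both bound the deviation by the residual of the optimal linear predictor on $\mathcal{M}$ (whose closed form is the standard least-squares expression you derive via the normal equations), and both obtain strictness from the hypothesis that two Jacobians in $\mathcal{M}$ differ by more than $\epsilon$, which forces the second-order (Hessian) term of the Taylor expansion to be non-negligible so that the trained denoiser is strictly more expressive than any single linear map. The obstacle you flag at the end --- making the strictness quantitative and passing from an aggregate residual to the pointwise bound --- is likewise left unaddressed in the paper's own (very brief) argument, so your plan is, if anything, more explicit than the published proof.
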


\noindent Proved in \cref{appx:math:theorem:fpdbound}, \cref{thm:fpdbound} upper bounds fixed point deviations by constructing 
a surrogate regression problem. By decreasing the $\epsilon$, we can tighten this upper bound, which helps prove the convergence of our composite diffusion process in the next section.

\begin{evidence}
We provide empirical evidence to support \cref{finding:surrogate} and \cref{thm:fpdbound}. The experiments are conducted on the 5$\times$5 sensor network (\cref{fig:R!0demo} (g.1)) and the \spname{zerg\_5\_vs\_5} map (\cref{fig:R!0demo} (g.2)) from the complex, highly stochastic SMACv2 benchmark~\citep{ellis2023smacv2}. In (g.1), two dashed horizontal lines show the residual errors $\hat{R}_\epsilon$ of the surrogate linear regression model under two different $\epsilon$ values. When $\epsilon=0.626$, the linear regression model exhibits weaker representational capacity, with $\hat{R}_{0.626}=1.6474$ significantly larger than the deviations of the fixed points. By contrast, when we decrease $\epsilon$ to 0.588, $\hat{R}_{0.588}$ provides a tight upper bound for the deviations. The case in SMACv2 is similar, \emph{indicating that surrogate models effectively approximate local behavior of diffusion models.}
\end{evidence}

% \begin{align}
%     \Delta y &= Q (I-\Lambda)^{-1} \Lambda Q^\top W_y^+ W_o \Delta o \\
%     &= Q (I-\Lambda)^{-1} \Lambda Q^\top W_y^+ W_o W_{s\rightarrow o}\Delta x
% \end{align}

\begin{figure*}[t]
    \centering
    \vspace{-1.5em}
    \includegraphics[width=\linewidth]{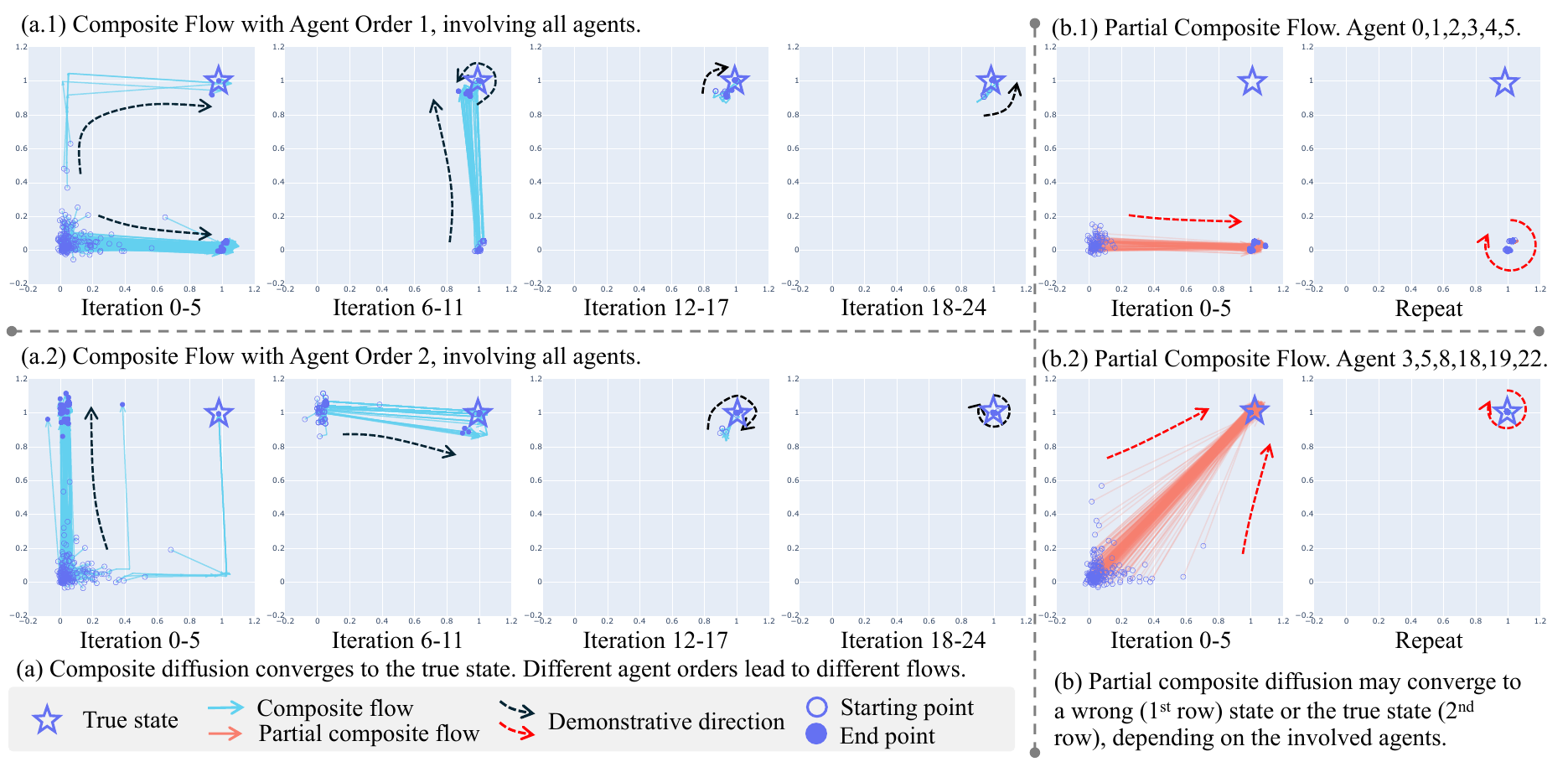}
    \vspace{-2.5em}
    \caption{Evolution of denoised state distributions (first two dimensions) during composite diffusion processes, initialized with various noisy states, in the 5$\times$5 sensor network. Each panel shows the changes (from open circles to closed circles) over 6 \textcolor{black}{denoising} iterations, with each iteration conditioned on the history of a single agent, \eg, in iteration 0-5, six agents in the corresponding order are involved. (a) Composite diffusion converges to the true state regardless of the agent ordering. (b) Partial composite diffusion may converge to incorrect states depending on the participating agents.\label{fig:composite_flow}}
    \vspace{-1em}
\end{figure*}
\section{Composite Diffusion}\label{sec:composite}

\begin{table*}[t]
    \caption{True state estimation errors measured in PSNR. Higher PSNR values indicate lower errors. Individual diffusion exhibits a provable performance gap compared to composite diffusion.\label{tab:composite-psnr}}
    \centering
    \vspace{-1em}
    \setlength{\tabcolsep}{4pt}
    \resizebox{0.9\textwidth}{!}{
    \begin{tabular}{CRCRCRCRCRCRCRCRCRCRCRCRCRCRCRCR}
        \toprule
        \multicolumn{2}{c}{\multirow{2}{*}{Setting}} &
        \multicolumn{2}{l}{\multirow{2}{*}{Alg.}} &
        \multicolumn{6}{c}{Network Width} &
        \multicolumn{6}{c}{\# Training Samples} &
        \multicolumn{6}{c}{Obs. History Length} &
        \multicolumn{4}{c}{Obs. Sight Range} &
        \multicolumn{6}{c}{Tasks} \\
        
        \cmidrule(lr){5-10}
        \cmidrule(lr){11-16}
        \cmidrule(lr){17-22}
        \cmidrule(lr){23-26}
        \cmidrule(lr){27-32}
        % \cmidrule(lr){5-14}
        \multicolumn{2}{c}{} &
        \multicolumn{2}{l}{} &
        \multicolumn{2}{c}{1024} 
        & \multicolumn{2}{c}{4096} 
        & \multicolumn{2}{c}{8192} 
        & \multicolumn{2}{c}{10} 
        & \multicolumn{2}{c}{100K} 
        & \multicolumn{2}{c}{500K} 
        & \multicolumn{2}{c}{1} 
        & \multicolumn{2}{c}{5} 
        & \multicolumn{2}{c}{10} 
        & \multicolumn{2}{c}{5} 
        & \multicolumn{2}{c}{9} 
        & \multicolumn{2}{c}{Zerg} 
        & \multicolumn{2}{c}{Terran} 
        & \multicolumn{2}{c}{Protoss} \\

        % \cmidrule(lr){1-2}
        % \cmidrule(lr){3-4}
        \cmidrule(lr){1-4}
        \cmidrule(lr){5-6}
        \cmidrule(lr){7-8}
        \cmidrule(lr){9-10}
        \cmidrule(lr){11-12}
        \cmidrule(lr){13-14}
        \cmidrule(lr){15-16}
        \cmidrule(lr){17-18}
        \cmidrule(lr){19-20}
        \cmidrule(lr){21-22}
        \cmidrule(lr){23-24}
        \cmidrule(lr){25-26}
        \cmidrule(lr){27-28}
        \cmidrule(lr){29-30}
        \cmidrule(lr){31-32}
        % \midrule
        % & \multicolumn{2}{l}{100\%} & \multicolumn{2}{l}{100\%} & \multicolumn{2}{l}{100\%} & \multicolumn{2}{l}{100\%} \\
        % \multicolumn{2}{c}{} & \multicolumn{2}{l}{}
        % \multicolumn{2}{c}{Ours} & \multicolumn{2}{l}{\name} & \multicolumn{2}{l}{\textbf{0.878}} & \multicolumn{2}{l}{\textbf{2.31}} & \multicolumn{2}{l}{\textbf{1.6748}} & \multicolumn{2}{l}{\textbf{3.1237}} & \multicolumn{2}{l}{\textbf{1.4294}}\\
        % \midrule
        \multicolumn{2}{c}{\multirow{2}{*}{Train}} & \multicolumn{2}{l}{Individual} & \multicolumn{2}{c}{13.28} & 
        \multicolumn{2}{c}{18.71} & 
        \multicolumn{2}{c}{28.20} & 
        \multicolumn{2}{c}{56.45} & 
        \multicolumn{2}{c}{28.20} & 
        \multicolumn{2}{c}{35.29} & 
        \multicolumn{2}{c}{26.39} & 
        \multicolumn{2}{c}{28.20} & 
        \multicolumn{2}{c}{31.86} & 
        \multicolumn{2}{c}{26.29} & 
        \multicolumn{2}{c}{28.20} & 
        \multicolumn{2}{c}{28.20} & 
        \multicolumn{2}{c}{29.01} & 
        \multicolumn{2}{c}{27.79} \\
        \multicolumn{2}{c}{} & \multicolumn{2}{l}{ Composite} & 
        \multicolumn{2}{c}{\textbf{15.98}} & 
        \multicolumn{2}{c}{\textbf{22.18}} & 
        \multicolumn{2}{c}{\textbf{30.77}} & 
        \multicolumn{2}{c}{\textbf{56.77}} & 
        \multicolumn{2}{c}{\textbf{30.77}} & 
        \multicolumn{2}{c}{\textbf{36.39}} & 
        \multicolumn{2}{c}{\textbf{27.88}} & 
        \multicolumn{2}{c}{\textbf{30.77}} & 
        \multicolumn{2}{c}{\textbf{33.13}} & 
        \multicolumn{2}{c}{\textbf{27.92}} & 
        \multicolumn{2}{c}{\textbf{30.77}} & 
        \multicolumn{2}{c}{\textbf{30.77}} & 
        \multicolumn{2}{c}{\textbf{30.68}} & 
        \multicolumn{2}{c}{\textbf{30.20}} \\
        % \multicolumn{2}{c}{} & \multicolumn{2}{l}{Lottery AMA} & \multicolumn{2}{l}{0.868} & \multicolumn{2}{l}{2.2354} & \multicolumn{2}{l}{\textbf{}} & \multicolumn{2}{l}{\textbf{}}\\
        % \multicolumn{2}{c}{} & \multicolumn{2}{l}{AMenuNet} & \multicolumn{2}{l}{0.8628} & \multicolumn{2}{l}{2.2768} & \multicolumn{2}{l}{1.6322} & \multicolumn{2}{l}{2.8005} & \multicolumn{2}{l}{1.2908}\\
        \cmidrule(lr){1-2}
        \cmidrule(lr){3-4}
        \cmidrule(lr){5-6}
        \cmidrule(lr){7-8}
        \cmidrule(lr){9-10}
        \cmidrule(lr){11-12}
        \cmidrule(lr){13-14}
        \cmidrule(lr){15-16}
        \cmidrule(lr){17-18}
        \cmidrule(lr){19-20}
        \cmidrule(lr){21-22}
        \cmidrule(lr){23-24}
        \cmidrule(lr){25-26}
        \cmidrule(lr){27-28}
        \cmidrule(lr){29-30}
        \cmidrule(lr){31-32}
        \multicolumn{2}{c}{\multirow{2}{*}{Test}} & \multicolumn{2}{l}{Individual} & \multicolumn{2}{c}{13.37} & 
        \multicolumn{2}{c}{15.37} & 
        \multicolumn{2}{c}{17.42} & 
        \multicolumn{2}{c}{11.32} & 
        \multicolumn{2}{c}{17.43} & 
        \multicolumn{2}{c}{23.22} & 
        \multicolumn{2}{c}{20.77} & 
        \multicolumn{2}{c}{17.43} & 
        \multicolumn{2}{c}{16.54} & 
        \multicolumn{2}{c}{15.94} & 
        \multicolumn{2}{c}{17.43} & 
        \multicolumn{2}{c}{17.42} & 
        \multicolumn{2}{c}{17.28} & 
        \multicolumn{2}{c}{17.24} \\
        \multicolumn{2}{c}{} & \multicolumn{2}{l}{Composite} & 
        \multicolumn{2}{c}{\textbf{16.07}} & 
        \multicolumn{2}{c}{\textbf{18.49}} & 
        \multicolumn{2}{c}{\textbf{20.40}} & 
        \multicolumn{2}{c}{\textbf{12.38}} & 
        \multicolumn{2}{c}{\textbf{20.40}} & 
        \multicolumn{2}{c}{\textbf{25.49}} & 
        \multicolumn{2}{c}{\textbf{23.54}} & 
        \multicolumn{2}{c}{\textbf{20.40}} & 
        \multicolumn{2}{c}{\textbf{18.94}} & 
        \multicolumn{2}{c}{\textbf{17.90}} & 
        \multicolumn{2}{c}{\textbf{20.40}} & 
        \multicolumn{2}{c}{\textbf{20.40}} & 
        \multicolumn{2}{c}{\textbf{18.59}} & 
        \multicolumn{2}{c}{\textbf{20.08}}\\
        % \cmidrule(lr){1-2}
        % \cmidrule(lr){3-4}
        % \cmidrule(lr){5-6}
        % \cmidrule(lr){7-8}
        % \cmidrule(lr){9-10}
        % \cmidrule(lr){11-12}
        % \cmidrule(lr){13-14}
        % \multicolumn{2}{c}{
        % \multirow{3}{*}{Ablations}} &  \multicolumn{2}{l}{Ours (Menu Size:)} & \multicolumn{2}{l}{Revenue} & \multicolumn{2}{l}{} & \multicolumn{2}{l}{\textbf{}}\\
        % \multicolumn{2}{c}{} & \multicolumn{2}{l}{Ours (Menu Size:)} & \multicolumn{2}{l}{Revenue} & \multicolumn{2}{l}{} & \multicolumn{2}{l}{\textbf{}} & \multicolumn{2}{l}{\textbf{}} & \multicolumn{2}{l}{\textbf{}}\\
        % \multicolumn{2}{c}{} & \multicolumn{2}{l}{Ours (Menu Size:)} & \multicolumn{2}{l}{Revenue} & \multicolumn{2}{l}{} & \multicolumn{2}{l}{\textbf{}} & \multicolumn{2}{l}{\textbf{}} & \multicolumn{2}{l}{\textbf{}}\\
        \toprule
    \end{tabular}
    }
    \vspace{-1em}
\end{table*}

As we discussed in \cref{sec:R!0}, when fixed points deviate from clean states, it is impractical to obtain true states by intersecting all agents' fixed points. Instead, we propose to use \textit{composite diffusion}.

\begin{definition}[Composite diffusion]
Let $[i_1,i_{2},\cdots, i_n] \in \mathcal{P}([n])$ be a permutation of $n$ agents. The composite diffusion conditioned on $\tau_{i_{1:n}}$ iteratively applies individual \textcolor{black}{denoiser models} based on each agent's history: $\dm[\tau_{i_{1:n}}][y] = f_{\theta}(\tau_{i_1}, f_{\theta}(\tau_{i_{2}},\cdots f_\theta(\tau_{i_{n}}, y)))$,
% Let $[i_1,i_{2},\cdots, i_n] \in \mathcal{P}([1,2,$ $\cdots,n])$ be a permutation of $[1,2,\cdots, n]$, where $n$ is the number of agents. The composite diffusion model conditioned on $\tau_{i_{1:n}}$ iteratively applies individual \textcolor{black}{denoiser models} based on each agent's history: $\dm[\tau_{i_{1:n}}][y] = f_{\theta}(\tau_{i_1}, f_{\theta}(\tau_{i_{2}},\cdots f_\theta(\tau_{i_{n}}, y)))$,
% f_{\theta}(\tau_{i_1})\circ f_{\theta}(\tau_{i_{2}})\circ \cdots \circ f_{\theta}(\tau_{n})(y)
% conditioned on local observation history $\tau_{i_1}, \tau_{i_2},\cdots, \tau_{i_n}$, where $[i_1,i_{2},\cdots, i_n] \in \mathcal{P}([1,2,\cdots,n])$ is defined by
inducing a \emph{discrete-time composite flow} $\flow[\ell][][\tau_{i_{1:n}}][\theta]: \sN\times\sR^{|s|}\rightarrow \sR^{|s|}$, \(\flow[\ell][][\tau_{i_{1:n}}][\theta](y) = f(\tau_{i_{(\ell)_n}}, \flow[\ell-1][][\tau_{i_{1:n}}][\theta](y) )\), where $(\ell)_n=(\ell \mod n)$, and $\flow[0][][\tau_{i_{1:n}}][\theta](y)=y$. 
\end{definition}

\noindent We then use composite diffusion to estimate true states.

% The posterior state distribution estimated by the composite diffusion is 
% \begin{align}
% \estpost[s][\tau_{i_{i:n}}] = \pusheq[\ell\rightarrow\infty][][\tau_{i_{1:n}}][\theta] p_0(y).\label{equ:approx_ps_comp}
% \end{align}

\subsection{Composite Diffusion Yields True States}

\noindent \textbf{Composite diffusion algorithm}.\ \ Our algorithm has two steps. Step 1 (Composite denoising): Sample a set of Gaussian noise $\{\ite{y}{k,0}\}_{k=0}^{K_2}$ from $\mathcal{N}(0,\sigma^2I)$. Apply composite diffusion to each $\ite{y}{k,0}$, resulting in a sequence of denoised states $(\ite{y}{k,1},\cdots,\ite{y}{k,L})$.

Step 2 (Condition check): Check whether the following conditions hold. (1) For the last $n$ elements in the sequence, $\ite{y}{k,\ell},L-n<\ell\le L$, the largest eigenvalue of the denoiser Jacobian at $\ite{y}{k,\ell}$ satisfies $|\eigenvalue[\max][]|<1$. (2) Apply the individual \textcolor{black}{denoiser} conditioned on each agent's history repeatedly to the last element $\ite{y}{k,L}$ until converge. The change in $\ite{y}{k,L}$ should be smaller than $2\fpd[\phi]$ (bounded by \cref{thm:fpdbound}). These condition checks guarantee convergence to the true state as proven in the following theorem and corollary.

% We provide a brief description of our composite diffusion algorithm. To allow effective composite diffusion, we require the trained network to have low approximation errors, \ie{}, the distance from the state to its nearest fixed point is short. 

% Let $\gF_s = \{y_1^*(s),y_2^*(s),\cdots,y_n^*(s)\}$ be the fixed point set near state $s$, where $y_i^*(s) = \flow[\infty][][\tau_{i}][\theta](s)$. Define the convex hull of $\gF_s$ as $Conv(\gF_s):=\left\{\sum_{y_i^*\in \gF_s}\beta_i y_i^*\mid \beta_i>0, \sum_i^n \beta_i = 1\right\}$.

% \begin{restatable}{theorem}{algoconv}\label{thm:algo-conv}
%     In collectively observable Dec-POMDPs, the composite diffusion algorithm approaches the true global state $s$ by converging to the convex hull formed by the fixed points of all agents near $s$: 
%     \begin{align}
%         \flow[\infty][][\tau_{i_{1:n}}][\theta](\ite{y}{0})\in Conv(\gF_s).
%     \end{align}
% \end{restatable}

\begin{restatable}{theorem}{algoconv}\label{thm:algo-conv}[Composite Diffusion Approaches True State]
    In collectively observable Dec-POMDPs, for a sequence $k$ satisfying the conditions in Step 2, the composite diffusion algorithm approaches the true state $s$ with an error bound $\max_{1\le i\le n} D_\phi(\tau_i, s)$.
    % \begin{align}
    %     \|\flow[\infty][][\tau_{i_{1:n}}][\theta](\ite{y}{0}) - s\| \le \max_{1\le i\le n} D_\phi(\tau_i, s).
    % \end{align}
    % \begin{align}
    %     \|\ite{y}{k,L} - s\| \le \max_{1\le i\le n} D_\phi(\tau_i, s).
    % \end{align}
    Specifically, it converges to the convex hull of the agents' fixed points near $s$. 
    % Let $\gF_s = \{y_1^*(s),y_2^*(s),\cdots,y_n^*(s)\}$ be the fixed point set near state $s$, $y_i^*(s) = \flow[\infty][][\tau_{i}][\theta](s)$. We have
    % \begin{align}
    %     \flow[\infty][][\tau_{i_{1:n}}][\theta](\ite{y}{0})\in Conv(\gF_s).
    % \end{align}
    % \begin{align}
        % \ite{y}{k,L}\in Conv(\gF_s).
    % \end{align}
    % Here $Conv(\gF_s)$ is $\gF_s$'s convex hull.
    % \begin{align}
    %     Conv(\gF_s):=\{\sum_{y_i^*\in \gF_s}\beta_i y_i^*| \beta_i\ge 0, \sum_i^n \beta_i = 1\}
    % \end{align}

\end{restatable}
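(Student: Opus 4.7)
The plan rests on three ingredients already established in the paper: by \cref{thm:repr} each denoiser $f_\theta(\tau_i,\cdot)$ is a local contraction in the basin of every stable fixed point; by \cref{thm:R0CO} the zero-error fixed point shared by all agents is precisely $s$, so under approximation errors each agent's nearby fixed point $y_i^*$ lies within $\|y_i^* - s\| \le D_\phi(\tau_i,s)$ of $s$ by \cref{def:fpd} and \cref{thm:fpdbound}; and the two checks in Step 2 force the composite trajectory to enter and remain in the common contracting neighborhood of all $y_i^*$.

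First I would linearize each single-agent denoiser around its own nearby fixed point, writing $f_\theta(\tau_i, y) = y_i^* + J_i (y - y_i^*) + o(\|y - y_i^*\|)$, with $\|J_i\| < 1$ guaranteed by condition (1) of Step 2. The one-cycle composite map $G(y) = f_\theta(\tau_{i_1}, f_\theta(\tau_{i_2}, \cdots f_\theta(\tau_{i_n}, y)))$ is then, to leading order, affine with linear part $J = J_{i_1}\cdots J_{i_n}$ satisfying $\|J\| < 1$. By the Banach fixed point theorem, $G$ has a unique fixed point $y^\infty$ in this neighborhood and the composite iterates $y^{(k,\ell)}$ converge to $y^\infty$ geometrically. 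Condition (2) of Step 2, which demands that any single-agent denoiser shifts $y^{(k,L)}$ by no more than $2D_\phi$, is precisely what certifies that $y^{(k,L)}$ already sits inside this contracting neighborhood, ruling out convergence to some other, far-away cluster of fixed points.

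Second I would locate $y^\infty$ inside the convex hull of $\{y_i^*\}$. Unrolling the affine recursion yields
\begin{align}
y^\infty = (I - J)^{-1} \sum_{\ell=1}^{n} \Bigl(\prod_{m < \ell} J_{i_m}\Bigr)(I - J_{i_\ell})\, y_{i_\ell}^*,
\end{align}
and the coefficient matrices telescope to $I - J$, so $(I - J)^{-1}$ multiplied by the sum of the coefficients is exactly the identity. When the $J_i$ approximately commute -- which holds near the shared fixed point in a CO setting, since by \cref{thm:R0CO} all agents agree on $s$ in the zero-error limit and the perturbed Jacobians are small perturbations of a common symmetric object as in \cref{thm:lrfc} -- these matrix coefficients reduce to non-negative scalar weights summing to one, placing $y^\infty$ in $\mathrm{conv}\{y_i^*\}$. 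The error bound then follows from convexity of the norm ball: $\|y^\infty - s\| \le \max_i \|y_i^* - s\| \le \max_i D_\phi(\tau_i, s)$.

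The main obstacle is the convex-hull claim in the fully non-commuting case; here I would fall back to a direct contraction estimate, $\|y^\infty - s\| \le \|G(s) - s\|/(1 - \|J\|)$, combined with $\|G(s) - s\| \le \sum_\ell \bigl\|\bigl(\prod_{m<\ell} J_{i_m}\bigr)(I - J_{i_\ell})\bigr\| \cdot \|y_{i_\ell}^* - s\|$, which still delivers the stated $\max_i D_\phi(\tau_i, s)$ bound up to a constant depending on $\|J\|$, even when strict convex-combination membership requires the symmetry assumption of \cref{thm:lrfc}. The higher-order $o(\|y-y_i^*\|)$ terms can be absorbed into the contraction radius by shrinking the neighborhood guaranteed by condition (2).
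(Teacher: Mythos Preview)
Your linearization-plus-Banach route is close in spirit to the paper's argument but differs in one structural choice and has one real gap.

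\textbf{Different modelling choice.} The paper does not work with the full Jacobian $J_i$. Instead it adopts a \emph{scalar} contraction model near each fixed point, writing $y_i^* - y_{\ell,i} = \alpha_{\ell,i}(y_i^* - y_{\ell,i-1})$ with $|\alpha_{\ell,i}|<1$, i.e.\ it assumes each single-agent denoiser pulls the iterate straight toward $y_i^*$. With scalar contractions the one-cycle update unrolls as $y^{(\ell)} = \sum_i w_{\ell,i}(1-\alpha_{\ell,i})y_i^* + w_{\ell,0}\,y^{(\ell-1)}$ where the $w$'s are products of the $\alpha$'s; the coefficients are automatically nonnegative scalars and telescope to $1$, so the convex-hull membership is immediate and the weight on $y^{(0)}$ vanishes as $\ell\to\infty$. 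Your matrix version is more honest about the local dynamics but, as you note, the matrix coefficients $(I-J)^{-1}\bigl(\prod_{m<\ell}J_{i_m}\bigr)(I-J_{i_\ell})$ are not convex weights without near-commutativity, and your fallback only gives the bound up to a $\|J\|$-dependent constant rather than the clean $\max_i D_\phi(\tau_i,s)$ the theorem states. The paper sidesteps this entirely via the scalar assumption; if you want the exact convex-hull conclusion you will need something equivalent.

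\textbf{Genuine gap.} The sentence ``Condition (2)\dots\ is precisely what certifies that $y^{(k,L)}$ already sits inside this contracting neighborhood, ruling out convergence to some other, far-away cluster'' is asserted, not proved, and this is where collective observability actually enters. The paper's argument (its Case~2) goes: suppose the checks pass but $y^{(\infty)}\notin B_s$; let $y_i^*=\flow[\infty][][\tau_i][\theta](y^{(\infty)})$ and $s_i=\argmin_{s'}\|s'-y_i^*\|$. If all $s_i$ were equal they would equal some $s'\neq s$ that is consistent with every $\tau_i$, contradicting CO. Hence there exist $i,j$ with $s_i\neq s_j$, and the triangle inequality through $y^{(\infty)}$ gives $\|s_i-s_j\| < D_\phi + 2D_\phi + 2D_\phi + D_\phi = 6D_\phi$, which violates the standing separation assumption $D_\phi < \tfrac{1}{6}\delta_s$. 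You invoke neither CO nor this separation condition, so your proposal does not yet rule out the trajectory latching onto a spurious cluster of fixed points away from $s$; this step must be supplied.
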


% converge to 
% error < max(||y*-s||)
% specifically, let Fs, then \in Conv
% tight bound in appx

% \begin{restatable}{corollary}{algoconvnon}\label{thm:algo-conv-non}
%     In non-collectively observable Dec-POMDPs, with sufficiently enough initial samples, the composite diffusion algorithm approaches every possible global state $s_j\sim p(s|\tau_{1:n})$: 
%     \begin{align}
%         \exists \ite{y}{0}_j, s.t.\ \flow[\infty][][\tau_{i_{1:n}}][\theta](\ite{y}{0}_j)\in Conv(\gF_{s_j})
%     \end{align}

% \end{restatable}

\begin{restatable}{corollary}{algoconvnon}\label{thm:algo-conv-non}
    In non-collectively observable Dec-POMDPs, with sufficiently enough initial samples, the composite diffusion algorithm approaches every possible global state $s$ consistent with joint history $\tau_{i_{1:n}}$ with an upper error bound $\max_{1\le i\le n} D_\phi(\tau_i, s)$.
    % \begin{align}
        % \exists \ite{y}{0}, s.t.\ \|\flow[\infty][][\tau_{i_{1:n}}][\theta](\ite{y}{0}) - s\| \le .
    % \end{align}
    % \begin{align}
    %     \exists \ite{y}{0}_j, s.t.\ \flow[\infty][][\tau_{i_{1:n}}][\theta](\ite{y}{0}_j)\in Conv(\gF_{s_j})
    % \end{align}
\end{restatable}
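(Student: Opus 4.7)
The plan is to reduce \cref{thm:algo-conv-non} to a state-wise application of \cref{thm:algo-conv} by partitioning the sampling space into basins of attraction around the shared fixed points. First, I invoke \cref{thm:R0NonCO} to enumerate the consistent states $\{s^{(1)},\dots,s^{(m)}\}=\{s\mid p(s\mid\bm\tau)>0\}$; in the zero-error case these are exactly the shared fixed points across all individual denoisers, and under approximation errors each agent $i$ still retains a local fixed point $y_{i,k}^{*}$ near $s^{(k)}$ whose distance from $s^{(k)}$ equals $\fpd[\phi][][\tau_i][s^{(k)}]$ by \cref{def:fpd}, with magnitude bounded by the surrogate residual in \cref{thm:fpdbound}.

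Next, I argue that the local dynamics of the composite flow near each $s^{(k)}$ reproduce the situation analyzed in \cref{thm:algo-conv} for the CO case. Concretely, if an initial sample $\ite{y}{k,0}$ lies in the joint basin of attraction of $\{y_{i,k}^{*}\}_{i=1}^{n}$, which is non-empty because each $y_{i,k}^{*}$ is a stable attractor by \cref{thm:repr}, then the eigenvalue certificate $|\eigenvalue[\max][]|<1$ from Step~2 verifies entrance into a contractive neighborhood. Within this neighborhood, iterating through the agents' denoisers produces per-agent updates bounded by \cref{thm:fpdbound} applied to the local regression problem anchored at $s^{(k)}$, and the convex-hull contraction argument underlying \cref{thm:algo-conv} yields the same order-invariant error bound $\max_{1\le i\le n}\fpd[\phi][][\tau_i][s^{(k)}]$ relative to $s^{(k)}$.

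Finally, the ``sufficiently many initial samples'' hypothesis is used to secure basin coverage: with $K_{2}$ large enough, the prior $\mathcal{N}(0,\sigma^{2}I)$ places at least one $\ite{y}{k,0}$ in each basin with probability approaching one, so the algorithm produces a trajectory converging to every consistent $s^{(k)}$ within the stated bound. The main obstacle I expect is rigorously justifying the basin decomposition: I must show that composite trajectories do not cross between basins of distinct consistent states during the agent-cycled iteration, and that the surrogate regression underpinning \cref{thm:fpdbound} remains well-posed when restricted to a neighborhood of a single $s^{(k)}$ so that $\fpd[\phi][][\tau_i][s^{(k)}]$ is the correct per-state quantity to aggregate. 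A secondary subtlety is to check that the order-invariance portion of the proof of \cref{thm:algo-conv} never implicitly invokes the global uniqueness of the CO intersection guaranteed by \cref{thm:R0CO}, and therefore transfers intact to each local neighborhood identified above.
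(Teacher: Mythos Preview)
Your proposal is correct and follows essentially the same approach as the paper: enumerate the consistent states via \cref{thm:R0NonCO}, associate to each $s^{(k)}$ the cluster of per-agent fixed points $y_{i,k}^*$, and then reuse the local convergence argument from \cref{thm:algo-conv} inside each basin, with the sampling hypothesis guaranteeing at least one initial noise in every basin. The paper's proof is in fact briefer than yours: it simply defines $B_{s_j}=\bigcap_i B(y_i^*(s_j),D_\phi)$, assumes $\ite{y}{0}_j\in B_{s_j}$, and invokes the Case~1 convex-hull argument of \cref{thm:algo-conv} verbatim, so your worry about basin-crossing is not separately treated (the separation assumption $D_\phi<\tfrac{1}{6}\delta_s$ inherited from \cref{thm:algo-conv} is what makes the basins disjoint), and your concern about order-invariance is moot since the Case~1 argument never uses the CO uniqueness of \cref{thm:R0CO}.
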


\noindent \cref{thm:algo-conv} and \cref{thm:algo-conv-non} are proved in \cref{appx:math:theorem:algo-conv}. They highlight the advantages of composite diffusion summarized as follows.

\begin{finding}
Unlike individual diffusion conditioned on the history of a single agent, composite diffusion can resolve the uncertainty when there are multiple fixed points. Even in the simple case with only one fixed point, composite diffusion can better estimate the true state, as proved in the following theorem (details in~\cref{appx:math:theorem:accucom}).

% , \ie{}, $\|\flow[\infty][][\tau_{i_{1:n}}][\theta](\ite{y}{0}) - s\| \le \|\flow[\infty][][\tau_{i}][\theta](\ite{y}{0}) - s\|$, in most cases. 
% Full composite diffusion $\dm[\tau_{i_{1:n}}][y]$ converges a more accurate global state estimation than individual diffusion $\dm[\tau_{i}][y]$, \ie{}, $\pusheq[\ell\rightarrow\infty][][\tau_{i_{1:n}}][\theta] p_0(y)$ is closer to the true global state $s$ than $\pusheq[\ell\rightarrow\infty][][\tau_i][\theta] p_0(y)$.
\end{finding}

\begin{restatable}{theorem}{accucom}\label{thm:accu-com}
When there is only one fixed point for the diffusion model conditioned on $\tau_i, i\in[n]$, assume that the final element $\ite{y}{L}$ distributes uniformly, composite diffusion provides a more accurate global state estimation than individual diffusion:
\begin{align}
    \E_{\ite{y}{L}}[\|\ite{y}{L}-s\|] \le \frac{1}{|\gF_s|}\sum\nolimits_{y_i^*\in \gF_s} \|y_i^* - s\|.
\end{align}
\end{restatable}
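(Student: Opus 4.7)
\textbf{Proof proposal for Theorem~\ref{thm:accu-com}.}
My plan is to treat the right-hand side as the average individual-diffusion error across agents and show that the composite estimator, which by Theorem~\ref{thm:algo-conv} lies in the convex hull of $\gF_s$, inherits this bound via Jensen's inequality applied to the norm. First, I would invoke Theorem~\ref{thm:algo-conv} in the single-fixed-point regime to conclude that the terminal iterate $\ite{y}{L}$ of composite diffusion lies in the convex hull $\operatorname{conv}(\gF_s)$, where $\gF_s = \{y_i^*\}_{i=1}^n$ collects one fixed point per agent near $s$. Thus there exist (random) barycentric coefficients $\lambda_i(\ite{y}{L}) \ge 0$ with $\sum_i \lambda_i(\ite{y}{L}) = 1$ such that $\ite{y}{L} = \sum_i \lambda_i(\ite{y}{L})\, y_i^*$.

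Second, I would use convexity of the norm to obtain the pointwise bound
\begin{align*}
\|\ite{y}{L} - s\| = \Bigl\| \sum_i \lambda_i(\ite{y}{L})\,(y_i^* - s) \Bigr\| \le \sum_i \lambda_i(\ite{y}{L})\, \|y_i^* - s\|.
\end{align*}
Taking expectation with respect to the distribution of $\ite{y}{L}$ and interchanging sum and expectation gives
\begin{align*}
\E_{\ite{y}{L}}[\|\ite{y}{L} - s\|] \le \sum_i \E_{\ite{y}{L}}[\lambda_i(\ite{y}{L})]\, \|y_i^* - s\|.
\end{align*}
Third, I would leverage the uniformity assumption to show $\E[\lambda_i(\ite{y}{L})] = 1/|\gF_s|$ for every $i$. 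The cleanest version is to interpret \emph{uniform} as saying that the random barycentric weights are exchangeable (e.g.\ uniform on the probability simplex, or the discrete uniform over vertices), so that symmetry forces equal means that must sum to $1$; alternatively, if uniformity refers to the Lebesgue-uniform law on a simplex spanned by $\gF_s$, the same conclusion follows because the centroid of a simplex under Lebesgue measure coincides with the centroid of its vertices. Substituting $\E[\lambda_i] = 1/|\gF_s|$ into the displayed inequality yields exactly the claimed bound.

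The main obstacle, and the step I would handle carefully, is step three: the theorem's statement is terse about what ``distributes uniformly'' means, so I would make the assumption precise (exchangeability of the barycentric coordinates suffices) and point out that any of the natural readings yield the equal-mean property by a symmetry argument. A secondary subtlety is that the convex-hull representation of $\ite{y}{L}$ need not be unique when $|\gF_s| > |s|+1$; I would resolve this by selecting any measurable choice of coefficients (e.g.\ via Carath\'eodory or by fixing a canonical decomposition), since the inequality only depends on the existence of \emph{some} valid $\{\lambda_i\}$ with the stated mean. Steps one and two are then routine consequences of Theorem~\ref{thm:algo-conv} and convexity of $\|\cdot\|$.
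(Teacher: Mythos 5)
Your proposal is correct and follows essentially the same route as the paper's proof: express $\ite{y}{L}$ as a convex combination of the fixed points in $\gF_s$, bound $\|\ite{y}{L}-s\|$ by the triangle inequality, and use symmetry of the uniform law to get $\E[\beta_i]=1/|\gF_s|$. Your added care about what ``uniform'' means and about the non-uniqueness of barycentric coordinates when $|\gF_s|>|s|+1$ is a reasonable tightening of a point the paper glosses over, but it does not change the argument.
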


\begin{evidence}
We evaluate the accuracy of composite diffusion (measured in peak signal-to-noise ratio, PSNR,~\citep{wang2004image}) against individual diffusion on SMACv2 \citep{ellis2023smacv2}. Training data is collected by running MAPPO \citep{yu2022surprising} (see \cref{appx:emp}). For a fair comparison, both methods use the same number of denoising steps. Composite diffusion achieves higher PSNR (indicating lower errors) across all test cases, which examine various factors that can influence diffusion processes. This gap is provable (\cref{thm:accu-com}) even when individual diffusion adopts more powerful network architectures like in \citep{xu2024beyond}.

\end{evidence}

\noindent When agents share common information (\eg, a portion of a state visible to all agents~\cite{nayyar2013decentralized}), applying composite diffusion only to private information can reduce its overhead. We next discuss overhead reduction in general cases.

\subsection{Partial Composite Diffusion}
Composite diffusion requires a communication chain in which each agent receives the output of the preceding agent's \textcolor{black}{denoiser network} and sends its own denoising output to the subsequent agent. These messages reside in $\mathbb{R}^{|s|}$. In very large systems, it is possible to trade off this communication overhead against state estimation accuracy by involving only a subset of agents in composite diffusion. 
We thereby define \textit{partial} composite diffusion $\dm[\tau_{i_{1:k}}][\cdot]$ and \textit{partial} composite flow $\flow[\ell][][\tau_{i_{1:k}}][\theta]$, in which $k<n$ and $[i_1,i_2,\cdots,i_k] \in \sP([k])$ is a permutation of the considered $k$ agents. We analyze its convergence property in \cref{thm:algo-conv-partial}.
% We thereby define \textit{partial} composite diffusion $\dm[\tau_{i_{1:k}}][\cdot]$ and \textit{partial} composite flow $\flow[\ell][][\tau_{i_{1:k}}][\theta]$, in which $k<n$ and $[i_1,i_2,\cdots,i_k] \in \sP([j_1,j_2,\cdots,j_k])$ is any of the permutations of the sub-sequence $[j_1,j_2,\cdots,j_k]$, where $1\le j_1 < j_2 < \cdots < j_k \le n$. We can adapt the composite diffusion algorithm and prove its convergence in \cref{thm:algo-conv-partial}.

% Suppose only part of the agents are involved in the denoising process, which could be possible when communications are limited and only neighboring agents are allowed to share their diffusion models. 

% \begin{restatable}{corollary}{algoconvpartial}\label{thm:algo-conv-partial}
%     With sufficiently enough initial samples, the partial composite diffusion algorithm approaches every possible global state $s_j\sim p(s|\tau_{i_{1:k}})$, where $k<n$:
%     \begin{align}
%         \exists \ite{y}{0}_j, s.t.\ \flow[\infty][][\tau_{i_{1:k}}][\theta](\ite{y}{0}_j)\in Conv(\gF_{s_j}).
%     \end{align}
% \end{restatable}
\setcounter{theorem}{7}
\setcounter{corollary}{1}
\begin{restatable}{corollary}{algoconvpartial}\label{thm:algo-conv-partial}
    With sufficient initial samples, the partial composite diffusion algorithm approaches every possible global state $s$ consistent with $\tau_{i_{1:k}}, k<n$, with an upper error bound $\max_{1\le t \le k} D_\phi(\tau_{i_t}, s)$.
    % \begin{align}
        % \exists \ite{y}{0}, s.t.\|\flow[\infty][][\tau_{i_{1:k}}][\theta](\ite{y}{0}) - s\| \le \max_{1\le t \le k} D_\phi(\tau_{i_t}, s).
    % \end{align}
    % \begin{align}
    %     \exists \ite{y}{0}_j, s.t.\ \|\flow[\infty][][\tau_{i_{1:k}}][\theta](\ite{y}{0}_j) - s\| \le \max_{\tau\in \{\tau_{i_1},\tau_{i_2},\cdots,\tau_{i_k}\}} D_\phi(\tau, s).
    % \end{align}
    % \begin{align}
    %     \exists \ite{y}{0}_j, s.t.\ \flow[\infty][][\tau_{i_{1:k}}][\theta](\ite{y}{0}_j)\in Conv(\gF_{s_j}).
    % \end{align}
\end{restatable}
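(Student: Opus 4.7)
The plan is to reduce the partial composite diffusion over the $k$ chosen agents to an instance of the full composite diffusion already handled by \cref{thm:algo-conv} and \cref{thm:algo-conv-non}, but applied to a \emph{virtual subsystem} in which only the agents $i_1,\ldots,i_k$ participate. The key observation is that each individual denoiser $f_\theta(\tau_{i_t},\cdot)$ is defined purely by its local conditioning $\tau_{i_t}$, so composing only a subset of the trained denoisers is structurally identical to a complete composite flow defined over a smaller agent set---nothing in the fixed-point algebra of \cref{thm:repr}, \cref{thm:R0CO}, or \cref{thm:R0NonCO} relied on the cardinality of that set.

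First, I would formalize the subsystem: the reduced joint history $\tau_{i_{1:k}}$ induces a (generally non-collectively observable, even when the full Dec-POMDP is CO) history-state mapping $S_G^{(k)}(\tau_{i_{1:k}}) = \{s\mid p(s\mid \tau_{i_{1:k}})>0\}$. Applying \cref{thm:R0NonCO} to the intersection $\bigcap_{t=1}^{k}\fpset[\phi][][\tau_{i_t}]$ of the $k$ agents' stable fixed-point sets shows that this intersection is exactly $S_G^{(k)}(\tau_{i_{1:k}})$---collapsing to a singleton precisely when the subsystem remains CO (in which case \cref{thm:R0CO} applies), and to a larger discrete set otherwise. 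Hence every state the partial diffusion can possibly converge to is a state consistent with $\tau_{i_{1:k}}$, and under the ``sufficient initial samples'' hypothesis the prior $p_0$ puts positive mass in each attractor's basin of attraction, so every such state is actually realized by the algorithm.

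Second, the convergence and error analysis from \cref{thm:algo-conv} transfers verbatim after the index range is cut from $[n]$ to $[k]$. The Step 2 eigenvalue condition on the last $k$ iterates guarantees that the composite iteration is a local contraction near a shared attractor, and the bounded-change check together with the deviation bound from \cref{thm:fpdbound} controls how far the converged iterate can drift from a true consistent state $s$. Because a single composition step by agent $i_t$ displaces the iterate by at most its own deviation $\fpd[\phi][][\tau_{i_t}][s]$, and the composite flow's limit sits inside the convex hull of the $k$ individually deviated fixed points near $s$, the overall error is upper bounded by $\max_{1\le t\le k}\fpd[\phi][][\tau_{i_t}][s]$, as claimed.

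The main obstacle, and where I would spend most of the care, is justifying the reduction to the subsystem rigorously: the denoiser $f_\theta$ is trained on data from the full Dec-POMDP, so one must verify that its fixed-point behavior conditioned on $\tau_{i_t}$ alone depends only on $\tau_{i_t}$ and does not tacitly encode anything about the $n-k$ omitted agents. This holds by construction since each individual denoiser only reads its own $\tau_{i_t}$, but the argument should make explicit that the set-intersection reasoning underlying \cref{thm:R0CO,thm:R0NonCO} applies identically to any subset of conditioning histories, not only to the full joint one. Once this is established, \cref{thm:algo-conv-partial} is a direct specialization of \cref{thm:algo-conv-non} to the subsystem induced by the $k$ participating agents.
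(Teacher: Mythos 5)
Your proposal is correct and follows essentially the same route as the paper: the paper likewise treats the $k$ participating agents as a (generally non-collectively observable) subsystem, defines the set of states consistent with $\tau_{i_{1:k}}$ and the $k$ deviated fixed points near each such state, and reruns the existence and convergence argument of \cref{thm:algo-conv} with the index range cut from $n$ to $k$, concluding that the limit lies in the convex hull of those fixed points and hence within $\max_{1\le t\le k} D_\phi(\tau_{i_t},s)$ of $s$. Your added care about the denoiser depending only on its own conditioning history is sound but is taken for granted in the paper's proof.
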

\setcounter{theorem}{8}

\begin{finding}
Composite diffusion $\dm[\tau_{i_{1:n}}][y]$ converges to the true global state, while partial composite diffusion $\dm[\tau_{i_{1:k}}][y], k<n$ may converge to wrong states, depending on the participating agents.
\end{finding}

\begin{evidence}
\cref{fig:composite_flow} shows the evolution of denoised state distributions (focusing on the first two dimensions) during (partial) composite diffusion processes in the 5$\times$5 sensor network. Initial states are sampled from $\mathcal{N}(0,I)$. Composite flows reliably discover the true state regardless of the agent ordering. In contrast, a partial composite flow stabilizes at a state consistent with participants' histories, with the accuracy depending on the participating agents.

% $\tau_{0:6}$, but deviates from the true state. partial composite flow may also find the true state, \eg, when agents \{3,5,8,18,19,22\} participate.

% One interesting phenomenon here is that both full and partial composite flows do not converge to a single point in state space, but to a limit circle around the state. 

% To illustrate the diffusion process and how the flow works, we showcase the evolution of state distributions during the composite diffusion process in \cref{fig:composite_flow}. The initial state distribution (leftmost in \cref{fig:composite_flow}) is pure Gaussian noise $\mathcal{N}(0,I)$. Then we consider two kinds of composite flows: composite flow $\flow[\ell][][\tau_{i_{1:n}}][\theta]$ (the first two rows), which contains all agents' diffusion models $\dm[\tau_{1:n}][y]$, and partial composite flow $\flow[\ell][][\tau_{i_{1:k}}][\theta], k<n$ (the third row), which includes fewer agents' diffusion models $\dm[\tau_{1:k}][y], k<n$. A full composite flow with any composing agent order pushes the initial noisy state distribution to the true global state. However, a partial composite flow may result in an incorrect state, which is one possible state for the included agents in the partial composite flow, but not the one for the excluded agents. One interesting phenomenon here is that both full and partial composite flows do not converge to a single state point, but to a circle around the global state. 
\end{evidence}

\section{Closing Remarks}

% In this paper, we investigate how diffusion models represent global states based on local histories, bound the errors of these representations, and explore methods to mitigate the influence of errors and optimally determine true states. 

This paper provides the first rigorous understanding of how deep learning models and their approximation errors can impact agents’ handling of PO in Dec-POMDPs. We expect that this work can establish a general framework for addressing the challenges posed by PO across various multi-agent sub-fields. As an initial demonstration of these possibilities, \cref{appx:policylearning} provides an example where integrating diffusion models with policy learning can further enhance the performance of multi-agent RL algorithms, such as MAPPO~\cite{yu2022surprising}.

% We aim for this study to establish a solid theoretical foundation for utilizing diffusion models as 

%%% The acknowledgments section is defined using the "acks" environment
%%% (rather than an unnumbered section). The use of this environment 
%%% ensures the proper identification of the section in the article 
%%% metadata as well as the consistent spelling of the heading.

\begin{acks}
We sincerely thank Stefano Ermon, Dheeraj Nagaraj, Lingkai Kong, and the anonymous reviewers of AAMAS 2025 for their valuable inputs, which significantly helped improve the quality of this paper.
\end{acks}

%%%%%%%%%%%%%%%%%%%%%%%%%%%%%%%%%%%%%%%%%%%%%%%%%%%%%%%%%%%%%%%%%%%%%%%%

%%% The next two lines define, first, the bibliography style to be 
%%% applied, and, second, the bibliography file to be used.

\balance 
\bibliographystyle{ACM-Reference-Format} 
\bibliography{sample}

% \begin{multicols}{2} 
%     \balance 
%     \bibliographystyle{ACM-Reference-Format} 
%     \bibliography{sample} 
% \end{multicols}

\newpage
\appendix
\section{Mathematical Derivations}\label{appx:math}

\subsection{Miyasawa Relationships}\label{appx:math-miyasawa}

A clear connection between the score function and the MMSE estimator for a signal corrupted by additive Gaussian noise was initially established in~\cite{miyasawa1961empirical} and later extended in~\cite{raphan2011least,efron2011tweedie}. We provide a derivation for the case conditioned on local history.
\begin{align}
    \nabla \log p(y|\tau_i) &=\int p(s|\tau_i)p(y|s,\tau_i)\nabla_y \log p(y|s,\tau_i) ds \Big/p(y|\tau_i)\nonumber \\
    &= \int p(s|y,\tau_i)\nabla_y \log p(y|s,\tau_i) ds \\
    &= \mathbb{E}\left[\nabla_y \log p(y|s,\tau_i)\big| y,\tau_i\right]
\end{align}

Since $y\sim \mathcal{N}(s,\sigma^2I)$, we know 
\begin{align}
    \log p(y|s,\tau_i)=-\frac{1}{2\sigma^2}\|y-s\|^2-\log\left(\sqrt{2\pi\sigma^2}\right),
\end{align}
\begin{align}
    \nabla_y\log p(y|s,\tau_i)=-\frac{1}{\sigma^2}(y-s).
\end{align}
Therefore,
\begin{align}
    \nabla \log p(y|\tau_i) &= \mathbb{E}\left[-\frac{1}{\sigma^2}(y-s)\big| y,\tau_i\right] \\
    &= \frac{1}{\sigma^2}\left(\mathbb{E}_s\left[s\big| y,\tau_i\right]-y\right).
\end{align}
The optimal \textcolor{black}{denoising network} satisfies
\begin{align}
    \optdm[\tau_i][y]=y+\sigma^2 \nabla \log p(y|\tau_i) = \mathbb{E}_s\left[s\big| y,\tau_i\right].
\end{align}

\subsection{Converged Individual Diffusion Processes}\label{appx:math:theorem:fp}
We put the proofs of the following two theorems together.
\converge*
\repr*

\begin{proof}
We have shown that the diffusion model is approximating the expectation $\mathbb{E}\left[s\big| y,\tau_i\right]$. When there are no approximation errors, we have

\begin{align}
\dm[\tau_i][y] = \mathbb{E}\left[s\big| y,\tau_i\right]= \int_s s\cdot p(s | \tau_i, y)ds,
\end{align}
where
\begin{align}
p(s | \tau_i, y) = \frac{p(y | s,\tau_i) p(s|\tau_i)}{\int_s p(y | s,\tau_i) p(s|\tau_i)ds} = \frac{w_s(y)}{\int_s w_s(y)ds}
\end{align}
and
\begin{align}
w_s(y) = p(s|\tau_i) \exp\left( -\frac{\| y - s \|^2}{2\sigma^2} \right)
\end{align}

\noindent We now try to express \( \dm[\tau_i][y] \) as a gradient ascent step. Define the function
\begin{align}
L(y) = \log \left( \int_{s} w_s(y)ds \right).
\end{align}
Compute the gradient of \( L(y) \)

\begin{align}
\nabla_y L(y) &= \frac{1}{\int_s w_s(y)ds} \int_s w_s(y) \left( -\frac{(y - s)}{\sigma^2} \right) \\
&= -\frac{1}{\sigma^2}\int_s(y - s) \cdot \frac{w_s(y)}{\int_s w_s(y)ds}ds
\end{align}
It follows that

\begin{align}
\sigma^2 \nabla_y L(y) = -\int_s (y - s) \cdot p(s | \tau_i, y)ds
\end{align}
Rewriting \( \dm[\tau_i][y] \):

\begin{align}
\dm[\tau_i][y] &= \int_s s \cdot p(s | \tau_i, y)ds\\
&= y + \int_s (s - y) \cdot p(s | \tau_i, y)ds \\
&= y + \sigma^2 \nabla_y L(y)
\end{align}

It is obvious that the update rule of repeatedly applying the \textcolor{black}{denoising network} \( \ite{y}{k+1} = \dm[\tau_i][\ite{y}{k}]) \) is:

\begin{align}
\ite{y}{k+1} = \ite{y}{k} + \sigma^2 \nabla_y L(\ite{y}{k}) 
\end{align}
This is a gradient ascent step on the function \( L(y) \) with a fixed step size \( \sigma^2 \).

The function  \( L(y) \) has the following properties:

(1) \( L(y) \) is infinitely differentiable, as it is composed of exponential and logarithmic functions of smooth arguments. (2) Each term \( w_s(y) \) is log-concave in \( y \) because it contains the exponential of a negative quadratic form. Therefore, \( L(y) \) is the logarithm of a sum of log-concave functions, which is itself log-concave. (3) When $\sigma$ is small, function \( L(y) \) attains local maxima at \( y = s \), since the terms \( w_s(y) \) are maximized when \( y = s \).

Under standard assumptions for gradient ascent on smooth, concave functions, the sequence \( \{ y_k \} \) converges to a critical point (local maximum) of \( L(y) \).

The critical points of \( L(y) \) occur where \( \nabla_y L(y) = 0 \). Setting the gradient to zero:

\begin{align}
      \nabla_y L(y) = 0 \implies \int_s (y - s) \cdot p(s | \tau_i,y)ds = 0
\end{align}
This simplifies to:

\begin{align}
  y = \int_s s \cdot p(s | \tau_i,y)ds = \dm[\tau_i][y]
\end{align}

So, at critical points, \( y = \dm[\tau_i][y] \), meaning \( y \) is a fixed point of the iteration. 

We also note that, since each step is a gradient ascent, \( L(y_{k+1}) \geq L(y_k) \). \( L(y) \) is also bounded above because \( \int_s w_s(y) \) is finite. By the monotone convergence theorem, \( L(y_k) \) converges to some finite value \( L^* \), and the sequence cannot oscillate between different \( s \) because \( L(y_k) \) strictly increases unless \( \ite{y}{k} \) is already at a critical point. 

% In summary, each \( s \) corresponds to a local maximum of \( L(y) \) and is a stable fixed point of the iteration.

We also note that the convergence is generally exponential near the maximum due to the negative definiteness of the Hessian at \( y = s \).

When the noise level $\sigma$ is small, the limit point \( y^* \) must be one of the \( s \), as these are the only critical points where \( \nabla_y L(y) = 0 \). The \textcolor{black}{denoising network} is trained across all noise levels with $\sigma > 0$. When small values of $\sigma$ influence the diffusion process near \( s \), \( s \) are the only fixed points of the diffusion model.

% \begin{align}
%     P(s(y) = s_i) = \frac{p(s_i) \, P(s(y) = s_i \mid s = s_i)}{\sum_{j=1}^n p(s_j) \, P(s(y) = s_j \mid s = s_j)}
% \end{align}

\end{proof}

\subsection{Shared Fixed Points}\label{appx:math:theorem:R0}
\RzeroCO*
\begin{proof}
    % When the diffusion model is accurate, each agent has $s$ in its fixed point set. We then prove that the intersection set has at most one element. Suppose that agents have two different states as shared fixed points: \(\Large\cap_i \fpset[\phi][][\tau_i] = \{s, s'\}\). It follows that agent $i$ can observe $\tau_i$ in $s$ and $s'$. This implies that any agent $i$ has the same observation in $s$ and $s'$. When the Dec-POMDP is collectively observable, this means that $s=s'$.

    When the diffusion model is accurate, each agent includes the state \( s \) within its fixed point set. We then demonstrate that the intersection of all agents' fixed point sets contains at most one element. Suppose, for the sake of contradiction, that the agents share two distinct states as fixed points, that is: \(\Large\cap_i \fpset[\phi][][\tau_i] = \{s, s'\}\). Since the diffusion models are conditioned on $\tau_i$, this implies that each agent \( i \) observes \( \tau_i \) in both states \( s \) and \( s' \). Consequently, every agent \( i \) has identical history in states \( s \) and \( s' \). However, if the Dec-POMDP is collectively observable, identical history for all agents would necessitate that \( s = s' \). This contradiction confirms that the intersection set contains at most one element.
\end{proof}

\RzeroNonCO*
\begin{proof}
    We first prove that \(\Large\cap_i \fpset[\phi][][\tau_i] =\{s\mid p(s|\bm\tau)>0\}\). For each $s$ such that $p(s|\bm\tau)>0$, we know that the global state could be $s$ when the joint history is $\bm\tau$. Therefore, $(\tau_i, s)$ is in the training dataset for the \textcolor{black}{denoising network} conditioned on each $\tau_i$. When there is no approximation errors, $s$ would be a fixed point of each diffusion model $\dm[\tau_i][\cdot]$. On the other hand, for each state $s$ that is a fixed point for all $\tau_i$, we know that the sample $(\tau_i, s)$ is in the dataset used to train the \textcolor{black}{denoising network}, indicating that $p(\tau_i|s)>0$ and thus $p(s|\bm\tau)\propto p(\bm\tau|s)p(s)>0$.

    We then look at the posterior probability of a state $\estpost[s][\tau_i]$. In the diffusion model, the probability we will reach state $s$ with $\dm[\tau_i][\cdot]$ is
    \begin{align}
        \estpost[s][\tau_i]=\int_{y\in\mathcal{B}_\phi(s)} p(y) dy.
    \end{align}
    Here $\mathcal{B}_\phi(s)$ is the basin of attraction of state $s$. From the proof of \cref{thm:repr}, we know that
    \begin{align}
        \mathcal{B}_\phi(s)=\{y \mid p(s)p(y|s)>p(s')p(y|s'),\forall s'\ne s\}.
    \end{align}
    Starting from any point $y$ in $\mathcal{B}_\phi(s)$, the diffusion model conditioned on $\tau_i$ will converge to $s$ when $\sigma$ is small. 
    
    Here and in the following proof, we abuse the notations and omit the dependence on $\tau_i$ as it is clear in the context.

    Our goal is to bound the gap between the convergence probability to state \( s \) and the prior \( p(s) \):
    \[
        \text{Gap}(s) = \left| P_{\text{converge}}(s) - p(s) \right|,
    \]
    where
    \[
    P_{\text{converge}}(s) = \int_{\mathcal{B}_\phi(s)} p(y) dy.
    \]

    The convergence probability to state \( s \) is:
    \[
    P_{\text{converge}}(s) = \int_{\mathcal{B}_\phi(s)} p(y) dy = \int_{\mathcal{B}_\phi(s)} \sum_{s'} p(s') p(y|s') dy.
    \]
    
    This can be split into two parts: (1) Correct classification (from state \( s \)):
  \[
  A(s) = \int_{\mathcal{B}_\phi(s)} p(y|s) dy,
  \]
  which is the probability that \( y \) sampled from \( p(y|s) \) falls into \( \mathcal{B}_\phi(s) \).
  (2) Misclassification (from other states \( s' \ne s \)):
  \[
  B(s, s') = \int_{\mathcal{B}_\phi(s)} p(y|s') dy,
  \]
  which is the probability that \( y \) sampled from \( p(y|s') \) is incorrectly classified into \( \mathcal{B}_\phi(s) \).

    Thus, the convergence probability is:
    \[
    P_{\text{converge}}(s) = p(s) A(s) + \sum_{s' \ne s} p(s') B(s, s').
    \]

    The gap between \( P_{\text{converge}}(s) \) and the prior \( p(s) \) is:
    \[
    \begin{aligned}
    \text{Gap}(s) &= \left| P_{\text{converge}}(s) - p(s) \right| \\
    &= \left| p(s) A(s) + \sum_{s' \ne s} p(s') B(s, s') - p(s) \times 1 \right| \\
    &= \left| p(s) (A(s) - 1) + \sum_{s' \ne s} p(s') B(s, s') \right| \\
    &= \left| -p(s) (1 - A(s)) + \sum_{s' \ne s} p(s') B(s, s') \right| \\
    &\leq p(s) (1 - A(s)) + \sum_{s' \ne s} p(s') B(s, s'),
    \end{aligned}
    \]
    since all terms are non-negative.

    We need to compute \( 1 - A(s) \) and \( B(s, s') \), which involve misclassification probabilities. We consider the decision boundary between states \( s \) and \( s' \). A sample \( y \) is misclassified from \( s \) to \( s' \) if:
    \[
    p(s) p(y|s) \leq p(s') p(y|s').
    \]
    Substituting the Gaussian distributions
    \[
    p(s) \exp\left( -\frac{\| y - s \|^2}{2\sigma^2} \right) \leq p(s') \exp\left( -\frac{\| y - s' \|^2}{2\sigma^2} \right).
    \]
    Taking the natural logarithm
    \[
    \ln p(s) - \frac{\| y - s \|^2}{2\sigma^2} \leq \ln p(s') - \frac{\| y - s' \|^2}{2\sigma^2}.
    \]
    Rewriting
    \[
    \frac{\| y - s' \|^2 - \| y - s \|^2}{2\sigma^2} \leq \ln p(s') - \ln p(s).
    \]
    Compute the difference between the squared norms
    \[
    \| y - s' \|^2 - \| y - s \|^2 = 2(y - s)^\top (s - s') + \| s - s' \|^2.
    \]
    Substitute back
    \[
    \frac{2(y - s)^\top (s - s') + \| s - s' \|^2}{2\sigma^2} \leq \ln p(s') - \ln p(s).
    \]
    Simplify
    \[
    \frac{(y - s)^\top \delta}{\sigma^2} + \frac{\| \delta \|^2}{2\sigma^2} \leq \ln p(s') - \ln p(s),
    \]
    where \( \delta = s' - s \). Rearranged
    \[
    \frac{(y - s)^\top \delta}{\sigma^2} \leq \ln p(s') - \ln p(s) - \frac{\| \delta \|^2}{2\sigma^2}.
    \]
    Define \( \Delta(s, s') = \ln p(s') - \ln p(s) \), so:
    \[
    \frac{(y - s)^\top \delta}{\sigma^2} \leq \Delta(s, s') - \frac{\| \delta \|^2}{2\sigma^2}.
    \]
    Misclassification from \( s \) to \( s' \) occurs when:
    \[
    (y - s)^\top \delta \leq \sigma^2 \left( \Delta(s, s') - \frac{\| \delta \|^2}{2\sigma^2} \right).
    \]
    Similarly, misclassification from \( s' \) to \( s \) occurs when:
    \[
    (y - s')^\top (-\delta) \leq \sigma^2 \left( \Delta(s', s) - \frac{\| \delta \|^2}{2\sigma^2} \right).
    \]
    Since \( \Delta(s', s) = -\Delta(s, s') \), we can unify the expression.

    Let \( y \sim p(y|s) \), so \( y = s + \sigma z \), where \( z \sim \mathcal{N}(0, I) \). Then
    \[
    (y - s)^\top \delta = \sigma z^\top \delta.
    \]
    Define the random variable
    \[
    X = \sigma z^\top \delta = \sigma \delta^\top z,
    \]
    since \( z \sim \mathcal{N}(0, I) \), \( \delta^\top z \sim \mathcal{N}(0, \| \delta \|^2) \). Therefore, \( X \sim \mathcal{N}(0, \sigma^2 \| \delta \|^2) \).
    The misclassification probability from \( s \) to \( s' \) is
    \[
    P_{\text{misclass}}(s \rightarrow s') = P\left( X \leq \sigma^2 \left( \Delta(s, s') - \frac{\| \delta \|^2}{2\sigma^2} \right) \right).
    \]
    Simplify the threshold
    \[
    \sigma^2 \left( \Delta(s, s') - \frac{\| \delta \|^2}{2\sigma^2} \right) = \sigma^2 \Delta(s, s') - \frac{\| \delta \|^2}{2}.
    \]
    Thus, the misclassification probability is
    \[
    P_{\text{misclass}}(s \rightarrow s') = P\left( X \leq \sigma^2 \Delta(s, s') - \frac{\| \delta \|^2}{2} \right).
    \]
    Standardize \( X \) to a standard normal variable:
    \[
    Z = \frac{ X }{ \sigma \| \delta \| } \sim \mathcal{N}(0, 1).
    \]
    Compute the standardized threshold:
    \[
    x = \frac{ \sigma^2 \Delta(s, s') - \frac{\| \delta \|^2}{2} }{ \sigma \| \delta \| } = \frac{ \sigma \Delta(s, s') }{ \| \delta \| } - \frac{ \| \delta \| }{ 2\sigma }.
    \]
    Therefore
    \[
    P_{\text{misclass}}(s \rightarrow s') = P\left( Z \leq x \right).
    \]

    Similarly, for misclassification from \( s' \) to \( s \), the threshold becomes
\[
x' = \frac{ \sigma^2 \Delta(s', s) - \frac{\| \delta \|^2}{2} }{ \sigma \| \delta \| } = -\frac{ \sigma \Delta(s, s') }{ \| \delta \| } - \frac{ \| \delta \| }{ 2\sigma }.
\]

Since the Gaussian cumulative distribution function (CDF) satisfies
\[
P(Z \leq -a) = 1 - P(Z \leq a),
\]
we can consider both \( x \) and \( x' \) for positive \( a \). However, to obtain a bound, we can use the fact that for \( a > 0 \):
\[
P(Z \leq -a) = \Phi(-a) \leq \frac{1}{a \sqrt{2\pi}} e^{-a^2/2},
\]
where \( \Phi \) is the standard normal CDF.

Assuming \( \Delta(s, s') \leq 0 \) (i.e., \( p(s') \geq p(s) \)), then \( x \leq - \frac{ \| \delta \| }{ 2\sigma } \). Thus
\[
P_{\text{misclass}}(s \rightarrow s') \leq \frac{ 2\sigma }{ \| \delta \| \sqrt{2\pi} } e^{ - \left( \frac{ \| \delta \| }{ 2\sigma } \right )^2 }.
\]

Similarly, if \( \Delta(s, s') \geq 0 \) (i.e., \( p(s) \geq p(s') \)), the misclassification probability from \( s' \) to \( s \) is bounded by the same expression.

Now, we can bound the terms in the gap expression. Misclassification from \( s \) to \( s' \):
  \[
  p(s) (1 - A(s)) = p(s) \sum_{s' \ne s} P_{\text{misclass}}(s \rightarrow s').
  \]
  Misclassification from \( s' \) to \( s \):
  \[
  \sum_{s' \ne s} p(s') B(s, s') = \sum_{s' \ne s} p(s') P_{\text{misclass}}(s' \rightarrow s).
  \]

Using the bound for each misclassification probability
\[
P_{\text{misclass}}(s \rightarrow s') \leq \frac{ 2\sigma }{ \| s - s' \| \sqrt{2\pi} } e^{ - \left( \frac{ \| s - s' \| }{ 2\sigma } \right )^2 }.
\]

Therefore, the total gap is bounded by
\[
\begin{aligned}
\text{Gap}(s) &\leq p(s) \sum_{s' \ne s} P_{\text{misclass}}(s \rightarrow s') + \sum_{s' \ne s} p(s') P_{\text{misclass}}(s' \rightarrow s) \\
&\leq \sum_{s' \ne s} \left[ p(s) + p(s') \right] \frac{ 2\sigma }{ \| s - s' \| \sqrt{2\pi} } e^{ - \left( \frac{ \| s - s' \| }{ 2\sigma } \right )^2 }.
\end{aligned}
\]
Or we can write the integral if $s'$ is from a continuous space.

\textbf{Analysis}: The bound decreases exponentially with the squared distance between states \( s \) and \( s' \) relative to the variance \( \sigma^2 \). If the states are well-separated (large \( \| s - s' \| \)) compared to the standard deviation \( \sigma \), the misclassification probabilities become negligible. Moreover, a smaller \( \sigma \) (less noise) reduces the overlap between the Gaussian distributions \( p(y|s) \), leading to a smaller gap. In these cases, we reproduce the distribution $p(s|\tau_i)$, from which $p(s|\bm\tau)$ can be reconstructed.

\end{proof}

\subsection{Jacobian Rank and Surrogate}\label{appx:math:theorem:lr}
\lr*
\begin{proof}

$y^{*}{}'$ satisfies
\begin{align}
    y^{*}{}'&= \dm[\tau'][y^{*}{}'].
\end{align}
So we have
\begin{align}
    y^*+\Delta y^* &= \dm[\tau+\Delta \tau][y^*+\Delta y^*] \\
    &\approx \dm[\tau][y^*] + \nabla_\tau \dm[\tau][y^*]\Delta \tau + \nabla_y \dm[\tau][y^*]\Delta y^* \label{equ:taylor_tau'y'}\\
    &= y^* + \jaco[f][][\tau][y^*]\Delta \tau + \jaco[f][][y^*][\tau]\Delta y^*.
\end{align}
\cref{equ:taylor_tau'y'} ignores the higher-order terms. We now bound the error introduced in this approximation.

% Given that the Jacobian difference at the endpoints is less than \( \epsilon \):
% \begin{align}
%     \|\jaco[f][][\tau',y^{*}{}']-\jaco[f][][\tau,y^{*}]\|_F<\epsilon
% \end{align}

We use the Mean Value Theorem to write:
\begin{align}
\nabla \dm[\tau'][y^{*}{}'] - \nabla \dm[\tau][y^*] = \nabla^2 \dm[\xi_\tau][\xi_y] \begin{pmatrix} \Delta \tau \\ \Delta y^* \end{pmatrix},
\end{align}
for some \( (\xi_\tau, \xi_y) \) lying between \( (\tau, y^*) \) and \( (\tau', y^{*}{}') \). Therefore,
\begin{align}
    &\left\| \nabla \dm[\tau'][y^{*}{}'] - \nabla \dm[\tau][y^*] \right\|_F \\
    = &\left\| \nabla^2 \dm[\xi_\tau][\xi_y] \begin{pmatrix} \Delta \tau \\ \Delta y^* \end{pmatrix} \right\|_F \geq m \left\| \begin{pmatrix} \Delta \tau \\ \Delta y^* \end{pmatrix} \right\|
\end{align}
where $m$ is a lower bound on the norm of the second derivative \( \nabla^2 \dm \) in the region of interest. Given that 
\begin{align}
    \left\| \nabla \dm[\tau'][y^{*}{}'] - \nabla \dm[\tau][y^*] \right\|_F < \epsilon,
\end{align}
we can write:
\begin{align}
\epsilon \geq m \left\| \begin{pmatrix} \Delta \tau \\ \Delta y^* \end{pmatrix} \right\|    
\end{align}
Therefore,
\begin{align}
\left\| \begin{pmatrix} \Delta \tau \\ \Delta y^* \end{pmatrix} \right\| \leq \frac{\epsilon}{m}
\end{align}

The second-order error (using the mean-value form of the remainder) is:
\begin{align}
    E = \frac{1}{2} \begin{pmatrix} \Delta \tau \\ \Delta y^* \end{pmatrix}^\top \nabla^2 f(\xi) \begin{pmatrix} \Delta \tau \\ \Delta y^* \end{pmatrix}
\end{align}
for some $\xi$ between \( (\tau, y^*) \) and \( (\tau', y^{*}{}') \). It follows that
\begin{align}
\left\| E \right\|\leq \frac{1}{2} \| \nabla^2 f(\xi) \| \left\| \begin{pmatrix} \Delta \tau \\ \Delta y^* \end{pmatrix} \right\|^2 \leq \frac{1}{2} M \left\| \begin{pmatrix} \Delta \tau \\ \Delta y^* \end{pmatrix} \right\|^2 \leq \frac{M\epsilon^2}{2m^2}.
\end{align}
where $M$ is an upper bound on the norm of the second derivative.

Taking $E$ into consideration, we have
\begin{align}
    y^*+\Delta y^* &= \dm[\tau+\Delta \tau][y^*+\Delta y^*] \\
    &= \dm[\tau][y^*] + \nabla_\tau \dm[\tau][y^*]\Delta \tau + \nabla_y \dm[\tau][y^*]\Delta y^* +E\\
    &= y^* + \jaco[f][][\tau][y^*]\Delta \tau + \jaco[f][][y^*][\tau]\Delta y^*+E.
\end{align}
It follows that
\begin{align}
    \Delta y^* = \left(I-\frac{\partial f}{\partial y}(\tau,y^*)\right)^{-1} \frac{\partial f}{\partial \tau}(\tau,y^*)\Delta \tau +\left(I-\frac{\partial f}{\partial y}(\tau,y^*)\right)^{-1} E.
\end{align}
When we use the approximation
\begin{align}
    \Delta y^* \approx \left(I-\frac{\partial f}{\partial y}(\tau,y^*)\right)^{-1} \frac{\partial f}{\partial \tau}(\tau,y^*)\Delta \tau 
\end{align}
the error in \( \Delta y^* \) is:
\begin{align}
\left\| \spname{Err}(\Delta y^*) \right\| \leq K \left\| E \right\| \leq K \frac{M\epsilon^2}{2m^2} =\frac{M\epsilon^2}{2(1-\eigenvalue[\max][][\tau][y^*])m^2}.
\end{align}
Here $K$ is the maximum eigenvalue of $\left(I-\frac{\partial f}{\partial y}(\tau,y^*)\right)^{-1}$, which is
\begin{align}
    K = \frac{1}{1-\eigenvalue[\max][][\tau][y^*]}.
\end{align}

\end{proof}

\lrfc*
\begin{proof}
When $\dm[\tau][y]=g(\sigma(W_\tau \tau + W_y y + b))$, where $\sigma(\cdot)$ is an element-wise activation function and $g(\cdot)$ represents the fully connect layers following the first layer. We can directly expand the derivatives
\begin{align}
    \nabla_y \dm[\tau][y^*] &= \frac{d g}{d h}\operatorname{diag}(\sigma'(h))W_y = \jaco[f][][y^*][\tau],\\
    \nabla_\tau f(\tau,y^*) &= \frac{d g}{d h}\operatorname{diag}(\sigma'(h))W_\tau,
\end{align}
where
\begin{align}
    h = W_\tau \tau + W_y y^* + b.
\end{align}

Let $W_y^+$ denote $W_y$'s Moore–Penrose inverse
\begin{align}
    W_y^+ = W_y^\top(W_yW_y^\top)^{-1}
\end{align}
we have
\begin{align}
    \nabla_\tau f(\tau,y^*) = \jaco[f][][y^*][\tau] W_y^+ W_o.
\end{align}

Thus, 
\begin{align}
    \Delta y^* &\approx (I-\jaco[f][][y^*][\tau])^{-1} \jaco[f][][y^*][\tau] W_y^+ W_\tau \Delta \tau \\
    &= \left(\eves(I-\evas)^{-1} \eves[][\top] \right) \eves\evas \eves[][\top] W_y^+ W_\tau \Delta \tau \\
    &= \eves (I-\evas)^{-1} \evas \eves[][\top] W_y^+ W_\tau \Delta \tau.
\end{align}

It also follows that
\begin{align}
    \|\Delta y^*\|^2 &\approx \sum_k \left(\frac{\eigenvalue[k]}{1-\eigenvalue[k]} \right)^2 \left\langle \eigenvector[k], W_y^+ W_\tau \Delta \tau\right\rangle^2.
\end{align}
\end{proof}

\subsection{An Upper Bound of Deviations}\label{appx:math:theorem:fpdbound}
\fpdbound*

\begin{proof}
    When $\|\jaco[f][][\tau',y^{*}{}']-\jaco[f][][\tau,y^{*}]\|_F> \epsilon$, where $\epsilon$ is a constant, according to the proof of \cref{thm:lr}, the error term $\spname{Err}(\Delta y^*)$ in the approximation
    \begin{align}
    \Delta y^* \approx \left(I-\frac{\partial f}{\partial y}(\tau,y^*)\right)^{-1} \frac{\partial f}{\partial \tau}(\tau,y^*)\Delta \tau.
    \end{align}
    is no longer negligible. Specifically, for each $(\tau, s)$ pair, the linear approximation is correct by
    \begin{align}
    \spname{Err}(\Delta y^*) = \frac{1}{2} \begin{pmatrix} \Delta \tau \\ \Delta y^* \end{pmatrix}^\top \nabla^2 f(\xi) \begin{pmatrix} \Delta \tau \\ \Delta y^* \end{pmatrix}
    \end{align}
    The Hessian matrix $\nabla^2 f(\xi)$ is trained to lower the MSE training loss. In this way, the diffusion model is more powerful than the surrogate linear regression model and thus has a lower error. 
\end{proof}

\subsection{Composite Diffusion Algorithm Details}\label{appx:algo}

\noindent \textbf{Composite diffusion algorithm}.\ \ We provide a detailed description of our composite diffusion algorithm. Our algorithm has two steps. Step 1 (Composite denoising): Sample a set of Gaussian noise $\{\ite{y}{k,0}\}_{k=0}^{K_2}$ from $\mathcal{N}(0,\sigma^2I)$. Apply composite diffusion to each $\ite{y}{k,0}$, resulting in a sequence of denoised states $(\ite{y}{k,1},\cdots,\ite{y}{k,L})$. 

Step 2 (Condition check): Check whether the following two conditions hold. (1) Eigenvalue condition. For the last $n$ elements in the sequence, $\ite{y}{k,\ell},L-n<\ell\le L$, the largest eigenvalue of the diffusion Jacobian at $\ite{y}{k,\ell}$ satisfies $|\eigenvalue[\max][]|<1$. (2) Distance condition.  Apply individual \textcolor{black}{denoising network} conditioned on each agent's history $\tau_i$ repeatedly to the last element $\ite{y}{k,L}$ until converge, the change in $\ite{y}{k,L}$ should be smaller than $2\fpd[\phi]$ (using the upper bound in \cref{thm:fpdbound}). 
We apply individual flow on $\ite{y}{k,L}$ and obtain $n$ fixed points: $y_i^* = \flow[\infty][][\tau_{i}][\theta](\ite{y}{k,L})$. To check whether $\ite{y}{k,L}$ lies in the vicinity of the shared state, the distance between $y_i^*$ and $\ite{y}{k,L}$ should be small: $\max_{1\le i \le n}\|y_i^* - \ite{y}{k,L}\| < 2D_\phi$, where $D_{\phi} = \max_{\tau,s}\fpd[\phi][][\tau][s]$ is the maximal error. If these two conditions are not met, we simply discard this sample and proceed to another $\ite{y}{k,0}$. These condition checks guarantee convergence to the true state as proven in the following theorem.

\subsection{Composite Diffusion}\label{appx:math:theorem:algo-conv}
\algoconv*

% \begin{theorem}
%     Let $\gF = \{y_1^*, y_2^*, \cdots, y_n^*\}$ be the fixed points of all agents. For any starting noisy state $\ite{y}{0}$, the denoised state of composite flow $\ite{y}{\ell}=\flow[\ell][][\tau_{i_{1:n}}][\theta](\ite{y}{0})$ stays in the convex hull of fixed points after enough iterations $\ell$, \ie{}, \\
%     $\ite{y}{\ell}\in Conv(\gF):=\left\{y\in\sR^{|s|}\mid y=\sum_{y_i^*\in \gF}\beta_i y_i^*, \beta_i>0, \sum_i^n \beta_i = 1\right\}$. 
%     % $\ite{y}{\ell}\in Conv(\gF):=\left\{\sum_{y_i^*\in \gF}\beta_i y_i^* \mid \beta_i>0, \sum_i^n \beta_i = 1\right\}$. 
% \end{theorem}

\begin{proof}

We first prove the existence of $y$ such that both convergence conditions of our diffusion algorithm could be met and then prove the convergence behavior of different cases. 

To allow effective composite diffusion, we require the network to have low approximation errors, \ie{}, the distance from the state to its corresponding fixed point is small, and the fixed point is stable. Specifically, the distance from the state to its corresponding fixed point is small: $D_{\phi}< \frac{1}{6} \delta_s$, where $D_{\phi} = \max_{\tau,s}\fpd[\phi][][\tau][s]$ is the maximal error and $\delta_s = \min_{s,s'\in \gS}\|s-s'\|$ is the minimal state distance. The fixed point is stable, \ie{}, the eigenvalue of the Jacobian of the \textcolor{black}{denoising network} $\dm[\tau_{i}][y], \forall i\in [1,2,\cdots,n]$ near its fixed point $y_i^*$ should be small: $\forall y \in \{y \mid \|y-y_i^*\| < 2 D_{\phi} \}$, we have $|\eigenvalue[k][][y][\tau_i]| < 1,  \forall k\in [|s|]$.

In the following proof, we omit the superscript $k$ for simplicity. We denote the start noisy state by $\ite{y}{0}$ and the end denoised state by $\ite{y}{\infty}$. 

\textbf{Existence proof.} We first prove the existence of $\ite{y}{\infty}$ such that the eigenvalue condition and distance condition can be met. 

Let $\gF_s = \{y_1^{*}(s), y_2^*(s), \cdots, y_n^*(s)\}$ be the fixed points of all agents, obtained from the true global state $s$: $y_i^*(s)=\flow[\infty][][\tau_{i}][\theta](s)$. Let $B(y_i^*(s), D_{\phi}) = \{y| \|y- y_i^*(s) < 2 D_{\phi} \|\}$ be the ball centering $y_i^*(s)$ with radius $D_{\phi}$. Since $D_{\phi} = \max_{\tau,s}\fpd[\phi][][\tau][s]$ is the maximal approximation error, we have $B_s = \bigcap_{i=1}^n B(y_i^*(s), D_{\phi}) \ne \emptyset$. Since the fixed point $y_i^*(s)$ is stable, we have that $\forall y\in B_s, |\eigenvalue[k][][y][\tau_i]| < 1,  \forall k\in [|s|]$. That is, the eigenvalue condition is met. Then according to the definition of $\fpd[\phi][][\tau][s]$ (\cref{def:fpd}) and $D_{\phi}$, the distance condition is met.
% \begin{align}
%     & \max_{1\le i<j\le n}\|y_i^*(s) - y_j^*(s)\| \\
%     \le & \max_{1\le i < j \le n}\left(\|y_i^*(s) - s\| + \|s - y_j^*(s)\| \right) < 2D_\phi.
% \end{align}
Therefore, $\forall \ite{y}{\infty}\in B_s$, the two conditions are both met. 

% \textbf{Convergence analysis when initializing near the true global state.}

\textbf{Convergence proof}. 
We next prove the convergence of the composite diffusion algorithm. We divided our discussion into two parts: $\ite{y}{0}\in B_s$ and $\ite{y}{0}\notin B_s$. 

% Let $\gF = \{y_1^*, y_2^*, \cdots, y_n^*\}$ be the fixed points of all agents, where $y_i^*=\flow[\infty][][\tau_{i}][\theta](\ite{y}{\ell})$. If $\gF = \gF_s$, 

\textit{Case 1: $\ite{y}{0}$ is initialized in $B_s$}. 

Let $\gF = \{y_1^*, y_2^*, \cdots, y_n^*\}$ be the fixed points of all agents, where $y_i^*=\flow[\infty][][\tau_{i}][\theta](\ite{y}{0})$. In this case, we always have $\gF = \gF_s$. For simplicity, we use $y_i^*$ to represent $y_i^*(s)$ in case 1.  

We now prove that for any starting noisy state $\ite{y}{0}\in B_s$, the denoised state of the composite flow $\ite{y}{\ell}=\flow[\ell][][\tau_{i_{1:n}}][\theta](\ite{y}{0})$ stays in the convex hull of fixed points near the true global state $s$ after enough iterations $\ell$, \ie{}, $\ite{y}{\ell}\in Conv(\gF) \subseteq B_s$, where
\begin{align}
Conv(\gF):=\left\{\sum_{y_i^*\in \gF}\beta_i y_i^*\mid \beta_i>0, \sum_i^n \beta_i = 1\right\}
\end{align}

When applying $\flow[1][][\tau_{i_{1:n}}][\theta]$ to $\ite{y}{\ell-1}$, we have the following transformation equations
\begin{align}
    y_i^* - y_{\ell,i} = \alpha_{\ell,i}(y_i^*-y_{\ell,i-1}), \forall i, 
\end{align}  
where $y_{\ell,0}=\ite{y}{\ell-1}$, $\ite{y}{\ell}=y_{\ell, n}$ and $\alpha_{\ell,i}=\sqrt{\sum_{k=1}^{|s|}\eigenvalue[k][][y_{\ell,i}][\tau_i]^2}$ is the scaling ratio and $|\alpha_{\ell,i}|<1, \forall i$. This equation assumes that $y$ will move towards $y_i^*$ straightforwardly when $y$ is close to $y_i^*$.

By eliminating $y_{\ell,i}$, we can expand $\ite{y}{\ell}$

\begin{align*}
    \ite{y}{\ell} =& (1-\alpha_{\ell,n})y_n^* + \alpha_{\ell,n} y_{\ell,n-1} \\
    =& (1-\alpha_{\ell,n})y_n^* + \alpha_{\ell,n}\left((1-\alpha_{\ell,n-1})y_{n-1}^* + \alpha_{\ell,n-1} y_{\ell,n-2} \right) \\
    =& (1-\alpha_{\ell,n})y_n^* + \alpha_{\ell,n}(1-\alpha_{\ell,n-1})y_{n-1}^* + \alpha_{\ell,n}\alpha_{\ell,n-1} y_{\ell,n-2} \\
    & \cdots \\
    =& \sum_i^n (1-\alpha_{\ell,i})y_i^* \left(\prod_{j=i+1}^n \alpha_{\ell,j} \right) + \ite{y}{\ell-1}\left(\prod_{i=1}^n \alpha_{\ell,i}\right).
\end{align*}

Let $w_{\ell,i}=\prod_{j=i+1}^n\alpha_{\ell,j}$ and $w_{\ell,n}=1$, we have
\begin{align}
    \ite{y}{\ell} = \sum_i^n w_{\ell,i}(1-\alpha_{\ell,i})y_i^* + w_{\ell,0}\ite{y}{\ell-1}.
\end{align}
Every coefficient of $y_i^*, \ite{y}{\ell-1}$ is non-negative and the sum of all coefficients is

\begin{align}
     &\sum_{i=1}^n w_{\ell,i}(1-\alpha_{\ell,i}) + w_{\ell,0} \\
    =&\sum_{i=2}^n w_{\ell,i}(1-\alpha_{\ell,i}) + w_{\ell,1}(1-\alpha_{\ell,1}) + w_{\ell,0} \\
    =&\sum_{i=2}^n w_{\ell,i}(1-\alpha_{\ell,i}) + w_{\ell,1}\\
    \cdots &\\
    =&\sum_{i=n}^n w_{\ell,i}(1-\alpha_{\ell,i}) + w_{\ell,n-1}\\
    =& w_{\ell,n}-w_{\ell,n}\alpha_{\ell,n} + w_{\ell,n-1} \\
    =& 1
\end{align}

Thus, $\ite{y}{\ell}\in Conv(\gF\cup \{\ite{y}{\ell-1}\})$. 
If $\ite{y}{\ell-1}\in Conv(\gF)$, we have $\ite{y}{\ell}\in Conv(\gF)$. 
Otherwise, suppose $\ite{y}{0} \notin Conv(\gF)$. Let $\bar{y}_{\ell} = \sum_i^n w_{\ell,i}(1-\alpha_{\ell,i})y_i^*/(1-w_{\ell,0})\in Conv(\gF)$, we expand $\ite{y}{\ell}$:

\begin{align*}
    \ite{y}{\ell} =& (1-w_{\ell,0}) \bar{y}_{\ell} + w_{\ell,0}\ite{y}{\ell-1} \\
    =& (1-w_{\ell,0}) \bar{y}_{\ell} + w_{\ell,0}\left((1-w_{\ell-1,0}) \bar{y}_{\ell-1} + w_{\ell-1,0}\ite{y}{\ell-2} \right) \\
    =& (1-w_{\ell,0}) \bar{y}_{\ell} + w_{\ell,0}(1-w_{\ell-1,0})\bar{y}_{\ell-1} + w_{\ell,0}w_{\ell-1,0}\ite{y}{\ell-2}\\
    &\cdots\\
    =& \sum_{i=1}^{\ell}(1-w_{i,0})\bar{y_{i}}\left(\prod_{j=i+1}^{\ell}w_{j,0}\right) + \ite{y}{0}\left(\prod_{i=1}^{\ell} w_{i,0}\right)\\
\end{align*}

Let $\xi_{\ell,i}=\prod_{j=i+1}^{\ell}w_{j,0}$ and $\xi_{\ell,\ell}=1$. We have
\begin{align}
    \ite{y}{\ell} =& \sum_{i=1}^{\ell}\xi_{\ell,i}(1-w_{i,0}) \bar{y_{i}} + \xi_{\ell,0}\ite{y}{0}.
\end{align}
Then
\begin{align}
    \lim_{\ell\to +\infty} \xi_{\ell,0} = \lim_{\ell\to +\infty} \prod_{j=1}^{\ell} w_{j,0} = \lim_{\ell\to +\infty} \prod_{j=1}^{\ell} \prod_{k=j+1}^n \alpha_{\ell,k} = 0,
\end{align}
which indicates that after enough iterations $\ell$, $\ite{y}{\ell} = \sum_{i=1}^{\ell}\xi_{\ell,i}(1-w_{i,0}) \bar{y_{i}} \in Conv(\gF)$. 
Finally, we conclude that even $\ite{y}{0} \notin Conv(\gF)$, after enough iterations $\ell$, we can achieve $\ite{y}{\ell}\in Conv(\gF)$ and then $\forall i>\ell, \ite{y}{i}\in Conv(\gF)$. 

% \textbf{Iteration analysis when initializing far from the true global state.}

% Let $\gF = \{y_1^*, y_2^*, \cdots, y_n^*\}$ be the fixed points of all agents, where $y_i^*=\flow[\infty][][\tau_{i}][\theta](\ite{y}{\ell})$. If $\gF = \gF_s$, 

\textit{Case 2: $\ite{y}{0}$ is not initialized in $B_s$}. 

Now we assume that the starting noisy state is far from the true global state: $\ite{y}{0}\notin B_s$. After enough iteration $\ell$, if $\ite{y}{\ell}$ moves in $B_s$, we can still use the proof of Case 1 to show that $\ite{y}{\infty} \in Conv(\gF_s)$. 

Hence, we only need to consider the cases when it does not find $B_s$: $\ite{y}{\infty} \notin B_s$. Assume $\ite{y}{\infty}$ meets the two checking conditions. 

Let $\gF = \{y_1^*, y_2^*, \cdots, y_n^*\}$ be the fixed points of all agents, where $y_i^*=\flow[\infty][][\tau_{i}][\theta](\ite{y}{\infty})$. We first show that $\gF \ne \gF_s$. If we assume that $\gF = \gF_s$. Since $\ite{y}{\infty} \notin B_s$, there must be $y_i^*$ such that $\|y_i^* - \ite{y}{\infty}\| \ge 2D_\phi$, which breaks the distance condition check and this sample should discarded. Thus, we always have $\gF \ne \gF_s$. 

Let $s_i = \argmin_{s\in \gS}\|s - y_i^*\|$ be the closest state to the fixed point $y_i^*$. There must be at least two $y_i^*, y_j^*$ such that $s_i \ne s_j$, otherwise, all $y^*_i$s share the same state that is different from $s$, which contradicts our collectively observable settings. Then
\begin{align}
& \|s_i - s_j\| \\
< & \| (s_i - y_i^*) + (y_i^* - \ite{y}{\infty}) + (\ite{y}{\infty} - y_j^*) + (y_j^* - s_j) \|  \\
< &  \|s_i - y_i^*\| + \|y_i^* - \ite{y}{\infty}\| + \|\ite{y}{\infty} - y_j^*\| + \|y_j^* - s_j\| \\
< & D_\phi + 2 D_\phi + 2 D_\phi + D_\phi \\
< & 6 D_\phi.
\end{align}
However, we know that $\|s_i - s_j\|\ge \delta_s > 6 D_\phi$ because of the low approximation error, which still contradicts our assumption. Thus $\ite{y}{\infty}$ does not meet the two checking conditions and should be discarded.

% In most empirical tests, our algorithm can successfully find $B_s$. This is because $B_s$ is the only area that contains fixed points of every agent and in other areas, $y$ will eventually be moved away when applying the diffusion flow conditioned on the missing agent's history. 

In conclusion, after enough iterations $\ell$, the composite diffusion algorithm will converge to the convex hull of the fixed points of all agents: $\ite{y}{\ell} \in Conv(\gF)$.

Since $D_\phi(\tau_i,s)=\|y_i^* - s\|$, where $y_i^*$ is the fixed point and the vertex of $Conv(\gF)$, we always have:
\begin{align}
    \|\flow[\infty][][\tau_{i_{1:n}}][\theta](\ite{y}{0}) - s\| \le \max_{1\le i\le n} D_\phi(\tau_i, s).
\end{align}

\end{proof}

\algoconvnon*

\begin{proof}
    In non-collectively observable Dec-POMDPs, the state size could be larger than 1. Specifically, let $\gS(\tau_{1:n})=\{s_j|p(s_j|\tau_{1:n}) > 0\}$ be the set of possible states for joint histories $\tau_{1:n}$. $|\gS(\tau_{1:n})| \ge 1$. 
    
    Let $\gF_{s_j}=\{y_1^*(s_j),y_2^*(s_j),\cdots,y_n^*(s_j)\}$ be the set of fixed points near state $s_j\in \gS(\tau_{1:n})$, such that $y_i^*(s_j) = \flow[\infty][][\tau_{i}][\theta](s_j)$.  

    Let $B(y_i^*(s_j), D_{\phi}) = \{y| \|y- y_i^*(s_j) < 2 D_{\phi} \|\}$ be the ball centering $y_i^*(s_j)$ with radius $D_{\phi}$. Since $D_{\phi} = \max_{\tau,s}\fpd[\phi][][\tau][s]$ is the maximal approximation error, we have $B_{s_j} = \bigcap_{i=1}^n B(y_i^*(s_j), D_{\phi}) \ne \emptyset$.

    Assume $\ite{y}{0}_j \in B_{s_j}$, with the same existence proof and convergence proof, we have that:

    \begin{align}
        \flow[\infty][][\tau_{i_{1:n}}][\theta](\ite{y}{0}_j)\in Conv(\gF_{s_j})
    \end{align}
    And therefore
    \begin{align}
        \|\flow[\infty][][\tau_{i_{1:n}}][\theta](\ite{y}{0}_j) - s\| \le \max_{1\le i\le n} D_\phi(\tau_i, s).
    \end{align}

\end{proof}

    % \begin{align}
    %     \exists \ite{y}{0}_j, s.t.\ \|\flow[\infty][][\tau_{i_{1:n}}][\theta](\ite{y}{0}_j) - s\| \le \max_{1\le i\le n} D_\phi(\tau_i, s).
    % \end{align}
    % \begin{align}
    %     \exists \ite{y}{0}_j, s.t.\ \flow[\infty][][\tau_{i_{1:n}}][\theta](\ite{y}{0}_j)\in Conv(\gF_{s_j})
    % \end{align}

\algoconvpartial*

\begin{proof}
Consider partial composite diffusion flow $\flow[\infty][][\tau_{i_{1:k}}][\theta]$, the set of possible state for current joint history is $\gS(\tau_{i_{1:k}})=\{s_j|p(s_j|\tau_{i_{1:k}}) > 0\}$. $|\gS(\tau_{i_{1:k}})|\ge 1$ since current considered agents might not fully observe the state.

Let $\gF_{s_j}=\{y_{i_1}^*(s_j),y_{i_2}^*(s_j),\cdots,y_{i_k}^*(s_j)\}$ be the set of fixed points near state $s_j \in \gS(\tau_{i_{1:k}})$, such that $y_{i_t}^*(s_j) = \flow[\infty][][\tau_{i_t}][\theta](s_j)$. 

Let $B(y_{i_t}^*(s_j), D_{\phi}) = \{y| \|y- y_{i_t}^*(s_j) < 2 D_{\phi} \|\}$ be the ball centering $y_{i_t}^*(s_j)$ with radius $D_{\phi}$. Since $D_{\phi} = \max_{\tau,s}\fpd[\phi][][\tau][s]$ is the maximal approximation error, we have $B_{s_j} = \bigcap_{i=1}^n B(y_{i_t}^*(s_j), D_{\phi}) \ne \emptyset$.

Assume $\ite{y}{0}_j \in B_{s_j}$, with the same existence proof and convergence proof, we have that:

\begin{align}
    \flow[\infty][][\tau_{i_{1:k}}][\theta](\ite{y}{0}_j)\in Conv(\gF_{s_j})
\end{align}
And therefore
\begin{align}
    \|\flow[\infty][][\tau_{i_{1:k}}][\theta](\ite{y}{0}_j) - s\| \le \max_{1\le t \le k} D_\phi(\tau_{i_t}, s)
\end{align}

\end{proof}

    %     \begin{align}
    %     \exists \ite{y}{0}_j, s.t.\ \|\flow[\infty][][\tau_{i_{1:k}}][\theta](\ite{y}{0}_j) - s\| \le \max_{1\le t \le k} D_\phi(\tau_{i_t}, s).
    % \end{align}
    % \begin{align}
    %     \exists \ite{y}{0}_j, s.t.\ \|\flow[\infty][][\tau_{i_{1:k}}][\theta](\ite{y}{0}_j) - s\| \le \max_{\tau\in \{\tau_{i_1},\tau_{i_2},\cdots,\tau_{i_k}\}} D_\phi(\tau, s).
    % \end{align}
    % \begin{align}
    %     \exists \ite{y}{0}_j, s.t.\ \flow[\infty][][\tau_{i_{1:k}}][\theta](\ite{y}{0}_j)\in Conv(\gF_{s_j}).
    % \end{align}

\subsection{Better Accuracy of Composite Flow}\label{appx:math:theorem:accucom}
\accucom*

\begin{proof}

Since our composite diffusion algorithm moves $\ite{y}{0}$ to the convex hull of the fixed points $\gF_s$ near the true global state $s$, we assume $\ite{y}{L}\sim \text{Uniform}(Conv(\gF_s))$. 

Any point $\ite{y}{L}$ in the convex hull $Conv(\gF_s)$ can be expressed as a convex combination of the points $y_i^*\in \gF_s$:
\begin{align}
    \ite{y}{L} = \sum_{i=1}^{|\gF_s|} \beta_i y_i^*,
\end{align}
where $\beta_i\ge 0$ and $\sum_{i=1}^{|\gF_s|}\beta_i=1$. Since we are considering a uniform distribution over the convex hull, the coefficient $\beta_i$ are uniformly distributed over the simplex $\{(\beta_1,\beta_2,\cdots)|\beta_i\ge 0, \sum_{i=1}^{|\gF_s|}=1\}$.

For any $\ite{y}{L}\in Conv(\gF_s)$:
\begin{align}
    \| \ite{y}{L} - s \| = \left\| \sum_{i=1}^{|\gF_s|} \beta_i (y_i^* - s) \right\| \leq \sum_{i=1}^{|\gF_s|} \beta_i \| y_i^* - s \|.
\end{align}

Taking the expectation over $\beta$ uniformly distributed over the simplex:
\begin{align}
    \mathbb{E}_{\beta}\left[ \| \ite{y}{L} - s \| \right] \leq \mathbb{E}_{\beta}\left[ \sum_{i=1}^{|\gF_s|} \beta_i \| y_i^* - s \| \right] = \sum_{i=1}^{|\gF_s|} \mathbb{E}_{\beta}[\beta_i] \| y_i^* - s \|.
\end{align}

Since $\beta$ is uniformly distributed over the simplex, the expected value of each $\beta_i$ is $\mathbb{E}_{\beta}[\beta_i] = \frac{1}{|\gF_s|}$. This is because the simplex is symmetric with respect to all $\beta$.

Substituting $\mathbb{E}_{\beta}[\beta_i] = \frac{1}{|\gF_s|}$ into the inequality:
\begin{align}
    \mathbb{E}_{\beta}\left[ \| \ite{y}{L} - s \| \right]\leq \sum_{i=1}^{|\gF_s|} \left( \frac{1}{|\gF_s|} \| y_i^* - s \| \right) = \frac{1}{|\gF_s|}\sum_{y_i^*\in \gF_s} \|y_i^* - s\|.
\end{align}

Thus, we have

\begin{align}
    \E_{\ite{y}{L}\sim\text{Uniform}(Conv(\gF_s))}[\|\ite{y}{L}-s\|] \le \frac{1}{|\gF_s|}\sum_{y_i^*\in \gF_s} \|y_i^* - s\|.
\end{align}

\end{proof}

\section{Implementation Details}\label{appx:emp}

We detail our implementation of empirical analysis. 

\textbf{Architectures.} Our empirical experiments are performed with vector inputs. Therefore, instead of using UNet~\citep{ronneberger2015unet} or BF-CNN~\citep{mohan2019bfcnn} as in previous studies on diffusion models, we use fully-connected networks with 6 hidden layers. The default dimensions are $1024$ (Sensor Network) and $8192$ (SMACv2). For simplicity, we do not add any LayerNorms or BatchNorms. 

\textbf{Training.} We follow the training procedure described in \citet{kadkhodaie2023generalization, mohan2019bfcnn}, which minimizes the mean squared error in denoising the noisy inputs corrupted by i.i.d. Gaussian noise with standard deviations drawn from the range $[0,1]$. All trainings are carried out on batches of size 512 for 1000 epochs.  We also follow the denoiser architectures such that they are not conditioned on denoising steps or noise levels, which allows them to handle a range of noises. During the denoising process, these architectures also allow us to operate with any noise levels. We thus do not have to specify the step sizes~\citep{kadkhodaie2020solving}. 

\begin{figure}[t]
    \centering
    \includegraphics[width=\linewidth]{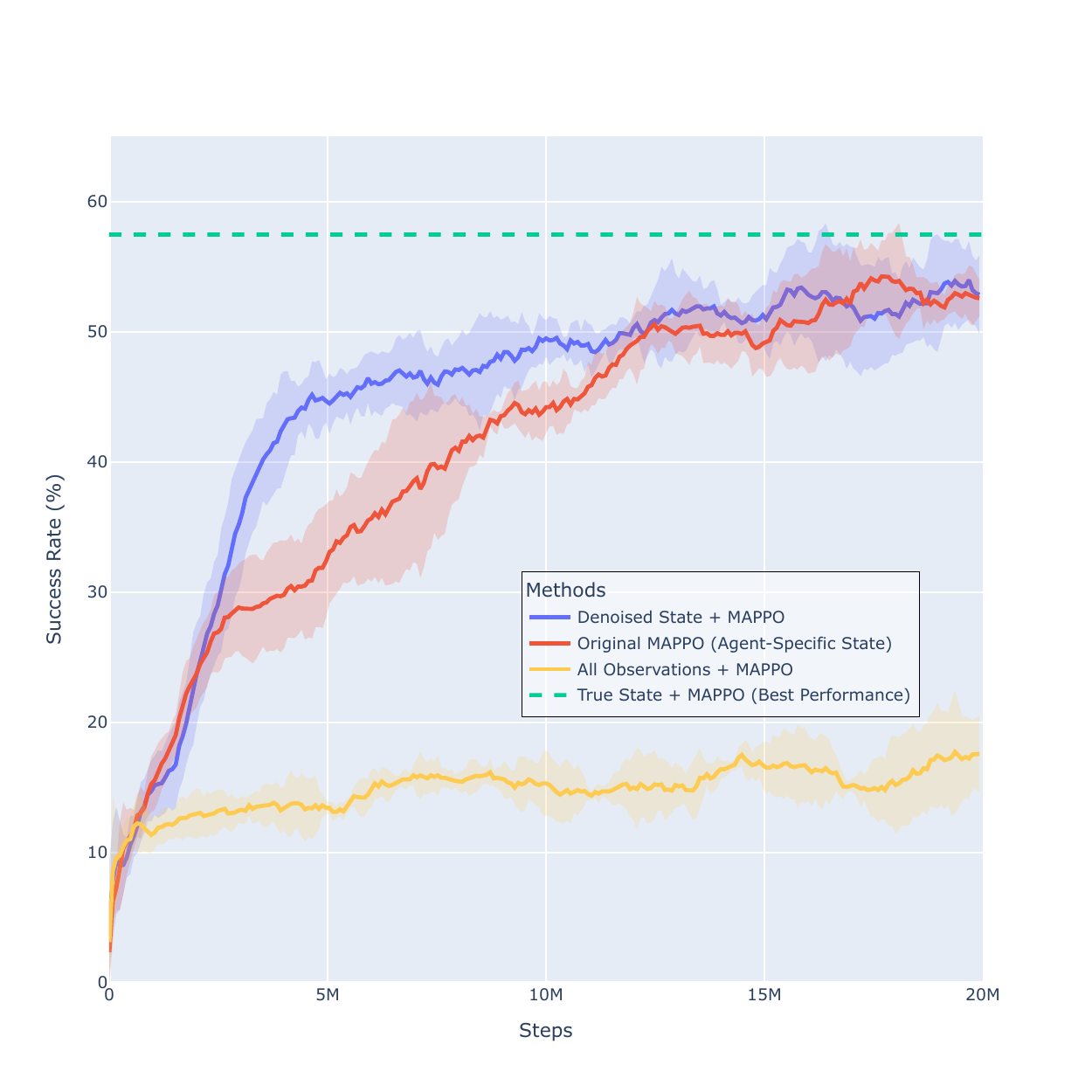}
    % \includegraphics{media/learning_curve.pdf}
    % \vspace{-2.5em}
    \caption{Policy learning performance of MAPPO with agent-specific states (as in MAPPO's vanilla implementation), joint histories, true states, and states predicted by diffusion models.}
    \label{fig:policylearning}
\end{figure}

\textbf{Datasets.} The datasets are collected from two environments: Sensor Network and SMACv2~\citep{ellis2023smacv2}. Sensor Network is a simple but illustrative environment to show the influence of partial observability in multi-agent settings. There are $n$ agents in the environments and each agent can only observe its neighboring areas. One or two targets move around and the agents' goal is to scan the target. We consider two cases: collectively observable cases and non-collectively observable cases. In the collectively observable cases, agents' observations are deterministic, and the agents together can observe the true state.  In non-collectively observable cases, agents' observations are stochastic and they may make mistakes. For example, one target is in the observable area, but the agent's observation tells no target. SMACv2 is a well-known multi-agent benchmark that focuses on decentralized micromanagement scenarios in StarCraft II. SMACv2 makes three major changes to SMACv1~\citep{samvelyan19smac}: randomizing start positions, randomizing unit types, and changing the unit sight and attack ranges. These changes bring more challenges of partial observabilities, In this work, we adopt the cutting-edge algorithm MAPPO~\citep{yu2022surprising} to collect data. We use their official hyperparameters and run for 20M environment steps. We use the data in the replay buffer, \ie{}, the observations of all agents and the global states.

\section{Example Usage of Our Method}\label{appx:policylearning}

As an example of potential applications of our method, we discuss how our results apply to the centralized training with decentralized execution (CTDE) framework.

In the training phase, we use collected history-state pairs to train a denoiser. Once trained, during execution, the denoiser is able to generate a distribution $\hat{p}$ of states consistent with local history. When $\hat{p}$ has a support size of one, an agent can individually infer the true global state without communication. This inferred state can be used, for example, as the input to the agent's local policy, with the error bounded by \cref{thm:fpdbound}. On the other hand, when $\hat{p}$ has a support size greater than one, our method enables a variety of algorithmic possibilities. Some examples include: \textbf{(1) Safe multi-agent learning.} Agents can account for the worst-case scenario by sampling $s\sim\hat{p}$ and selecting the state with the minimized value $V(s)$. By optimizing $V(s)$, we can derive a safe policy that is robust against the most adverse states. \textbf{(2) Distributional multi-agent learning.} $\hat{p}$ can also be used to generate a distribution over Q-values. Operating within this distributional space opens up possibilities analogous to those in distributional reinforcement learning.

As an initial step, we test a simple approach where we randomly sample a state from $\hat{p}$ to serve as input to the policy network. The denoiser is trained online using samples collected during training. We compare our method ($\mathtt{Denoised\ State+MAPPO}$) on the map $\mathtt{zerg\_5\_vs\_5}$ from SMACv2~\citep{ellis2023smacv2} against three baselines: (1) $\mathtt{Original\ MAPPO}$ using its official implementation, (2) $\mathtt{All\ Observations+MAPPO}$, which takes observations of all agents as policy input, and (3) $\mathtt{True\ State+MAPPO}$ that takes ground-truth states as input. The results are presented in \cref{fig:policylearning}. Compared to $\mathtt{Original\ MAPPO}$, our method can infer possible global states, which are shown to be able to accelerate policy learning. While our method slightly underperforms $\mathtt{True\ State+MAPPO}$, this is expected as we directly sample states from $\hat{p}$ which may not always be the true global state. In further work, we anticipate that the methods discussed in this section can further improve the learning performance.

%%%%%%%%%%%%%%%%%%%%%%%%%%%%%%%%%%%%%%%%%%%%%%%%%%%%%%%%%%%%%%%%%%%%%%%%

\end{document}